\newcommand{\newcheckmark}{\usym{2713}}
\newcommand{\newcrossmark}{\usym{2717}}
\newcommand{\de}{\,\text{d}}
\newcommand{\R}{\mathbb{R}}
\newcommand{\Ex}{\mathbb{E}}
\newcommand{\edits}[1]{\textcolor{black}{#1}}
\newcommand{\MK}[1]{\textcolor{orange}{MK: #1}}
\begin{document}

\title{A Mean-Field Games Laboratory for Generative Modeling}

\author{\name Benjamin J. Zhang \email bjzhang@umass.edu \\
       \addr Department of Mathematics and Statistics\\
       University of Massachusetts Amherst\\
       Amherst, MA 01003-9305, USA
       \AND
       \name Markos A. Katsoulakis \email markos@umass.edu \\
       \addr Department of Mathematics and Statistics\\
       University of Massachusetts Amherst\\
       Amherst, MA 01003-9305, USA}

\editor{My editor}

\maketitle

\begin{abstract}
We demonstrate the versatility of mean-field games (MFGs) as a mathematical framework for explaining, enhancing, and designing generative models. In generative flows, a Lagrangian formulation is used where each particle (generated sample) aims to minimize a loss function over its simulated path. The loss, however, is dependent on the paths of other particles, which leads to a competition among the population of particles. The asymptotic behavior of this competition yields a mean-field game. We establish connections between MFGs and major classes of generative flows and diffusions including continuous-time normalizing flows, score-based generative models (SGM), and Wasserstein gradient flows. Furthermore, we study the mathematical properties of each generative model by studying their associated MFG's optimality condition, which is a set of coupled forward-backward nonlinear partial differential equations. The mathematical structure described by the MFG optimality conditions identifies the inductive biases of generative flows. We investigate the well-posedness and structure of normalizing flows, unravel the mathematical structure of SGMs, and derive a MFG formulation of Wasserstein gradient flows. From an algorithmic perspective, the optimality conditions yields Hamilton-Jacobi-Bellman (HJB) regularizers for enhanced training of generative models. In particular, we propose and demonstrate an HJB-regularized SGM with improved performance over standard SGMs. We present this framework as an MFG laboratory which serves as a platform for revealing new avenues of experimentation and invention of generative models. 
\end{abstract}

\begin{keywords}
  Generative modeling, mean-field games, Hamilton-Jacobi-Bellman equation, normalizing flows, score-based generative models, Wasserstein gradient flow, inductive bias
\end{keywords}

\tableofcontents

\section{Introduction}


Generative models are powerful unsupervised learning tools that approximate a target probability distribution via samples. These constructed models are often used to generate new but approximate samples that statistically reproduce and augment the original dataset. Generative models have recently found extraordinary success in applications such as image generation, image processing, audio generation, natural language processing, molecular modeling, and more. Their success has led to a surge of research activity related to formulating, parametrizing, and training new deep generative models. Three of the most successful classes of generative models of the last decade are generative adversarial nets (GANs) \cite{goodfellow2020generative}, denoising diffusion probability models \cite{ho2020denoising}, including score-based generative models (SGM) \cite{song2021score}, and normalizing flows (NFs) \cite{rezende2015variational,chen2018neural,grathwohl2018ffjord}. These three classes of models have spawned numerous competing variants in the pursuit of inexpensive, high-quality generative models. 

 We focus on two groups of generative modeling techniques: continuous-time generative models, and implicit generative models. Continuous normalizing flows \cite{chen2018neural} and score-based generative models \cite{song2021score} are examples of \emph{continuous-time} generative models, in which the model is expressed as an ordinary or stochastic differential equation. The velocity field is learned via data, such that the associated ODE or SDE transports samples of a normal distribution to samples of the target distribution. In contrast, \emph{implicit} generative models, based on Wasserstein gradient flows, are less well-known particle system--based approaches for generative modeling and sampling \cite{arbel2019maximum,wang2021particle,gu2022lipschitz}. Here, rather than learning an explicit map that generates samples from the target distribution, new samples are instead generated through the Wasserstein gradient flow of an empirical distribution toward the target distribution. The Wasserstein gradient flow \cite{santambrogio2017euclidean} has also been used to study the training of GANs \cite{arbel2019maximum,mroueh2019sobolev}.  There is a pervasive sense that flow and diffusion-based generative models have some common fundamental structure and interconnections, and there have been several attempts to connect these two classes of models together \cite{albergo2023stochastic,zhang2021diffusion}. We will show that mean-field games provide a natural framework for describing flow and diffusion-based generative models.

 Mean-field game (MFG) theory is a mathematical field that studies the dynamics and decision-making of individual agents in a large population \cite{lasry2007mean,gueant2012mean}. This emerging field has applications in many disciplines including control theory, economics, finance, epidemiology, and more. In this paper, we present a unifying mathematical framework based on mean-field games for explaining, enhancing, and inventing generative models. 
While the foundational formulations of the generative models we study in this paper have different origins, we will demonstrate that these generative models can be derived through a unifying mean-field games formalism. 

Our primary observation in this paper is that mean-field games naturally encode the characteristics of generative modeling with continuous-time models. A mean-field game involves an objective function that models the action, interaction, and terminal costs of an individual agent in relation to a population of other agents. The solution to the MFG describes the action each agent should take throughout space and time, while describing the evolution of the population distribution over time. The optimality conditions of MFGs are a pair of coupled PDEs: a Fokker-Planck (or continuity) equation, which describes the evolution of the density, and a Hamilton-Jacobi-Bellman (HJB) equation, which describes the optimal velocity field.

Meanwhile, continuous-time generative models have a common set of ingredients. Given samples from some target distribution, an optimal velocity field that evolves a simple reference distribution to the target distribution in finite time is learned. An optimization problem over the space of probability distributions is solved to learn the optimal velocity field. The evolution of the density is determined by a transport PDE (the continuity equation or Fokker-Planck equation), which is intractable to solve beyond low dimensional settings. Instead, the Lagrangian formulation is considered so that the generative modeling task becomes an optimization problem where each individual particle (generated sample) evolves according to its location while minimizing some objective function over the path it takes. This optimization problem appears to be an optimal control problem; it is, however, a mean-field game since the objective function depends on other generated samples (particles). In the language of mean-field games, the Fokker-Planck describes the family of generative models, while the solution of the HJB characterizes the optimal generative model.    

We will derive the major classes of generative models with the MFG formalism and culminate in a sort of laboratory (see Table~\ref{tab:big2}) in which researchers can experiment with existing and new generative modeling approaches. We also create several new examples of 
generative models as demonstration of the potential uses of the MFG laboratory.

\paragraph{Our contributions:} We present a unifying mean-field games framework for explaining, enhancing, and inventing generative models. By \emph{explaining} how generative models can be derived from mean-field games, we bring attention to the richness of the MFG for designing and analyzing generative models. 

\begin{itemize}
    \item \textbf{Explaining generative models}: \edits{Formulating generative flows as  MFGs reveals a \emph{common mathematical structure} through their optimality conditions, i.e.,  a backwards HJB and a forward Fokker-Planck equation (FPE). This backward--forward structure was already observed as a key feature in denoising diffusion models. Our MFG formulation shows that it is shared by \emph{all} flow and diffusion--based generative models, and delineates the \emph{generative} and \emph{discriminative} features required to describe the flows. That is, the FPE describes the generative model while the HJB identifies the optimal velocity field. The mathematical structure described by the mean-field game identifies the inductive bias of generative flows. In particular, we formulate normalizing flow (Theorem~\ref{thm:normalizingflows}), score-based generative models (Theorem~\ref{thm:scoremfg}), and Wasserstein gradient flows (Theorem~\ref{thm:MFG:Wasserstein_grad_flow}) in terms of MFGs. Another key result is that the MFG optimality conditions characterize the well-posedness of optimization problems used to train generative models. For example, in Section~\ref{sec:wellposed}, we study the ill-posedness of the canonical normalizing flows framework using the theory of HJB PDEs, and ways to make the method well-posed.}
    

    \item \textbf{Enhancing generative models}: \edits{MFGs inform structure of generative flows that may be used to accelerate their training. The optimality conditions explicitly relate the optimal velocities to the gradient of the solution of the HJB.  Therefore, even without solving the  HJB explicitly, we know that we can \emph{restrict} the generative vector fields to just gradients of a scalar function. Moreover, in Section~\ref{sec:enhancing} we construct regularization functionals that can be derived from the optimality conditions which may further accelerate learning of the optimal generative model.}


    \item \textbf{Inventing generative models}: This unifying approach establishes a platform  for systematically building new generative models. In Section \ref{sec:designing} we give examples of new generative modeling formulations using MFGs. 
\end{itemize}
     Our work culminates to Table~\ref{tab:big2} in Section~\ref{sec:table}, which not only classifies major generative models, but also provides a laboratory for the experimentation, investigation, and invention of new generative models. We emphasize, however, that not all models developed using the laboratory will necessarily have good practical performance. This framework allows one to formulate generative models with well-defined optimization problems and inform their associated structure, however one must still proceed with numerical and algorithmic investigation to make their implementation practical.

\paragraph{Related work:} The connections between mean-field games and machine learning have been previously investigated. The use of generative modeling tools, such as normalizing flows, to solve mean-field games has been well-explored \cite{ruthotto2020machine,huang2022bridging}. Moreover, connections between reinforcement learning and mean-field games have been explored as well \cite{guo2022entropy}. Studying score-based generative models with stochastic optimal control tools is discussed in \cite{berner2022optimal}. The use of partial differential equations to enhance the training of score-based generative models was applied in \cite{lai2023fp}. Recently, \cite{albergo2022building,albergo2023stochastic} have also unified normalizing flows with diffusion models and the Schr\"odinger bridge through \emph{stochastic interpolants}. To our knowledge, however, there has not been a comprehensive unifying approach of using the MFG formalism to describe several classes of generative models. 

\subsection{A preview of results} 
We preview the main results of this paper. In Section~\ref{sec:MFG}, we provide a more thorough exposition of mean-field games, but we provide a simplified presentation here to discuss the main ideas of our paper. 

Let $X$ be a $\R^d$--valued random variable with unknown data distribution $\pi$ and density $\pi(x)$. Denote $\rho_0$ be a standard normal distribution with density $\rho_0(x)$. We suppose we have only samples $\{x_i\}_{i = 1}^N $ from $\pi$. The generative modeling problem is the task of producing more samples from $\pi(x)$. 

The potential formulation of mean-field games (MFG) is an infinite dimensional optimization problem over velocity fields $v \in \mathcal{C}^2$, $v:\mathbb{R}^d \times \R \to \R^d$, and densities $\rho(\cdot,t) \in \mathcal{P}(\R^d)$ with finite second moments for some finite time interval $t\in [0,T]$. Broadly speaking, the objective function is decomposed into a sum of three types of cost functions, $L$ to model action costs, $\mathcal{I}$ to model interaction costs, and $\mathcal{M}$ to model terminal costs. The continuity equation constraint models the dynamics of the density of the agents under a particular velocity field. There is a correspondence between the continuity equation and the particle dynamics, which depend on whether the dynamics are deterministic or stochastic. The MFG is defined as
\begin{align}
    &\inf_{v,\rho} \left\{ \mathcal{M}(\rho(\cdot,T))+ \int_0^T \mathcal{I}(\rho(\cdot,t)) \de t + \int_0^T \int_{\R^d} L(x,v(x,t)) \rho(x,t) \de x \de t \right\} \\
    \text{s.t.}&\,\, \frac{\partial \rho}{\partial t} + \nabla \cdot(v\rho) = \frac{\sigma^2}{2} \Delta \rho,\, \rho(x,0) = \rho_0(x)\, . \nonumber 
\end{align} 
The optimality conditions for the MFG are given by a coupled system of a continuity (Fokker-Planck) equation  for $\rho$ which corresponds to the optimal generative model, and a (backward) Hamilton-Jacobi-Bellman (HJB) equation which yields the optimal velocity field $v^*$.

We will derive canonical formulations of generative models using the potential formulation of mean-field games. Depending on how cost functions $L,\mathcal{I}$, and $\mathcal{M}$, are chosen, the solutions of the corresponding MFGs will be different classes of generative models. Aspects of the derivations and connections have been noted in the literature for particular cases, such as the optimal transport regularized normalizing flow \cite{onken2021ot,ruthotto2020machine,huang2022bridging}, and the stochastic optimal control perspective of score-based generative modeling \cite{berner2022optimal}. Here, we provide a comprehensive unifying approach, using the full MFG formalism to describe several classes of fundamental generative models. In Tables \ref{tab:preview1} and \ref{tab:preview2}, we overview the main contribution of this paper for three well-studied classes of generative models. Eventually, we will build to Table~\ref{tab:big2}, by demonstrating  how numerous generative modeling variants can be described via the MFG framework. In Figure \ref{fig:vertical_scheme}, we show a schematic illustrating how the mean-field games perspective relates and provides insight to the current understanding of generative models.

 \begin{table}
    \def\arraystretch{1.5}
    \begin{tabular}{|l||l|l|l|}
    \hline
   Generative model &   $\mathcal{M}(\rho)$ & $\mathcal{I}(\rho)$  & $L(x,v)$     \\ \hline\hline
     Normalizing flow & $ \Ex_{\rho}\left[\log \rho - \log \rho_{ref} \right]$ & 0 & 0  \\ \hline
     Score-based & $-\Ex_{\rho }\log \pi $& 0& $\frac{1}{2}|v|^2 - \nabla \cdot f$  \\ \hline
     Wasserstein Gradient Flows ($\epsilon \to 0$)  &  $\mathcal{F}(\rho) e^{-T/\epsilon}$ & $\frac{e^{-t/\epsilon}}{\epsilon}\mathcal{F}(\rho)$ & $\frac{1}{2}|v|^2 e^{-t/\epsilon} $  \\ \hline
    \end{tabular}
    \centering
    \caption{Generative models and corresponding mean-field game cost functions. See a more complete version in Table~\ref{tab:big2}.}
    \label{tab:preview1}
    \end{table}

    \begin{table}
        \def\arraystretch{1.5}
        \begin{tabular}{|l||l|l|l|}
            \hline
           Generative model &  Dynamics & $H(x,p)$  & Optimal $v^*$    \\ \hline\hline
             Normalizing flow & $ \de x = v \de t$& $\sup_{v\in K \subset \R^d} -p^\top v $& $-\nabla_p H(x,\nabla U)$  \\ \hline
             Score-based & $\de x = \left[ f+ \sigma v\right] \de t + \sigma \de W_t $& $-f^\top p + \frac{|\sigma p|^2}{2} + \nabla \cdot f$& $\sigma \nabla \log \eta_t$  \\ \hline
             Wasserstein GF ($\epsilon \to 0$) & $\de x = v \de t$& $\frac{1}{2}e^{t/\epsilon} |p|^2$ & $- \frac{\delta \mathcal{F}}{\delta \rho}$  \\ \hline
            \end{tabular}
            \centering
            \caption{Generative models, their corresponding dynamics, Hamiltonians, and optimal velocity fields.}
            \label{tab:preview2}
    \end{table}

\begin{figure}[H]
    \centering
    \includegraphics[width = 0.66\textwidth,trim = 20 220 20 70,clip]{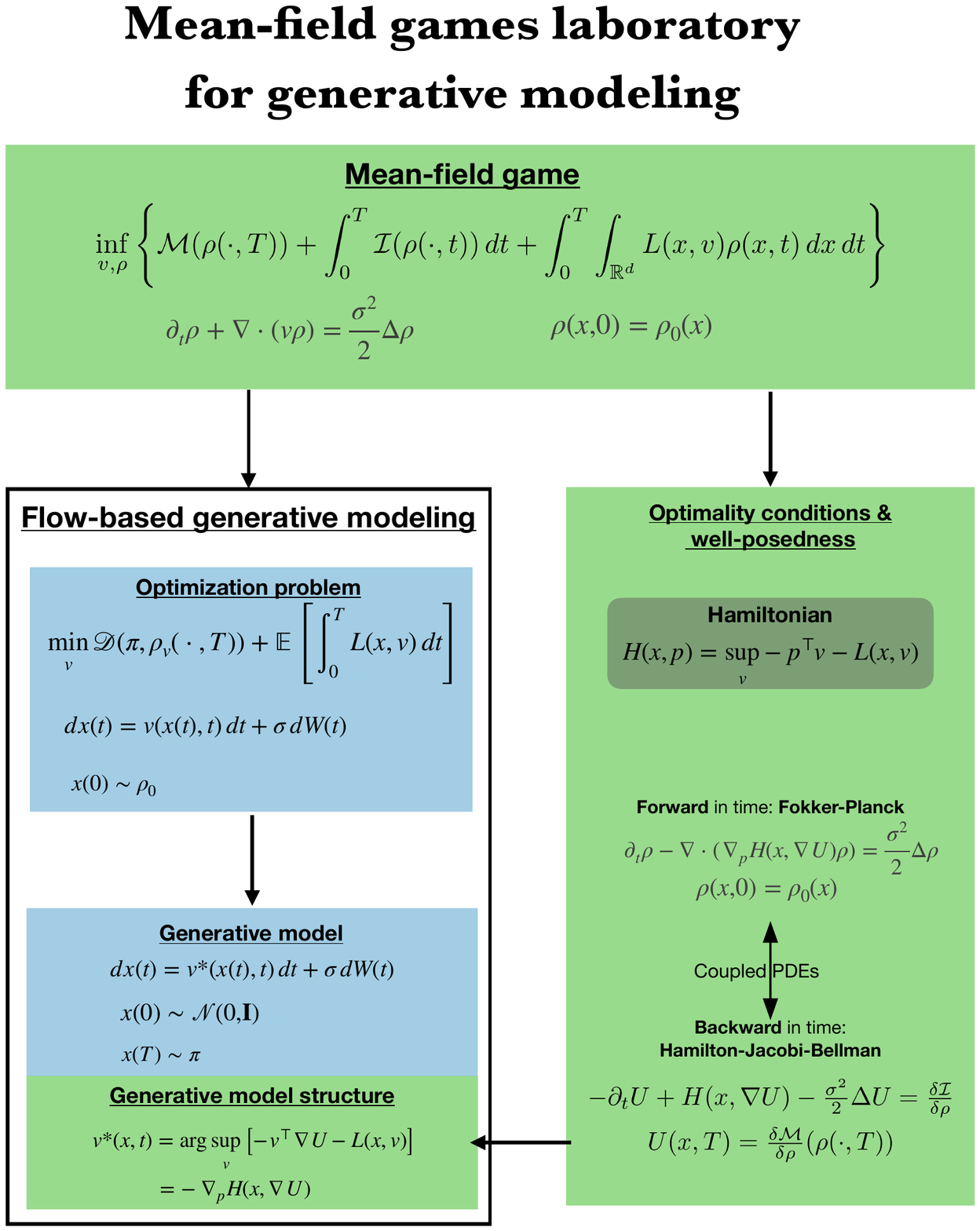}
    \caption{Flow chart describing how mean-field games are related to and provide new insights for flow-based generative modeling. Blue boxes denote the current understanding of flow-based generative models. Green boxes are the new perspective provided by mean-field games.}
    \label{fig:vertical_scheme}
\end{figure}

\section{Background on mean-field games}
\label{sec:MFG}

We review formulations of mean-field games and their optimality conditions. This section is based on \cite{lasry2007mean,bensoussan2013mean}. Mean-field games model the dynamics of individual rational agents whose actions depend on the dynamics of a large population of other agents. When the trajectories and actions take values in $\R^d$, the MFG is a differential game, whose solution determines the optimal action of each individual agent. The optimality conditions are a system of partial differential equations, the Fokker-Planck (continuity) and Hamilton-Jacobi-Bellman (HJB) equations, which characterize the solution of the MFG. The solution of the optimality conditions is often referred to as a \emph{Nash equilibrium}.

Denote $\mathcal{P}(\R^d)$ to be the space of probability distributions on $\R^d$ with finite second moment, and suppose the evolution of an agent $x(s)\in \R^d$ over the time interval $s\in [0,T]$ is given by the stochastic differential equation
\begin{align}
    \begin{dcases}
        \de x(s) &= v(x(s),s) \de s+ \sigma(x(s),s) \de W(s) \\
        x(0) &= x_0 \label{eq:agentdynamics}
    \end{dcases}
\end{align}
where $v:\R^d\times \R \to \R^d$ is a vector field that models the action of the agent at time $s$ and position $x(s)$, $\sigma(x(s),s)$ is the diffusion coefficient, and $W(s)$ is a standard Brownian motion on $\R^d$. The velocity field (action) at $(x,t)$ is determined by minimizing the objective function $J:\mathcal{C}^2(\R^d) \times \mathcal{P}(\R^d) \to \R$,
\begin{align}
    J_{x,t}(v,\rho) = \Ex \left[ \int_t^T L(x(s),v(x(s),s)) + I(x(s),\rho(x(s),s)) \de s + M(x(T),\rho(x(T),T)) \right],\label{eq:MFGobjective}
\end{align} 
where $L:\R^d\times \R^d \to \R$ is the running cost incurred by the agent by performing action $v(x(s),s)$, $I: \R^d\times \mathcal{P}(\R^d)\to \R$ is the running cost due to interaction with the density of all the agents $\rho(\cdot,s) \in \mathcal{P}(\R^d)$, and $M:\R^d\times \mathcal{P}(\R^d)\to \R$ is the terminal cost which depends on the final position of the agent and the population density. The expectation is taken over the path-space measure of \eqref{eq:agentdynamics} with initial condition $(x,t)$ and over time interval $s \in [t,T]$. Note that the running costs may explicitly depend on time, but for simplicity of notation, we omit the time argument. We consider a time-dependent example in Section \ref{sec:wassersteingradient}. Let $\mathbf{D}(x,t) = \frac{1}{2}\sigma(x,t)\sigma(x,t)^\top$. Given a velocity field $v=v(x, t)$, the evolution of the population density is given by the continuity (Fokker-Planck) equation
\begin{align}
    \frac{\partial \rho}{\partial s} + \nabla \cdot (v\rho) = \nabla^2: \mathbf{D}\rho \label{eq:continuityeqn}\\ 
    \rho(x,0) = \rho_0(x) \nonumber
\end{align}
Here, $\mathbf{A}:\mathbf{B} = \sum_{ij} \mathbf{A}_{ij}\mathbf{B}_{ij}$, and $(\nabla^2)_{ij} = \frac{\partial^2}{\partial x_i\partial x_j}$. This formulation also holds when the agent dynamics in \eqref{eq:agentdynamics} are deterministic, i.e., when $\sigma \coloneqq 0$. 

\subsection{Optimality conditions}
The solution of a mean-field game is a pair $({v^*}(\cdot,t),{\rho^*}(\cdot,t))$, referred to as a Nash equilibrium, such that for all $v$, 
\begin{align}
    J_{x,t}({v^*},{\rho^*}) \le J_{x,t}(v,{\rho^*}).
\end{align}
The optimality conditions characterizing this pair is found by considering the value function 
\begin{align}
    U(x,t) = \inf_{v,\rho} J_{x,t}(v,\rho),
\end{align}
which solves a Hamilton-Jacobi-Bellman (HJB) equation that is coupled to the continuity equation. The optimality conditions are
\begin{align}
    \begin{dcases}
    -\frac{\partial U}{\partial t} + H(x,\nabla U) - \mathbf{D}: \nabla^2 U =I(x,\rho(x,t)) \\ 
    \frac{\partial \rho}{\partial t} -\nabla \cdot(\rho\nabla_p H(x,\nabla U))  = \nabla^2 : \mathbf{D}\rho \\ 
    U(x,T) = M(x,\rho({x},T))  \\ 
    \rho(x,0) = \rho_0(x).  
    \end{dcases}
    \label{eq:HJBopt}
\end{align}
Here, the Hamiltonian is defined as
\begin{align}
    H(x,p) = \sup_{v} \left[- p^\top v -L(x,v)\right],
\end{align}
and the optimal control, ${v^*}(x,t)$, is given by the maximizer, and in this case is equal to  
\begin{align}
    {v^*}(x,t) = -\nabla_p H(x,\nabla U(t,x)).\end{align} Since this relation holds at optimality, it is common to replace $v$ with ${v^*}$ in \eqref{eq:continuityeqn} to explicitly couple the continuity and HJB equations. Note that the HJB equation is solved \emph{backwards} in time while the continuity equation is solved \emph{forwards} time as the latter has a terminal condition, and the former has an initial condition. Moreover, it is important to note that this is not a standard optimal control problem as the HJB equation cannot be solved by itself since it is coupled with the continuity equation. For details about the derivation of the optimality conditions, see \cite{lasry2007mean} and the references therein.

\subsection{Potential formulation of MFGs} 
\label{sec:potentialmfg}
Solving the coupled system of PDEs in \eqref{eq:HJBopt} via traditional numerical methods for PDEs is intractable beyond low dimensional settings. Meanwhile, solving the MFG directly as presented in \eqref{eq:agentdynamics} and \eqref{eq:MFGobjective} is still quite computationally taxing as it requires solving an infinite dimensional optimization problem for each point in time and space. In the celebrated paper \cite{lasry2007mean}, an alternative \emph{potential formulation} is devised for a certain class of MFGs so that the optimal controls and densities can be determined simultaneously for all $(x,t) \in \R^d\times \R$. 

Suppose there exist functionals $\mathcal{M}: \mathcal{P}(\R^d)  \to \R$ and $\mathcal{I}:  \mathcal{P}(\R^d)  \to \R$ such that 
\begin{align}
    I(x,\rho) = \frac{\delta \mathcal{I}(\rho)}{\delta \rho}(x),\,\,\,\, M(x,\rho) = \frac{\delta \mathcal{M}(\rho)}{\delta \rho}(x).
\end{align}
For a test function $\phi \in L^2(\R^d)$ and some density function $\rho\in \mathcal{P}(\R^d)$, recall that the variational derivative of functional $\mathcal{F}$ is given by 
\begin{align*}
\lim_{h \to 0} \frac{\mathcal{F}(\rho + h\phi) - \mathcal{F}(\rho)}{h} = \int_{\R^d} \frac{\delta\mathcal{F} }{\delta \rho} \phi(x) \de x. 
\end{align*} 
The potential formulation of the MFG \eqref{eq:MFGobjective} is given by 
\begin{align}
   \inf_{v,\rho}\mathcal{J}(v,\rho)= &\inf_{v,\rho} \left\{ \mathcal{M}(\rho(\cdot,T))+ \int_0^T \mathcal{I}(\rho(\cdot,t)) \de t + \int_0^T \int_{\R^d} L(x,v(x,t)) \rho(x,t) \de x \de t \right\} \label{eq:potentialform}\\
    \text{s.t.}&\,\, \frac{\partial \rho}{\partial t} + \nabla \cdot(v\rho) = \nabla^2 : \mathbf{D}\rho,\, \rho(x,0) = \rho_0(x). \nonumber
\end{align} 
The corresponding optimality conditions \eqref{eq:HJBopt} are,
\begin{align}
    \begin{dcases}
    -\frac{\partial U}{\partial t} + H(x,\nabla U) - \mathbf{D}: \nabla^2 U = I(x,\rho) = \frac{\delta \mathcal{I}(\rho)}{\delta \rho}(x) \\ 
    \frac{\partial \rho}{\partial t} -\nabla \cdot(\nabla_p H(x,\nabla U)\rho)  = \nabla^2 : \mathbf{D}\rho \\ 
    U(x,T) = M(x,\rho) = \frac{\delta \mathcal{M}(\rho(\cdot,T))}{\delta \rho}(x) \\ 
    \rho(x,0) = \rho_0(x).  
    \end{dcases}
    \label{eq:HJBopt:pot}
\end{align}

The optimization problem given in this formulation is often \emph{implicitly} considered in many machine learning tasks, such as density estimation and generative modeling. We will demonstrate the connections between the potential MFG and generative models clearly.  In the following sections, we will use the potential MFG formulation \eqref{eq:potentialform} to establish for a unified perspective of normalizing flows, score-based generative modeling, Wasserstein gradient flows, and their variants. It will also provide a framework for extending and enhancing these classes of generative models.

\section{Continuous normalizing flows as solutions to MFGs}\label{sec:CNF}

A normalizing flow (NF) is an \emph{invertible} transport map $f:\R^d\to\R^d$ that a transforms \edits{target distribution $\pi$} to a standard normal $\rho_{ref}$ \cite{rezende2015variational,tabak2010density,tabak2013family}. By learning such a map $f$ through samples of $\pi$, one can then produce more realizations from the target distribution by \edits{inverting} the map on sample points of the normal distribution. We say \edits{reference distribution $\rho_{ref}$ is the \emph{pushforward} distribution of $\pi$ under map $f$}, and their densities $\pi(x)$, and $\rho_{ref}(x)$ are related by a change of variables formula
\begin{align}
    \log \pi(x) = \log \rho_{ref}(f(x)) + \log \det \mathbf{J}_f (x)
\end{align}
where $\mathbf{J}_f(x)$ denotes the Jacobian of $f$. For generative modeling, the map is typically learned by maximizing the log-likelihood of the pullback distribution defined by the approximate map $f_\theta$ with respect to the data distribution over a parametrized space of functions, i.e., by solving the variational problem
\begin{align}
   \max_\theta \Ex_\pi \left[\log\pi_\theta(x)  \right]  =   \max_{\theta} \Ex_{\pi}\left[ \log \rho_{ref}(f_\theta(x)) + \log \det \mathbf{J}_{f_\theta}(x) \right]
\end{align}
for parameters $\theta \in \Theta$. 

The map $f_\theta$ is often parameterized as a composition of simple invertible functions, which makes the map amenable to approximation via neural networks. Each layer of the neural network represents one of the composed functions. In practice, the simple invertible functions are chosen depending on the particular application and so that the determinant Jacobian can be computed quickly. There is a rich body of literature outlining different function choices. See \cite{papamakarios2021normalizing,kobyzev2020normalizing} and the references therein for a review of parametrization choices. 

\subsection{Continuous normalizing flows}

A \emph{continuous normalizing flow} (CNF) is a particular type of normalizing flow that can be conceptualized as a standard NF as the number of layers in the neural network tends to infinity \cite{chen2018neural}. The transport map is described as the solution of a differential equation, and the velocity field of the differential equation is learned from data rather than the entire flow map. \edits{The goal then is to learn a $v_\theta(x,t)$ such that we have an ODE model $\frac{\de x}{\de t} = v_\theta(x(t),t)$, $x(0) \sim \pi$, and $x(T) \sim \rho(\cdot,T) = \rho_{ref}$, where $\rho(x,t)$ satisfies the continuity equation \eqref{eq:continuityeqn}. Running the normalizing flow reverse in time then produces a generative model that flows samples from $\rho_{ref}$ to new samples from $\pi$.}

A computable change of variable formula also exists for continuous normalizing flows. Assuming that $v$ is uniformly Lipschitz in $x$ and continuous in $t$. The change in log probability along a trajectory $x(t)$ is given by the instantaneous change of variables formula \cite{chen2018neural}
\begin{align} \label{eq:instchangevar} 
    \frac{D \log \rho(x(t),t)}{D t} = -\nabla \cdot v(x,t)\coloneqq -\sum_{i = 1}^d \frac{\partial v_i}{\partial x_i}.
\end{align} 
The likelihood can be estimated by simply integrating \eqref{eq:instchangevar} along with $\de x/\de t = v(x(t),t)$ simultaneously. A proof of this formula is given in \cite{chen2018neural}; however it is a direct consequence of the definition of \emph{material derivative} and the continuity equation, as we discuss in Remark \ref{remark:materialderiv}.

\begin{remark}[Instantaneous change of variables and the material derivative]
    The instantaneous change of variables formula \cite{chen2018neural} is a direct consequence of a standard result from fluid mechanics. In the Lagrangian description of fluid mechanics, the {material derivative} describes the rate of change of a quantity along a single flow trajectory \cite{munson2013fluid}. For a scalar quantity $\phi(x,t)$ and flow velocity $v(x,t)$, the material derivative is defined as
    \begin{align}
        \frac{D\phi}{Dt} \coloneqq \frac{\partial \phi}{\partial t} + v \cdot \nabla \phi.
    \end{align} 
    Therefore, the change of variables formula is simply the material derivative applied to the log-density of the normalizing flow. Observe that
    \begin{align*}
        \frac{D}{Dt}\log \rho(x(t),t)  & = \frac{1}{\rho(x,t)}\frac{\partial \rho(x,t)}{\partial t} + \frac{v(x,t)}{\rho(x,t)} \cdot \nabla \rho(x,t) \\
        & = -\frac{1}{\rho} \nabla \cdot(v\rho) + \frac{v\cdot \nabla \rho}{\rho} \\
        & = -\nabla \cdot v.
    \end{align*}
    \label{remark:materialderiv}
\end{remark}

Normalizing flows easily fit into the mean-field games framework. Our first result is a general normalizing flows formulation in terms of mean-field games, and is given in Theorem \ref{thm:normalizingflows}. \edits{ Given a reference distribution $\rho_{ref}$, some running cost $L(x,v)$, $\mathcal{I}(\rho) \coloneqq 0 $, and $\mathcal{M}(\rho) = \mathcal{D}_{KL}(\rho \|\rho_{ref})$, with the initial density being the target data distribution $\pi(x)$, many variants of normalizing flows can be derived. Here $\mathcal{D}_{KL}(\rho\| \rho') = \Ex_{\rho'}\left[ \log \frac{\rho}{\rho'}\right]$ is the Kullback-Leibler divergence.} 

\begin{theorem}
    Let $\pi$ be the target distribution. Generative models based on normalizing flows are solutions to the mean-field game
    \begin{align}
        \min_{v,\rho} \left\{ \edits{\mathcal{D}_{KL}(\rho(\cdot,T) \| \rho_{ref})} + \int_0^T \int_{\R^d} L(x,v) \rho(x,t) \de x \de t : \frac{\partial \rho}{\partial t} + \nabla\cdot\left(\rho v \right) = 0,\, \rho(x,0) = \edits{\pi(x)}  \right\}, \label{eq:mfgnf}
    \end{align}
    with dynamics 
    \begin{align}
        \frac{\de x}{\de t} = v(x(s),s).
    \end{align}
    Moreover, for sets $K \subseteq \R^d$, the solution of the mean-field game satisfies the optimality conditions for $H(x,p)= \sup_{v \in K}\left[ -p^\top v - L(x,v) \right],$
    \begin{align}
        \begin{dcases}
        - \frac{\partial U}{\partial t} + H(x,\nabla U) = 0\\
          \frac{\partial \rho}{\partial t} - \nabla \cdot(\rho \nabla_p  H(x,\nabla U)) = 0 \\
        U(x,T) = \edits{1+ \log \frac{\rho(x,T)}{\rho_{ref}(x)}}, \, \rho(x,0) = \edits{\pi(x)},
        \end{dcases}
        \label{eq:hjbnfgeneral}
    \end{align}
    where the optimal velocity field of the continuous normalizing flow has the representation formula
    \begin{align}
        {v^*}(x,t) = \arg \sup_{v}\left\{ -p^\top v - L(x,v) \right\}= -\nabla_p H(x,\nabla U).
    \end{align}
    \label{thm:normalizingflows}
\end{theorem}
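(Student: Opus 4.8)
The plan is to obtain Theorem~\ref{thm:normalizingflows} as a direct specialization of the general potential-MFG optimality conditions \eqref{eq:HJBopt:pot}, supplemented by two elementary verifications: that the chosen cost functions reproduce the maximum-likelihood training objective of a continuous normalizing flow, and that the variational derivative of the terminal KL cost equals the stated terminal condition. Because the general MFG theory of Section~\ref{sec:potentialmfg} already supplies the coupled HJB--Fokker--Planck system together with the representation $v^*=-\nabla_p H(x,\nabla U)$, most of the structural content is inherited for free and the work reduces to bookkeeping plus one conceptual identity.

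First I would instantiate the potential MFG \eqref{eq:potentialform} with the normalizing-flow data: deterministic dynamics, so that $\sigma\equiv 0$ and hence $\mathbf{D}\equiv 0$; vanishing interaction, $\mathcal{I}(\rho)\equiv 0$; initial density $\rho(\cdot,0)=\pi$; and terminal functional $\mathcal{M}(\rho)=\mathcal{D}_{KL}(\rho\|\rho_{ref})$. Substituting $\mathbf{D}=0$ into \eqref{eq:HJBopt:pot} annihilates every Laplacian term, while $\mathcal{I}=0$ sends the right-hand side of the HJB equation to zero, so the first two lines of \eqref{eq:HJBopt:pot} collapse exactly onto the HJB and continuity equations of \eqref{eq:hjbnfgeneral}; the initial condition $\rho(x,0)=\pi(x)$ and the optimal-control formula $v^*=-\nabla_p H(x,\nabla U)$ then carry over verbatim, with Hamiltonian $H(x,p)=\sup_{v\in K}[-p^\top v-L(x,v)]$, the admissible set $K$ kept general to accommodate velocity constraints.

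Next I would compute the terminal condition $U(x,T)=\tfrac{\delta\mathcal{M}}{\delta\rho}$. Writing $\mathcal{M}(\rho)=\int_{\R^d}\rho(\log\rho-\log\rho_{ref})\de x$ and applying the variational-derivative definition from Section~\ref{sec:potentialmfg} to the perturbation $\rho+h\phi$, differentiation at $h=0$ yields the integrand $\log\rho-\log\rho_{ref}+1$ multiplying $\phi$, so that $\tfrac{\delta\mathcal{M}}{\delta\rho}(x)=1+\log\frac{\rho(x)}{\rho_{ref}(x)}$. Evaluating at $t=T$ produces precisely the terminal condition in \eqref{eq:hjbnfgeneral}; this is the only genuine computation in the proof.

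The step I expect to require the most care is justifying that this MFG genuinely \emph{describes} normalizing-flow generative models, i.e.\ that minimizing the Eulerian terminal cost $\mathcal{D}_{KL}(\rho(\cdot,T)\|\rho_{ref})$ coincides with the Lagrangian maximum-likelihood training of the flow. The clean way to see this is the invariance of the KL divergence under the invertible time-$T$ flow map $f$: since $f$ pushes the data distribution $\pi$ forward to $\rho(\cdot,T)$ and pushes the model distribution $\pi_\theta$ forward to $\rho_{ref}$, one obtains $\mathcal{D}_{KL}(\pi\|\pi_\theta)=\mathcal{D}_{KL}(\rho(\cdot,T)\|\rho_{ref})$, so that minimizing the terminal cost is equivalent, up to the $\theta$-independent entropy of $\pi$, to maximizing $\Ex_\pi[\log\pi_\theta]$. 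This identity is what converts the abstract MFG objective into the familiar training criterion, and it also explains why the running cost $L$ is left unspecified: it plays the role of an optional regularizer (for instance $L\equiv 0$ for the vanilla CNF, or $L=\tfrac12|v|^2$ for a transport-regularized variant), so that an entire family of normalizing-flow models is captured by varying $L$ and the admissible set $K$ while the generative mechanism remains fixed by the KL terminal cost.
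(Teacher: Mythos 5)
Your proposal is correct and follows essentially the same route as the paper: specialize the potential-MFG optimality conditions \eqref{eq:HJBopt:pot} to $\mathbf{D}=0$, $\mathcal{I}=0$, compute $\frac{\delta \mathcal{M}}{\delta\rho} = 1+\log\frac{\rho}{\rho_{ref}}$ for the KL terminal cost, and identify the Eulerian terminal cost with the Lagrangian maximum-likelihood objective via the change-of-variables identity $\mathcal{D}_{KL}(\rho(\cdot,T)\|\rho_{ref})=\mathcal{D}_{KL}(\pi\|\tilde\rho(\cdot,0))$, which is exactly the paper's \eqref{eq:klchangeofvars}. The only cosmetic difference is that the paper additionally unfolds the cross-entropy term into the computable trajectory integral \eqref{eq:likelihoodintegral} via the material derivative, a step you summarize rather than write out.
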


Choosing the running costs to be identically zero, $L = 0$,  $\mathcal{I}=0$, and the terminal cost to be the KL divergence from the density $\rho(\cdot,T)$ to the \edits{reference distribution, $\mathcal{M}(\rho) = \mathcal{D}_{\text{KL}}(\rho\|\rho_{ref})$}, we will recover the standard normalizing flows objective in \cite{chen2018neural,grathwohl2018ffjord}. This is the standard KL divergence minimization problem, and its relationship to maximum likelihood estimation is very well-noted and understood.
\begin{proof}
    We perform the following computations, {while always implicitly assuming sufficient smoothness of all functions}. \edits{First note that by a change of variables, the terminal conditions can be re-expressed as follows
    \begin{align}\label{eq:klchangeofvars}
        \mathcal{D}_{KL}(\rho(x,T)\| \rho_{ref}(x)) = \mathcal{D}_{KL}(\pi(x) \| \tilde{\rho}(x,0)),
    \end{align}
    where $\tilde{\rho}(x,0)$ is the density of the inverse flow at time $t = 0$ given that $\tilde{\rho}(x,T) = \rho_{ref}(x)$. 
    Next observe that 
    \begin{align*}
        \mathcal{D}_{KL}(\pi(x)\|\tilde{\rho}(x,0)) = \Ex_\pi\left[\log \pi(x) \right]- \Ex_\pi\left[\log \tilde{\rho}(x,0) \right].
    \end{align*}}
Furthermore, $\log \tilde{\rho}(x,0)$ can be expressed in terms of the reference density $\rho_{ref}(x)$ through the continuity equation and the definition of the material derivative. Integrating along a trajectory $x(t)$, we have that for $x(T) \sim \pi$,
    \begin{align} \label{eq:likelihoodintegral}
        \log \tilde{\rho}(x(0),0) &= \log \rho_{ref}(x(T)) - \int_T^0 \nabla \cdot v(x(s),s) \de s \\
        x(0) &= x(T) + \int_T^0 v(x(s),s) \de s.\nonumber
    \end{align}

    Note that for $L(x,v) = 0$, the resulting objective function is precisely the normalizing flows objective as described in \cite{chen2018neural,grathwohl2018ffjord}. Observe that \eqref{eq:mfgnf} is \edits{ now equivalent to 
    \begin{align}
        \min_{v} \left\{ -\Ex_{\pi}\left[\log \rho_{ref}(x) - \int_T^0 \nabla \cdot v(x(s),s) \de s \right]: x(s) = x + \int_T^s v(x(s'),s') \de s',\, x\sim \pi \right\}.
    \end{align}
    As for the optimality conditions, observe that for test function $\chi(x)$ where $\int\chi(x) \de x = 0$, 
    \begin{align*}
        \int \frac{\delta \mathcal{M}}{\delta \rho}\chi(x) \de x & = \lim_{\epsilon \to 0 } \frac{1}{\epsilon} \int \left( \log \frac{\rho(x) + \epsilon \chi(x)}{\rho_{ref}(x)} \right) (\rho(x)+\epsilon\chi(x)) - \left(\log \frac{\rho(x)}{\rho_{ref}(x)} \right) \rho(x) \de x \\
        & = \lim_{\epsilon \to 0} \frac{1}{\epsilon}\int \left(\log \left[\frac{\rho(x) + \epsilon \chi(x) }{\rho(x)} \right]\rho(x) + \epsilon \log\left[\frac{\rho(x)+\epsilon \chi(x)}{\rho_{ref}(x)} \right]\chi(x) \right) \de x \\
        & = \lim_{\epsilon \to 0} \int \left(\rho(x) \frac{\log(\rho(x)+\epsilon\chi(x)) - \log \rho(x) }{\epsilon} + \log\left[\frac{\rho(x)+\epsilon \chi(x)}{\rho_{ref}(x)} \right]\chi(x) \right) \de x \\
        & = \int \left( 1+ \log \frac{\rho(x)}{\rho_{ref}(x)}\right) \chi(x) \de x
    \end{align*}
so $M(x,\rho) =  1+ \log \frac{\rho(x)}{\rho_{ref}(x)}$.} Lastly the Hamiltonian and optimality conditions are clear from the potential formulation of MFG in \eqref{eq:HJBopt:pot}, see also \cite{lasry2007mean}. 
\end{proof}

\begin{remark}
    Even though there is no running cost function due to interaction, $\mathcal{I}(x,\rho)$, \eqref{eq:mfgnf} is still not an optimal control problem since the continuity and HJB equations are coupled by the terminal condition. This implies that {normalizing flows are {necessarily} solutions of mean-field games.} 
\end{remark}

\subsection{Well-posedness and structure of canonical continuous normalizing flows}

\label{sec:wellposed}

In this section we focus on the original formulation of continuous normalizing flows as discussed in \cite{chen2018neural,grathwohl2018ffjord}. While, in practice, training a normalizing flow does not require the MFG perspective, there is, however, significant insight that can be gained  by studying this objective function in the context of MFGs. In particular, we can study the optimality conditions of the corresponding HJB equation to understand the \emph{well-posedness} of the optimization problem and the \emph{structure} of the optimal velocity field $v^*(x,t)$.
\begin{corollary}    \label{cor:wellposed}
    Let $L = 0$, and $K\subset \R^d$ be a compact convex set. Then the optimizer of $\sup_{v\in K} -p^\top v$ is attained on the boundary of $K$. Moreover, when $K$ is a ball of radius $c$, then the optimal velocity field is
    \begin{align}
        v^*(x,t) = -c\frac{\nabla U(x,t)}{|\nabla U(x,t)|}, \label{eq:normalizedgradient}
    \end{align}
    where $U(x,t)$ solves the level-set equation \cite{osher2001level,sethian1999level} in the system of PDEs
    \begin{align}
        \begin{dcases}
        \frac{\partial U}{\partial t} - c|\nabla U| = 0 \\
        \frac{\partial \rho}{\partial t} - \nabla \cdot \left(c \rho\frac{\nabla U}{|\nabla U|} \right) = 0 \\
        U(x,T) = \edits{1+ \log \frac{\rho(x,T)}{\rho_{ref}(x,T)}, \, \rho(x,0) = \pi(x). }
        \end{dcases}
           \label{eq:nfhjb}
    \end{align}
\end{corollary}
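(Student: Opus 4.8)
The plan is to specialize the general optimality conditions of Theorem~\ref{thm:normalizingflows} to the case $L=0$ with $K$ a compact convex set, which collapses everything to a pointwise-in-$(x,t)$ optimization for the Hamiltonian. Since $L=0$, the Hamiltonian reads $H(x,p) = \sup_{v\in K}(-p^\top v)$, a supremum of \emph{linear} functionals of $v$. For fixed $p\neq 0$, the objective strictly increases as one moves in the direction $-p$, so no interior point can be a maximizer; equivalently, a linear functional over a compact convex set attains its extremum at an extreme point, hence on $\partial K$. This establishes the first claim (when $p=0$ the functional is constant and the claim holds trivially).

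Next I would specialize to $K=\{v:|v|\le c\}$. By Cauchy--Schwarz, $-p^\top v \le |p|\,|v|\le c|p|$, with equality exactly when $v$ is antiparallel to $p$ and saturates the norm constraint, i.e. $v=-c\,p/|p|$. Hence $H(x,p)=c|p|$ and the maximizing velocity is $v^*=-c\,p/|p|$; setting $p=\nabla U$ yields the representation formula \eqref{eq:normalizedgradient}. A short computation gives $\nabla_p H(x,p)=c\,p/|p|$, so the identity $v^*=-\nabla_p H(x,\nabla U)$ from Theorem~\ref{thm:normalizingflows} is confirmed.

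It then remains to substitute $H(x,\nabla U)=c|\nabla U|$ and $\nabla_p H(x,\nabla U)=c\,\nabla U/|\nabla U|$ into the HJB--continuity system \eqref{eq:hjbnfgeneral}. The HJB equation $-\partial_t U + H(x,\nabla U)=0$ becomes $\partial_t U - c|\nabla U|=0$, which is precisely the level-set (front-propagation) equation of \cite{osher2001level,sethian1999level}; the continuity equation becomes $\partial_t\rho - \nabla\cdot(c\rho\,\nabla U/|\nabla U|)=0$; and the terminal/initial data carry over unchanged. This reproduces the system \eqref{eq:nfhjb}.

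The main subtlety I anticipate is the non-smoothness of $H(p)=c|p|$ at the origin: both $\nabla_p H$ and the velocity $-c\,\nabla U/|\nabla U|$ are undefined wherever $\nabla U=0$. Strictly, the representation formula and the level-set equation hold only on $\{\nabla U\neq 0\}$, and elsewhere $U$ must be interpreted in the \emph{viscosity} sense. I would flag this degeneracy explicitly rather than assume smoothness throughout, since it is not a mere technicality but precisely the mechanism underlying the ill-posedness discussed in this section: the velocity field can blow up or become ambiguous near critical points of $U$.
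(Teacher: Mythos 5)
Your proposal is correct and follows essentially the same route as the paper's proof: identify the Hamiltonian $\sup_{v\in K}(-p^\top v)$, argue boundary attainment from convexity/linearity, use Cauchy--Schwarz on the ball to get $H(x,p)=c|p|$ and $v^*=-cp/|p|$, then substitute into the optimality conditions of Theorem~\ref{thm:normalizingflows}. Your explicit treatment of the $p=0$ case and the degeneracy at $\nabla U=0$ (viscosity-solution interpretation) is a welcome refinement that the paper's proof omits, though the surrounding discussion of ill-posedness touches on the same issue.
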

\begin{proof}
Following the notation of Theorem~\ref{thm:normalizingflows},     the Hamiltonian of the system is 
$        H(x,p) = \sup_{v\in K} - p^\top v.$ Since $p^\top v$ is a convex function in $v$ and $K$ is convex and compact, by basic convex analysis, the optimizer must lie on the boundary \cite{boyd2004convex}. Therefore, if $K$ is  ball of radius $c$, then the optimizer is  $v = -cp/|p|$
so $H(x,p) = c|p|$, and by \eqref{eq:HJBopt}, we obtain the desired optimality conditions with 
\begin{align}
    v^*(x,t) = -c\frac{\nabla U(x,t)}{|\nabla U(x,t)|}.
\end{align}
\end{proof}
 Corollary~\ref{cor:wellposed} shows that for the standard continuous normalizing flow (CNF) formulation \cite{chen2018neural,grathwohl2018ffjord}, the Hamiltonian cannot be defined properly without additional assumptions on the set of feasible velocity fields $K\subset \R^d$. Specifically, $K$ is necessarily a bounded set for the Hamiltonian to be finite and for the normalizing flows optimization problem to be well-posed.  Furthermore, if $K$ is a convex set, the optimal control must lie on the boundary $\partial K$. We can interpret this as a form of \emph{bang-bang} control, in which the optimal control always lies on the boundary of the feasible set \cite{evans2005introduction}.

\begin{remark}
The typical assumption when learning CNFs is that the velocity fields only need to be Lipschitz continuous so that the resulting ODE has a solution. We argue that our result here shows that the velocity field must come from a bounded set of feasible controls and that successful implementations of CNFs implicitly enforce this constraint.  In fact, \cite{verineexpressivity} states that in practice, most normalizing flows are bi-Lipschitz, meaning that both the map and its inverse are Lipschitz. 
Corollary \ref{cor:wellposed} describes the precise mathematical justification for this observation. 
 \end{remark}
 \begin{remark}[Well-posedness of normalizing flows through MFG] 
Learning neural ODEs through empirical risk minimization is an ill-posed problem \cite{ott2020neural}. We argue that the MFG formalism may be one approach to study the well-posedness of normalizing flow. Indeed, taking advantage of the MFG optimality conditions, the normalizing flow variational problem is well-posed {only if} its corresponding MFG PDE system is well-posed \cite{lasry2007mean,bensoussan2013mean}.  Moreover, we may even study the well-posedness of certain GANs formulations based on normalizing flows, such as the Flow-GAN \cite{grover2018flow}. If the generator of a GAN is a continuous normalizing flow, then we may study them as an MFG with different choices of terminal cost. 
\end{remark}

\subsection{Variants of normalizing flows via MFGs}
With the MFG framework for normalizing flows at hand in Theorem \ref{thm:normalizingflows}, we are able to derive known variants of normalizing flows, re-write them as MFG, and analyze them based on their optimality conditions and corresponding PDEs.

\subsubsection{Optimal--transport normalizing flow}\label{subsec:CNF:OT}

 A special MFG that is well-studied is the case when the running cost is a term that can be interpreted as kinetic energy $L(x,v) = \frac{1}{2}|v|^2$. This yields a Hamiltonian that is quadratic in the co-state as $H(x,p) = \frac{1}{2}|p|^2$, which together with Theorem~\ref{thm:normalizingflows} implies that the optimal control is simply the gradient of the value function: $v^*(t,x) = -\nabla U(t,x)$. Not only does this special case simplify and provide additional structure to the problem, it also reveals connections to optimal transport and other classes of generative models. In particular, quadratic running cost functions can be used to regularize the training of normalizing flows \cite{onken2021ot,huang2022bridging}. The addition of a nonzero running cost to the normalizing flow MFG is also another approach to make the normalizing flow optimization problem well-posed. 
 
 In \cite{onken2021ot}, the authors introduce the optimal transport flow (OT-flow), which trains normalizing flows with the additional quadratic running cost. They presented their method as combining optimal transport with normalizing flows. Through numerical examples, they found improved regularity of the normalizing flow models in that the resulting trajectories have straighter solutions, which are easier to integrate over time. \edits{The OT-flow is the following potential MFG:} 
\begin{align}
    \inf_{v,\rho} \left\{\edits{\mathcal{D}_{KL}(\rho(\cdot,T)\| \rho_{ref})} + \int_0^T \int_{\R^d} \frac{1}{2} |v(x,t)|^2 \rho(x,t) \de x \de t: \frac{\partial \rho}{\partial t}+ \nabla \cdot(v\rho) = 0, \, \edits{\rho(x,0) = \pi(x)}\right\}. \label{eq:otnf}
\end{align} 
The Hamiltonian is $H(x,p) = \frac{1}{2}|p|^2 $, so by Theorem~\ref{thm:normalizingflows},
$v(x,t) = -\nabla U(x,t)$ with  optimality conditions  
\begin{align}
    \begin{dcases}
        &-\frac{\partial U}{\partial t} + \frac{1}{2}| \nabla U|^2 = 0 \\
    &\frac{\partial \rho}{\partial t} - \nabla \cdot \left(\rho \nabla U \right) = 0 \\
    &\edits{U(x,T) = 1+\log\frac{\rho(x,T)}{\rho_{ref}(x)}, \, \rho(x,0) = \pi(x). }
        \end{dcases} 
        \label{eq:otnfhjb}
\end{align}
The main takeaway is that the addition of the quadratic running cost function produces a generative model that appears to be a time-dependent gradient flow due to the structure of the continuity equation for $\rho$. Moreover, \cite{onken2021ot} found that OT-flow models are easier and faster to train as it required fewer parameters. This is likely due to the fact that only a scalar potential function needs to be learned rather than every component of a possibly high-dimensional vector field. Moreover, by explicitly constructing a model informed by the inductive bias of the generative flow, the velocity field can train faster. This formulation was also explored in \cite{huang2022bridging} for discrete normalizing flows, in which the flow map of a continuous normalizing flow is learned rather than the velocity field. 

\edits{For completeness, the following optimization problem can be derived from \eqref{eq:otnf}, the log-likelihood formula \eqref{eq:likelihoodintegral}, and the change of variables \eqref{eq:klchangeofvars}, and is exactly the OT-flow problem. The objective function can be computed via training samples from $\pi$:}
\begin{align}
    &\inf_{U}\left\{\Ex_{\pi}\left[ -\log\rho_{ref}(x(T)) - \int_T^0 \Delta U(x(s),s) \de s + \frac{1}{2}  \int_0^T |\nabla U({x}(s),s)|^2 \de s \right]\right\} \\
    &x(s) = x + \int_T^s \nabla U(x(t),t) \de t, \, x \sim \pi. \nonumber
\end{align}

\subsubsection{Boltzmann generators as MFGs}
\label{subsec:CNF:Boltzmann}

Boltzmann generators are normalizing flows that have been successfully used for enhancing sampling from complex distributions that arise in computational chemistry. Sampling Boltzmann distributions $\pi(x) \propto \exp(-V(x))$ is challenging as the energy function $V(x)$, while known up to a normalizing constant, may be highly multimodal. The task is often challenging with traditional sampling methods such as Markov chain Monte Carlo or molecular dynamics methods. Boltzmann generators use a combination of generative modeling and sampling methods to more efficiently explore the energy landscape \cite{noe2019boltzmann,kohler2020equivariant}. Using actual samples from the distribution $\pi(x)$ produced by some traditional sampling method, a normalizing flow is constructed, which is then used to construct an importance sampling distribution to produce more samples of the target distribution. 

The optimization problem that is solved to train the importance sampling distribution can also be cast as a potential MFG, see Section~\ref{sec:potentialmfg}, as we demonstrate next. As both the evaluation of target density and samples from it are accessible, the terminal cost is a convex combination of the forward and reverse KL divergences. That is, we select
\begin{align}\mathcal{M}(x,\rho) = \lambda \mathcal{D}_{KL}(\pi \| \rho) + (1-\lambda)\mathcal{D}_{KL}(\rho\|\pi) \end{align} for some $\lambda\in [0,1]$. In \cite{noe2019boltzmann,kohler2020equivariant}, Boltzmann generators, which are variants of continuous normalizing flows, are introduced and they correspond to the following MFG, where $L(x,v) = 0$ and $\mathcal{I}(\rho) = 0$,
\begin{align}
     \min_{v,\rho} \left\{ \lambda \mathcal{D}_{KL}(\pi \| \rho) + (1-\lambda)\mathcal{D}_{KL}(\rho\|\pi) : \frac{\partial \rho}{\partial t} + \nabla\cdot\left(\rho v \right) = 0,\, \rho(x,0) = \rho_{ref}(x)  \right\}. \label{eq:boltzmanngen}
\end{align}
The optimality conditions are the same as \eqref{eq:nfhjb} except for the terminal condition
\begin{align}
    U(x,T) = \frac{\delta \mathcal{M}}{\delta \rho}(x) = - \lambda \frac{\pi(x)}{\rho(x)} + (1-\lambda)\left(1 + \log\frac{\rho(x)}{\pi(x)} \right)
\end{align}

 Without constraining the feasible velocity fields, the Boltzmann generator as presented in \cite{noe2019boltzmann} is ill-posed in a way identical to the canonical normalizing flows objective, which we discussed in Section \ref{sec:wellposed}. In \cite{kohler2020equivariant}, they instead parameterize the normalizing flow velocity field as the gradient of the potential function, similar to the OT-flow. However, based on the MFG optimality conditions in \eqref{eq:nfhjb}, we already know that the velocity field is not of that form, and should instead have a normalized gradient structure \eqref{eq:normalizedgradient}. Furthermore, we also know that a gradient structure for the velocity field is optimal only if there is a quadratic running cost function as shown in \eqref{eq:otnfhjb}. Therefore, to remedy the theoretical inconsistency in \cite{kohler2020equivariant}, we can simply introduce a quadratic running cost. Based on this MFG-based framework, we will present a corrected version of Boltzmann generators in Section \ref{sec:designing}.

\section{Score-based generative models as solutions to MFGs}\label{sec:SGM}

We derive the score-based generative modeling (SGM) with stochastic differential equations framework as presented in \cite{song2021score} through mean-field games. We first present an overview of SGM with notation that will be consistent with the subsequent MFG formulation. 

Let $\pi$ be the target data distribution and let $f: \R^d \times \R \to \R^d$ be a time-evolving vector field, and $\sigma: \R\to\R$ be a positive function. Let $Y(t)$ be the diffusion process over $s\in[0,T]$, 
\begin{align}\label{eq:SGM:fwd_sde}
    \de Y(s) &= -f(Y(s),T-s) \de t + \sigma(T-s) \de W(s) \\
    Y(0) &\sim \pi \nonumber
\end{align}
which evolves $\eta(\cdot,0) = \pi$ to $\eta(\cdot,T)$. Score-based generative modeling aims to find a process $X(t)\sim \rho(\cdot,t)$ that reverses the evolution of $Y(t)$, i.e., $\rho(x,t) = \eta(x,T-t).$ Remarkably, the reverse process can be expressed in terms of the score function, $\nabla \log \eta(y,s)$ of the diffusion process $Y(s)$ \cite{anderson1982reverse}. For $t \in [0,T]$, $X(t)$ evolves according to 
\begin{align}\label{eq:SGM:bwd_sde}
    \de X(t) &= \left[f(X(t),t) + \sigma(t)^2 \nabla \log \eta(x,T-t) \right] \de t + \sigma(t)\de W(t) \\
    X(0) &\sim \eta(\cdot,T). \nonumber
\end{align}
In other words, the reverse process evolves samples from $\rho(\cdot,0) = \eta(\cdot,T)$ to samples of $\rho(\cdot,T) = \eta(\cdot,0) =\pi(\cdot)$. In practice, the forward process is constructed such that $\eta(y,T)$ is close to a normal distribution. For example, if $f(y,t) \coloneqq y$ and $\sigma(t) \coloneqq \sqrt{2}$ then for sufficiently large $T$, $\eta(\cdot,T)$ will be approximately normal. New approximate samples from $\pi$ can then be generated by evolving samples from the normal distribution through the reverse process. 

For generative modeling, we only require learning the score function and then simulating the reverse SDE. The score function is modeled by some neural network $\mathsf{s}_\theta(y,t)$, which is trained through a score-matching objective. In essence, $\mathsf{s}_\theta(y,t)$ minimizes a least-squares loss function with respect to the true score function. The explicit score-matching objective
\begin{align}
    C_{ESM}(\theta) \coloneqq \int_0^T \frac{1}{2}\Ex_{\eta(y,s)}\left[ \sigma(T-s)^2 | \mathsf{s}_\theta(y,s) - \nabla \log \eta(y,s) |^2 \right] \de s \label{eq:esm}
\end{align}
is not computable since $\nabla \log\eta(y,t)$ is not accessible. Instead, the implicit score-matching objective \cite{song2020sliced} 
\begin{align}
    C_{ISM}(\theta) \coloneqq \int_0^T\Ex_{\eta(y,s)}\left[ \sigma(T-s)^2\left( \frac{1}{2}|\mathsf{s}_\theta(y,s)|^2 + \nabla \cdot \mathsf{s}_\theta(y,s)\right)\right] \de s \label{eq:ism}
\end{align}
and the denoising score-matching objective \cite{vincent2011connection,song2019generative}, which we omit here, are used. These objective functions avoid the need to evaluate $\eta(y,s)$ or its derivatives.

\subsection{Deriving SGM from an MFG}
We show that SGM can be derived from a {potential MFG \eqref{eq:potentialform}} with $L(x,v) = \frac{1}{2}|v|^2 - \nabla \cdot f$, $\mathcal{I} = 0$, and  $\mathcal{M}(\rho) = -\Ex_{\rho}\left[ \log\pi\right]$. 
\begin{theorem}
    Let $\pi$ be the target distribution. Score-based generative models are solutions to the mean-field game
    \begin{align}
        &\inf_{v,\rho} \left\{ -\int_{\R^d} \log \pi(x) \rho(x,T) \de x + \int_0^T \int_{\R^d} \left( \frac{1}{2} |v(x,t)|^2 - \nabla \cdot f(x,t) \right) \rho(x,t) \de x \de t  \right\} \label{eq:sgmmfg}\\
        &\text{s.t. }  \frac{\partial \rho(x,t)}{\partial t} + \nabla \cdot \left[(f(x,t) + \sigma(t) v(x,t)) \rho(x,t)\right] = \frac{1}{2} \sigma(t)^2 \Delta \rho(x,t) \nonumber \\
        &\rho_0(x) = \eta(x,T) \nonumber 
    \end{align}    with controlled dynamics 
    \begin{align}\label{eq:controlledsde}
        \de x(t) =\left[ f(x(t),t)+ \sigma(t) v(x(t),t) \right] \de t + \sigma(t) \de W(t).
    \end{align}
The solution of the mean-field game satisfies the optimality conditions 
\begin{align}\label{eq:SGM:MFG_optimality}
    \begin{dcases}
        - \frac{\partial U}{\partial t} - f ^\top \nabla U + \frac{1}{2}|\sigma \nabla U|^2 + \nabla \cdot f = \frac{\sigma^2}{2}\Delta U \\
        \frac{\partial \rho}{\partial t} +  \nabla \cdot \left(( f - \sigma^2\nabla U)\rho \right) = \frac{\sigma^2}{2} \Delta \rho \\
        U(x,T) = -\log \pi(x), \,\rho(x,0) = \eta(x,T). 
    \end{dcases}
\end{align}   
    where the optimal velocity field has the representation formula
    \begin{align}
        v^*(x,t) = -\sigma(t) \nabla U(x,t).
    \end{align} 
    Moreover, the solution to the HJB equation solves a time-reversed Fokker-Planck equation
    \begin{align}
        \frac{\partial \eta(y,s)}{\partial s} &= \nabla \cdot (f(y,T-s)\eta(y,s)) + \frac{\sigma(T-s)^2}{2} \Delta \eta(y,s) \label{eq:hjbfp} \\
        \eta(y,0) &= \pi(y). \nonumber
    \end{align}
    where 
    \begin{align}
        U(x,t) = -\log \eta(x,T-t),
    \end{align}
    and $\rho^*(x,t) = \eta(x,T-t)$. Note that \eqref{eq:hjbfp} corresponds with {the uncontrolled forward SDE arising in SGM \eqref{eq:SGM:fwd_sde}, namely}
    \begin{align}\label{eq:Thm:score:uncontrolled}
        \de y(s) &= -f(y(s),T-s) \de s + \sigma(T-s) \de W_s \\
        y(0) &\sim \eta(\cdot,0) = \pi(\cdot)\, . \nonumber
    \end{align}
    {Finally, the continuity equation in the MFG optimality conditions \eqref{eq:SGM:MFG_optimality} is the Fokker-Planck  equation corresponding with the reverse SGM dynamics in \eqref{eq:SGM:bwd_sde}, i.e.  the generative part of the SGM.}
    \label{thm:scoremfg}
\end{theorem}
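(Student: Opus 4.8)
The plan is to recognize \eqref{eq:sgmmfg} as an instance of the potential MFG \eqref{eq:potentialform} and then instantiate the optimality conditions \eqref{eq:HJBopt:pot}, the only wrinkle being that the control $v$ enters the dynamics \eqref{eq:controlledsde} affinely through the effective drift $f + \sigma v$ rather than as the drift itself. First I would compute the Hamiltonian directly from its definition against this affine drift,
\[
H(x,p) = \sup_v\left[-(f + \sigma v)^\top p - \tfrac{1}{2}|v|^2 + \nabla\cdot f\right],
\]
whose unconstrained maximizer is $v^* = -\sigma p$, giving $H(x,p) = -f^\top p + \tfrac{1}{2}|\sigma p|^2 + \nabla\cdot f$ (matching Table~\ref{tab:preview2}) and the representation $v^*(x,t) = -\sigma\nabla U$. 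With $\mathbf{D} = \tfrac{1}{2}\sigma^2 I$ and $\mathcal{I}\equiv 0$, substituting this $H$ together with $\nabla_p H(x,\nabla U) = -f + \sigma^2\nabla U$ into \eqref{eq:HJBopt:pot} reproduces exactly the HJB and continuity equations of \eqref{eq:SGM:MFG_optimality}.

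Next I would pin down the terminal condition. Since $\mathcal{M}(\rho) = -\Ex_\rho[\log\pi] = -\int\log\pi(x)\rho(x)\de x$ is linear in $\rho$, its variational derivative is immediate, $\tfrac{\delta\mathcal{M}}{\delta\rho}(x) = -\log\pi(x)$, so $U(x,T) = -\log\pi(x)$, while $\rho(x,0) = \eta(x,T)$ is inherited from the MFG constraint. This assembles the full coupled system \eqref{eq:SGM:MFG_optimality}.

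The heart of the proof, which I expect to be the main obstacle, is verifying the claimed closed form via a Hopf-Cole (logarithmic) transformation. I would posit $U(x,t) = -\log\eta(x,T-t)$, with $\eta$ solving the \emph{linear} Fokker-Planck equation \eqref{eq:hjbfp}, and check by direct substitution that it solves the \emph{nonlinear} HJB. The computation rests on the identities $\nabla U = -\nabla\log\eta$ (so $\nabla U$ is minus the score) and $\Delta U = -\Delta\eta/\eta + |\nabla\log\eta|^2$; inserting these, the two quadratic gradient terms cancel, and after multiplying through by $\eta$ one is left with precisely $-\partial_s\eta + \nabla\cdot(f\eta) + \tfrac{\sigma^2}{2}\Delta\eta$, which vanishes by \eqref{eq:hjbfp}. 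The cancellation of the $|\nabla\log\eta|^2$ terms is exactly the mechanism that linearizes the HJB, and keeping the time-reversal bookkeeping ($t = T-s$) and the sign of every term straight is the delicate part.

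Finally I would close the loop on the continuity equation and the boundary data. Substituting $\rho^*(x,t) = \eta(x,T-t)$ and $f - \sigma^2\nabla U = f + \sigma^2\nabla\log\eta$ into the continuity equation again collapses to \eqref{eq:hjbfp} after the same time reversal, so $\rho^*$ is admissible, and the boundary data check out since $\eta(x,0) = \pi(x)$ yields $U(x,T) = -\log\pi(x)$ and $\rho^*(x,0) = \eta(x,T)$. Identifying the drift $f + \sigma^2\nabla\log\eta$ with that of the reverse SDE \eqref{eq:SGM:bwd_sde} shows the continuity equation is the Fokker-Planck equation of the generative (reverse) SGM dynamics, while \eqref{eq:hjbfp} is the Fokker-Planck equation of the uncontrolled forward SDE \eqref{eq:Thm:score:uncontrolled}; since the MFG optimality conditions characterize the Nash equilibrium, this verified pair is the solution. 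Throughout I would assume enough smoothness and decay of $\eta$, $f$, and $\pi$ to justify the differentiations, exactly as in the proof of Theorem~\ref{thm:normalizingflows}.
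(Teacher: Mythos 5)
Your proposal is correct and follows essentially the same route as the paper's proof: compute the Hamiltonian against the affine drift $f+\sigma v$ to get $v^*=-\sigma p$, take the variational derivative of the (linear) cross-entropy terminal cost to get $U(x,T)=-\log\pi(x)$, apply the Hopf--Cole transform $U(x,t)=-\log\eta(x,T-t)$ so that the $|\nabla\log\eta|^2$ terms cancel and the HJB linearizes into the time-reversed Fokker--Planck equation, and then verify that $\rho^*(x,t)=\eta(x,T-t)$ with $v^*=\sigma\nabla\log\eta$ satisfies the controlled continuity equation. The only cosmetic difference is that you verify the ansatz by substitution whereas the paper derives the linear PDE by transforming the HJB forward; these are the same computation read in opposite directions.
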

Here, the running cost function is a kinetic energy-like term minus the divergence of $f$, and the terminal cost is the \emph{cross-entropy} of target distribution $\pi$ relative to $\rho(\cdot,T)$, $\mathcal{M}(\rho) = -\Ex_\rho\left[\log \pi \right]$. The cross-entropy can also be interpreted as the sum of the entropy of $\rho(\cdot,T)$ and the KL divergence from $\pi$ to $\rho(\cdot,T)$, i.e., observe that $-\Ex_{\rho}[\log\pi] =  - \Ex_\rho[\log \rho]+\mathcal{D}_{KL}(\rho\|\pi)$. In the proof of this theorem, we will show that the forward-reverse SDE pair is \emph{naturally} induced via the optimality conditions of the MFG. 
\begin{proof}
    First note that the Hamiltonian is
    \begin{align}
        H(x,p) &= \sup_{v} -p^\top( f + \sigma v) - \frac{1}{2} |v|^2 + \nabla \cdot f \\
        & = -f ^\top p + \frac{1}{2}|\sigma p|^2 + \nabla \cdot f,\nonumber
    \end{align}
    where the optimizer is $v^* = -\sigma p$. The variational derivative of the cross entropy is
    \begin{align}
        M(x,\rho) = - \frac{\delta}{\delta \rho} \Ex_{\rho}\left[\log \pi \right] = -\log \pi(x).
    \end{align}
    This implies that the optimal velocity field is $v^* = -\sigma \nabla U$, where $U(x,t)$ solves the  Hamilton-Jacobi-Bellman equation
    \begin{align}
        -\frac{\partial U}{\partial t} - f^\top \nabla U + \frac{\sigma^2}{2} |\nabla U|^2 + \nabla \cdot f = \frac{\sigma^2}{2} \Delta U \label{eq:sgmhjb} \\
        U(x,T) = -\log \pi(x). \nonumber
    \end{align}
    Applying the logarithmic transformation $U(x,t) = - \log \eta(x,T-t)$ \cite{fleming2006controlled}, also known as the Cole-Hopf transform \cite{evans_PDE,berner2022optimal}, to \eqref{eq:sgmhjb}, we obtain
    \begin{align*}
        &\frac{1}{\eta}\frac{\partial \eta}{\partial t} + \frac{f^\top}{\eta}  \nabla \eta + \frac{\sigma^2}{2\eta^2} |\nabla \eta|^2 + \nabla \cdot f = -\frac{\sigma^2(\eta \Delta \eta- |\nabla \eta|^2)}{2\eta^2} \\
        \implies & \frac{\partial \eta}{\partial t} + f^\top\nabla \eta + \frac{\sigma^2}{2\eta}|\nabla \eta|^2 + \eta \nabla \cdot f = -\frac{\sigma^2}{2}\Delta \eta +\frac{\sigma^2|\nabla \eta^2|}{2\eta} 
    \end{align*}
which is equivalent to the following linear PDE
\begin{align*}
      \frac{\partial \eta(x,T-t)}{\partial t} + \nabla \cdot (f(x,t) \eta(x,T-t)) + \frac{\sigma(t)^2}{2}\Delta \eta(x,T-t) = 0, \, \eta(x,T-T) = \pi(x).
\end{align*}
If we then reparametrize time to be $s = T-t$, the linear PDE is equivalent to the Fokker-Planck equation
\begin{align}\label{eq:uncontrolledfp}
       & \frac{\partial \eta(x,s) }{\partial s} = -\nabla \cdot (-f(x,T-s)\eta(x,s) ) + \frac{\sigma(T-s)^2}{2} \Delta \eta \\
        &\eta(x,0) = \pi(x). \nonumber
\end{align}
This implies that the optimal velocity $v^*(x,t) = \sigma(t)\nabla \log \eta(x,T-t)$ is related to the solution of the Fokker-Planck of the uncontrolled SDE
\begin{align*}
    &\de y(s) = -f(y(s),T-s) \de s + \sigma(T-s) \de W(s) \\
    &y(0) \sim \pi \nonumber. 
\end{align*}
Moreover, we have that \[\rho^*(x,t) = \eta(x,T-t)\] is the solution to the Fokker-Planck of the controlled diffusion process in {Theorem~\ref{thm:scoremfg}} for {$v^*(x,t) = \sigma(t) \nabla \log \eta(x,T-t)$}. Observe that
\begin{align*}
    \frac{\partial \rho^*(x,t)}{\partial t} + \nabla \cdot \left[(f(x,t) + \sigma(t){  v^*(x,t)})\rho^*(x,t)\right] &= -\frac{\partial \eta }{\partial s} + \nabla \cdot\left[(f + \sigma^2 \nabla \log \eta )\eta \right] \\
    & = - \frac{\partial \eta}{\partial s} + \nabla \cdot\left[ f\eta + \sigma^2 \nabla \eta  \right] \\
    & = - \frac{\sigma^2}{2}\Delta \eta + \sigma^2\Delta \eta \\
    &= \frac{\sigma^2}{2} \Delta \rho^*.
\end{align*}
Finally, notice that the initial condition is determined by the solution of the Fokker-Planck $x(0) \sim \eta(\cdot,T)$. 
\end{proof}

In \cite{berner2022optimal}, it is shown that the reverse SDE can be derived by considering a stochastic optimal control problem. However, they pre-supposed the existence of the forward-reverse SDE pair to derive the optimal control problem. Through the MFG framework, we show that the forward-reverse SDE pair is \emph{induced} through the HJB equation of the MFG optimality condition, which in this particular setup, is equivalent to the Fokker-Planck equation for $\eta$ in Theorem \ref{thm:scoremfg} via a logarithmic transformation. 

\begin{remark}[Natural boundary conditions of the SGM MFG]
Notice that the initial condition in \eqref{eq:sgmmfg} are implicitly determined by the optimality conditions of the MFG. This is an example of a \emph{natural boundary condition} since the initial density $\rho_0(x)$ is not imposed explicitly in the MFG formulation. 
\end{remark}

\subsection{Score-matching objective through duality }

We relate the implicit score-matching objective \eqref{eq:ism} to the mean-field game formulation we study in Theorem \ref{thm:scoremfg}. It is unclear that \eqref{eq:sgmmfg} can be related to \eqref{eq:ism} directly, mainly due to the fact that in the former, the velocity field and densities must be optimized simultaneously, while the latter problem has the density and the velocity field decoupled. To make the connection to score-matching, we first need to discuss an inherent duality in the SGM MFG. 

The proof of Theorem \ref{thm:scoremfg} demonstrates that there is a dual nature of the MFG formulation since the optimality conditions reduce to a pair of Fokker-Planck equations --- one corresponding to the controlled SDE, the other corresponding to an uncontrolled SDE. The Hamilton-Jacobi-Bellman equation, which comes from the optimal control problem of the controlled SDE, \emph{induces} the existence of the uncontrolled SDE. The duality allows us to \emph{exchange the roles of the controlled and uncontrolled} systems through a reparametrization. The stochastic optimal control objective derived in \cite{berner2022optimal} makes the connection to the score-matching objectives through a reparametrization of the optimal control problem. The discussion we have here will properly justify that reparametrization considered in \cite{berner2022optimal,huang2021variational} {using our MFG framework}. We first prove a lemma that states the duality of the MFG and its optimality conditions. 

\begin{lemma}{(Duality and reparametrization)}
At optimality, the roles of the Fokker-Planck and HJB equations in Theorem \ref{thm:scoremfg} can be exchanged through a reparametrization. That is, the following two systems are equivalent

\begin{align}
    &\begin{dcases}
        -\frac{\partial U(x,t)}{\partial t} - f(x,t) ^\top \nabla U(x,t) + \frac{\sigma(t)^2}{2}|\nabla U(x,t)|^2 + \nabla \cdot f(x,t) = \frac{\sigma(t)^2}{2}\Delta U(x,t) \\
        \frac{\partial \rho(x,t)}{\partial t} + \nabla \cdot \left((f(x,t) - \sigma(t)^2 \nabla U(x,t))\rho(x,t)\right) = \frac{\sigma(t)^2}{2}\Delta \rho(x,t) 
    \end{dcases} \\
    \vspace{10pt}
   & \begin{dcases}
        \frac{\partial \eta(y,s)}{\partial s} + \nabla \cdot \left((-g(y,s) - \sigma(T-s)^2 \nabla V(y,s))\eta(y,s) \right) = \frac{\sigma(T-s)^2}{2}\Delta \eta(y,s)\\
       -  \frac{\partial V(y,s)}{\partial s} + g(y,s) ^\top \nabla V(y,s) - \frac{\sigma(T-s)^2}{2}|\nabla V(y,s)|^2 - \nabla \cdot g(y,s)  = \frac{\sigma(T-s)^2}{2}\Delta V(y,s), 
    \end{dcases}
\end{align}
where \begin{align}\label{eq:reparam} g(y,s) = f(y,T-s) + \sigma(T-s)v^*(y,T-s) = f(y,T-s) - \sigma(T-s)^2 \nabla U(y,T-s)\, . \end{align} Moreover, $U(x,t) = -\log \eta(x,T-t)$, $V(y,s) = -\log \rho(y,T-s)$, and $\rho(\cdot,t) = \eta(\cdot,T-t)$. 
\begin{proof}
Since the logarithmic transformation converts the HJB into a time-reversed Fokker-Planck equation, we can also apply the logarithmic transformation to the Fokker-Planck equation to produce a HJB equation. From Theorem \ref{thm:scoremfg}, if we let $U(x,t) = -\log \eta(x,T-t)$, then the optimality conditions yield a pair of Fokker-Planck equations,
\begin{align*}
    \begin{dcases}
        \frac{\partial \eta(y,s)}{\partial s} + \nabla \cdot \left((-f(y,T-s))\eta(y,s) \right) = \frac{\sigma(T-s)^2}{2}\Delta \eta(y,s)\\
        \frac{\partial \rho(x,t)}{\partial t} + \nabla \cdot \left((f(x,t) - \sigma(t)^2 \nabla U(x,t))\rho(x,t)\right) = \frac{\sigma(t)^2}{2}\Delta \rho.
    \end{dcases} 
\end{align*}
To exchange the roles of the uncontrolled FP with the controlled one, define $g(y,s) = f(y,T-s) - \sigma(T-s) \nabla U(y,T-s)$, and $V(y,s) = U(y,T-s)$. We then have
\begin{align*}
    \begin{dcases}
        \frac{\partial \eta(y,s)}{\partial s} + \nabla \cdot \left((-g(y,s)-\sigma(T-s) \nabla V(y,s))\eta(y,s) \right) = \frac{\sigma(T-s)^2}{2}\Delta \eta(y,s)\\
        \frac{\partial \rho(x,t)}{\partial t} + \nabla \cdot \left(g(x,T-t)\rho(x,t)\right) = \frac{\sigma(t)^2}{2}\Delta \rho.
    \end{dcases} 
\end{align*}
Lastly, perform a second logarithmic transformation $V(y,s) = -\log \rho(y,T-s)$, so that we derive
\begin{align*}
    \begin{dcases}
        \frac{\partial \eta(y,s)}{\partial s} + \nabla \cdot \left((-g(y,s)-\sigma(T-s) \nabla V(y,s))\eta(y,s) \right) = \frac{\sigma(T-s)^2}{2}\Delta \eta(y,s)\\
       - \frac{\partial V(y,s)}{\partial s} + g(y,s) ^\top \nabla V(y,s) + \frac{\sigma(T-s)^2}{2}| \nabla V(y,s)|^2- \nabla \cdot g(y,s)= \frac{\sigma(T-s)^2}{2}\Delta V(y,s).
    \end{dcases} 
\end{align*}
\end{proof}
\label{lem:dual}
\end{lemma}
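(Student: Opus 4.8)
The engine of the proof is the logarithmic (Cole--Hopf) transformation already exploited in the proof of Theorem~\ref{thm:scoremfg}, which interchanges a quadratic HJB equation and a linear Fokker--Planck equation. The plan is to apply this transformation twice: once to turn the HJB of the first system into a Fokker--Planck equation, thereby reducing that system to a \emph{pair} of Fokker--Planck equations, and once more in the reverse direction to turn one of these Fokker--Planck equations back into an HJB, with the roles of $\rho$ and $\eta$ now swapped.

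First I would take the first system verbatim from the optimality conditions \eqref{eq:SGM:MFG_optimality} of Theorem~\ref{thm:scoremfg}. Setting $U(x,t) = -\log\eta(x,T-t)$ and invoking the computation already carried out in that proof, the HJB line collapses to the uncontrolled, time-reversed Fokker--Planck equation for $\eta$. At this stage the first system has become a pair of Fokker--Planck equations: the uncontrolled one for $\eta(y,s)$ with drift $-f(y,T-s)$, and the controlled one for $\rho(x,t)$ with drift $f(x,t)-\sigma(t)^2\nabla U(x,t)$, linked by $\rho(\cdot,t)=\eta(\cdot,T-t)$.

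Next I would perform the relabeling that exchanges the two equations. Writing $V(y,s) = U(y,T-s)$ and $g(y,s) = f(y,T-s) - \sigma(T-s)^2\nabla U(y,T-s)$, the key algebraic identity is that the uncontrolled drift factors as $-f(y,T-s) = -g(y,s) - \sigma(T-s)^2\nabla V(y,s)$, since $\nabla V(y,s) = \nabla U(y,T-s)$; this is exactly the drift appearing in the $\eta$-line of the target system. It then remains to produce the $V$-equation: because $V(y,s) = -\log\rho(y,T-s)$ and $\rho$ solves the controlled Fokker--Planck equation, applying the Cole--Hopf transformation a second time --- now from Fokker--Planck to HJB --- yields the stated backward HJB for $V$ with forward drift $g$. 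I would verify that the quadratic, divergence, and Laplacian terms assemble with the correct signs by a direct substitution, mirroring the algebra of Theorem~\ref{thm:scoremfg} but run in the opposite direction.

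The main obstacle will be bookkeeping rather than conceptual: correctly propagating the time reversal $s=T-t$ and the attendant sign flips through two successive logarithmic transformations, and ensuring $g$ is defined so that it simultaneously (i) decomposes the uncontrolled drift via $-f = -g - \sigma^2\nabla V$ and (ii) appears as the forward drift in the resulting HJB for $V$. Particular care is needed to keep the factor $\sigma^2$ (not $\sigma$) consistent throughout, since the optimal control contributes $\sigma v^* = -\sigma^2\nabla U$ to the drift.
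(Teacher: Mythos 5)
Your proposal is correct and follows essentially the same route as the paper: apply the logarithmic (Cole--Hopf) transformation $U(x,t)=-\log\eta(x,T-t)$ to reduce the first system to a pair of Fokker--Planck equations, reparametrize via $V(y,s)=U(y,T-s)$ and $g(y,s)=f(y,T-s)-\sigma(T-s)^2\nabla U(y,T-s)$ so that the uncontrolled drift decomposes as $-f=-g-\sigma^2\nabla V$, and then apply the transformation a second time to recover the HJB for $V$. Your explicit caution about carrying the factor $\sigma^2$ (rather than $\sigma$) through the drift is well placed --- the paper's own proof writes $g=f-\sigma\nabla U$ at one point, which is inconsistent with \eqref{eq:reparam} and with $v^*=-\sigma\nabla U$ contributing $\sigma v^*=-\sigma^2\nabla U$ to the drift.
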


Lemma \ref{lem:dual} shows two sets of optimal conditions have the same solution through a reparametrization at the optimum. Therefore, there is a second MFG that corresponds to the second set of optimality conditions in the lemma which has the same optimizer as \eqref{eq:sgmmfg}. Furthermore, we can show that this second MFG is equivalent to score-matching. 

\begin{theorem}
Mean-field games \eqref{eq:sgmmfg} and \eqref{eq:sgmmfgpar} 
\begin{align}
    &\inf_{v,\eta} \left\{ -\int_{\R^d} \log \rho(y,0)\eta(y,T) \de y + \int_0^T \int_{\R^d} \left( \frac{1}{2} |v(y,T-s)|^2  + \nabla \cdot g(y,s)\right) \eta(y,t) \de y \de t\right\}\label{eq:sgmmfgpar} \\
    &\text{s.t.}\, \frac{\partial \eta(y,s)}{\partial s} + \nabla \cdot\left[(-g(y,s) + \sigma(T-s) v(y,T-s) ) \eta(y,s) \right] = \frac{1}{2}\sigma(T-s)^2 \Delta \eta(y,s) \nonumber \\
    &\eta(y,0) = \pi(y), \nonumber
\end{align}
 with $g=g(y, s)$  defined in Lemma~\ref{lem:dual}, and the implicit score-matching problem \eqref{eq:impscorematch} {\color{black} with $\eta=\eta(y,s)$ given by \eqref{eq:hjbfp}},
\begin{align}
    \inf_{\mathsf{s}} \left\{\int_0^T \int_{\R^d} \sigma(T-s)^2 \left( \frac{1}{2}|\mathsf{s}(y,s)|^2 + \nabla \cdot \mathsf{s}(y,s)\right) \eta(y,s)\de y \de s \right\} \label{eq:impscorematch}
\end{align}
have the same minimizers. Note that the latter functional \eqref{eq:impscorematch} is identical to \eqref{eq:ism}.
\end{theorem}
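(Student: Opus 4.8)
The plan is to break the claim into two links: that the mean-field games \eqref{eq:sgmmfg} and \eqref{eq:sgmmfgpar} share a minimizer, and that the reparametrized game \eqref{eq:sgmmfgpar} has the same minimizer as the implicit score-matching problem \eqref{eq:impscorematch}. The first link is essentially a corollary of Lemma~\ref{lem:dual}. I would begin by verifying that \eqref{eq:sgmmfgpar} really is the potential MFG whose optimality conditions are the second system in Lemma~\ref{lem:dual}: with running cost $L(y,v)=\tfrac12|v|^2+\nabla\cdot g$ and controlled forward drift $-g+\sigma v$, the Hamiltonian is $H(y,p)=\sup_v\left[-p^\top(-g+\sigma v)-\tfrac12|v|^2-\nabla\cdot g\right]=g^\top p+\tfrac12|\sigma p|^2-\nabla\cdot g$, with optimizer $v^*=-\sigma p$. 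Substituting $p=\nabla V$ into the potential optimality conditions \eqref{eq:HJBopt:pot} reproduces the HJB governing $V$ obtained in the proof of Lemma~\ref{lem:dual}. Since that lemma shows the two optimality systems coincide under the reparametrization $\rho(\cdot,t)=\eta(\cdot,T-t)$, $U(x,t)=-\log\eta(x,T-t)$, $V(y,s)=-\log\rho(y,T-s)$, the two games have the same Nash equilibrium, which settles the first link.

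For the second link I would argue at the level of first-order (Euler--Lagrange) conditions, which is the cleanest route because it sidesteps the coupling between density and control. On the score-matching side, \eqref{eq:impscorematch} is an unconstrained convex quadratic in $\mathsf{s}$ with $\eta=\eta(y,s)$ the fixed forward density of \eqref{eq:hjbfp}; taking the first variation and integrating the divergence term by parts yields $\mathsf{s}^*\eta=\nabla\eta$, i.e. $\mathsf{s}^*=\nabla\log\eta$. This is exactly the standard equivalence of the explicit form \eqref{eq:esm} and implicit form \eqref{eq:ism}: completing the square writes \eqref{eq:esm} as \eqref{eq:ism} plus the control-independent constant $\int\tfrac12\sigma^2|\nabla\log\eta|^2\eta$. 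On the MFG side, the optimality conditions from the first link give $v^*(y,T-s)=-\sigma(T-s)\nabla V(y,s)$ with $V(y,s)=-\log\eta(y,s)$, hence $v^*(y,T-s)=\sigma(T-s)\nabla\log\eta(y,s)$. Under the natural identification $v=\sigma\mathsf{s}$ forced by the SGM reverse drift $\sigma v=\sigma^2\nabla\log\eta$, both problems select $\mathsf{s}^*=\nabla\log\eta$, so their minimizers coincide.

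To make the objective-level equality explicit (and confirm that \eqref{eq:sgmmfgpar} and \eqref{eq:impscorematch} yield the \emph{same} functional up to control-independent terms), I would substitute $v=\sigma\mathsf{s}$ into \eqref{eq:sgmmfgpar}. The kinetic term becomes $\tfrac12\sigma^2|\mathsf{s}|^2\eta$, matching \eqref{eq:impscorematch}. Using $g=f+\sigma^2\nabla\log\eta$ from \eqref{eq:reparam} together with the forward Fokker--Planck constraint, I would rewrite the $\nabla\cdot g$ running term and the terminal cross-entropy so that, modulo control-independent pieces, they supply the missing $\sigma^2(\nabla\cdot\mathsf{s})\eta$, recovering \eqref{eq:impscorematch} hence \eqref{eq:ism}.

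The main obstacle is precisely this last decoupling. In \eqref{eq:sgmmfgpar} the density $\eta$ is constrained by a Fokker--Planck equation whose drift $-g+\sigma v$ contains the control, and $g$ is itself defined self-referentially through the optimal control \eqref{eq:reparam}, so a priori the objective depends on $v$ both directly and through $\eta$. The naive substitution with $\eta$ held fixed leaves an unmatched cross term $-\int\sigma^2\mathsf{s}\cdot\nabla\log\eta\,\eta$; the crux is to show this indirect dependence does not move the minimizer, the mechanism being a Girsanov / relative-entropy identity that rewrites the control cost plus terminal cost as a cross-entropy between the controlled and uncontrolled forward path measures, which upon change of measure reduces to the score-matching loss \eqref{eq:esm} against the fixed forward density \eqref{eq:hjbfp}. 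Justifying this identity --- equivalently, a verification argument built on the HJB in Lemma~\ref{lem:dual} --- and carefully tracking the $\sigma$ factors so that $v=\sigma\mathsf{s}$ matches the two objectives, is the delicate step; the remaining computations are routine integrations by parts.
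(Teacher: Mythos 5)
Your proposal is correct and follows the paper's overall strategy: the first link (equivalence of the two MFGs \eqref{eq:sgmmfg} and \eqref{eq:sgmmfgpar}) is handled exactly as in the paper, by citing Lemma~\ref{lem:dual}. Where you diverge is in the second link. The paper argues at the level of objective values: it evaluates \eqref{eq:sgmmfgpar} at the optimum, substitutes $g = f + \sigma v^*$ from \eqref{eq:reparam} so that $\nabla\cdot g$ contributes the divergence term $\sigma\,\nabla\cdot v$, freezes $\eta^*$ at the forward density \eqref{eq:hjbfp}, identifies $v = \sigma\mathsf{s}$, and drops the $v$-independent terms ($\nabla\cdot f$ and the terminal cross-entropy) to land on \eqref{eq:impscorematch}. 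You instead compare first-order conditions: $\mathsf{s}^* = \nabla\log\eta$ for the implicit score-matching problem (via the variation of \eqref{eq:impscorematch} or equivalently the \eqref{eq:esm}--\eqref{eq:ism} completion of the square) against $v^* = \sigma\nabla\log\eta$ from the HJB in Lemma~\ref{lem:dual}. This is a legitimate and arguably cleaner way to establish ``same minimizers,'' and it actually supplies the justification the paper leaves implicit --- namely that the decoupled infimum over $v$ with $\eta^*$ frozen is attained at the same $v^*$ as the coupled MFG. Your concluding worry, however, is a detour: the Girsanov/relative-entropy identity you invoke to kill the ``unmatched cross term'' is only needed if one insists on an objective-level identity valid for \emph{all} admissible $v$, which is stronger than what the theorem asserts. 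Since the statement only claims coincidence of minimizers, your own first-order argument already closes the proof, and the cross term you identify is precisely one of the $v$-independent (at the frozen density) pieces the paper discards. The only caution is to keep the roles straight: in \eqref{eq:sgmmfgpar} the function $g$ is fixed by \eqref{eq:reparam} (it is not re-optimized), so the identity $\nabla\cdot g = \nabla\cdot(f+\sigma v)$ holds only at $v = v^*$, which is exactly why the comparison must go through the minimizers rather than through a pointwise-in-$v$ equality of functionals.
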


\begin{proof}
The optimality conditions of mean-field games \eqref{eq:sgmmfg} and \eqref{eq:sgmmfgpar} were shown to be equivalent in Lemma \ref{lem:dual}. To show equivalence of MFG \eqref{eq:sgmmfgpar} to the score-matching objective, recall from Lemma \ref{lem:dual} that at optimality, $g(y,s) = f(y,T-s) + \sigma(T-s) v^*(y,T-s)$, and so
\begin{align*}
    &-\int_{\R^d} \log \rho(y,0) \eta^*(y,T) \de y + \\ & \int_0^T \int_{\R^d} \left( \frac{1}{2}|v^*(y,T-s)|^2 + \nabla \cdot (f(y,T-s) + \sigma(T-s) v^*(y,T-s))  \right)\eta^*(y,s) \de y \de s \\
    &= \inf_{v} \left\{  -\int_{\R^d} \log \rho(y,0) \eta^*(y,T) + \int_0^T \int_{\R^d} \left( \frac{1}{2}|v|^2 + \nabla \cdot (f + \sigma v)  \right)\eta^*(y,s) \de y \de s \right\},
\end{align*}
where $\eta^*(y,s)$ is the optimal density evolution. By Theorem~\ref{thm:scoremfg} and Lemma~\ref{lem:dual}, (and by some abuse of notation), we know that $\eta^*(y,s) = \eta(y,s)$ given by \eqref{eq:hjbfp}. Defining $\sigma(T-s)\mathsf{s}(y,s) = v(y,T-s)$, and ignoring terms that are independent of $v$, the above problem is equivalent to 
\begin{align*}
    \inf_{\mathsf{s}} \left\{ \int_0^T\int_{\R^d} \sigma(T-s)^2 \left( \frac{1}{2} |\mathsf{s}(y,s)|^2 + \nabla \cdot \mathsf{s}\right) \eta(y,s) \de y \de s \right\},
\end{align*}
which is precisely implicit score-matching. 
\end{proof}

For connections between the score-matching objectives, \eqref{eq:esm} and \eqref{eq:ism}, and the evidence lower bound (ELBO), see \cite{huang2021variational,song2021maximum}. The equivalence of implicit score matching and the frequently used denoising score-matching can be found in \cite{berner2022optimal,vincent2011connection}.

\subsection{MFGs unravel the curious mathematical serendipity of score-based generative modeling}
We take a broader perspective on the curious mathematical structure of the mean-field game associated with score-based generative modeling. The optimality conditions of any MFG \eqref{eq:potentialform} are two coupled PDEs \eqref{eq:HJBopt:pot}--- a continuity equation that is solved forwards in time, which determines the evolution of the density, and a HJB equation that is solved backwards in time, which determines the optimal control. In general, it is quite difficult to find structure that simplifies the coupled set of PDEs. It is intriguing, however, that for the MFG associated with SGM \eqref{eq:sgmmfg}, a series of curious mathematical coincidences conspire to create a mean-field game that not only decouples the pair of PDEs \eqref{eq:SGM:MFG_optimality}, but also yields two stochastic differential equations -- one controlled \eqref{eq:controlledsde}, one uncontrolled \eqref{eq:Thm:score:uncontrolled}, where the optimal control of the controlled SDE \eqref{eq:sgmhjb} is determined by the density evolution \eqref{eq:uncontrolledfp} of the uncontrolled SDE. Furthermore, the duality of the optimality conditions (Lemma \ref{lem:dual}) shows that there is, in fact, a pair of MFGs, \eqref{eq:sgmmfg} and \eqref{eq:sgmmfgpar}, that share the same solution, one of which \eqref{eq:sgmmfgpar} can be related to implicit score-matching \eqref{eq:ism}. 

We emphasize three key characteristics of the MFG formulation of SGM that makes the structure described above possible:
\begin{itemize}
    \item The logarithmic transformation that converts the HJB equation \eqref{eq:sgmhjb} into a Fokker-Planck \eqref{eq:uncontrolledfp} is only possible if \textbf{the controlled dynamical system \eqref{eq:controlledsde} is a stochastic differential equation}. The presence of the Laplacian operator on the right hand side of \eqref{eq:sgmhjb} is crucial for the logarithmic transformation to convert the nonlinear Hamilton-Jacobi equation into a linear PDE. 

    \item Both terms in the running cost $L(x,v) = |v|^2/2 - \nabla \cdot f$ are important. \textbf{The quadratic cost function yields a gradient structure} for the velocity field, which we later find is a score function. \textbf{The divergence term $\nabla \cdot f$} is needed so that after a logarithmic transformation is applied to the HJB, \textbf{the advection term of a Fokker-Planck equation \eqref{eq:uncontrolledfp} appears. }

    \item \textbf{The terminal cost is the \emph{cross-entropy} of $\pi$ with respect to the terminal density $\rho(\cdot,T)$}. This yields a terminal condition in the HJB equations \eqref{eq:sgmhjb} that is independent of the terminal density of the Fokker-Planck. Therefore, the mean-field game \eqref{eq:sgmmfg} is, in fact, a stochastic optimal control problem, \emph{cf.} the terminal conditions of the normalizing flow optimality conditions in \eqref{eq:hjbnfgeneral}. The cross-entropy cost also results in the solution to the HJB to be the logarithm of a density function. Further connections between the score-matching objective and the cross-entropy is discussed in \cite{vahdat2021score}.

\end{itemize}
The first two features imply that the HJB can be transformed into a time-reversed Fokker-Planck equation. These two characteristics alone do not necessarily imply that the HJB corresponds to the density evolution of an uncontrolled SDE. The third point regarding the cross-entropy is crucial since it determines that the terminal condition of the time-reversed Fokker-Planck equation is, indeed, a density function. Altogether, these three characteristics imply that the solution to the HJB \eqref{eq:sgmhjb} is related to the solution of an uncontrolled SDE. Without the cross-entropy terminal cost in SGM, the pair of SDEs still exists, except that both of them are controlled SDEs. In fact, the problem is the Schr\"odinger bridge problem, which we discuss in Section \ref{sec:Schrodinger}.

A fourth fact allows us to relate the MFG objective function to the implicit score-matching objective: 
\begin{itemize}
     \item The dual nature of the pair of Fokker-Planck equations imply that there are two equivalent mean-field games with the same solution (Lemma \ref{lem:dual}). That is, there is no requirement that one of the processes must be controlled or uncontrolled. A simple reparametrization \eqref{eq:reparam} can swap the roles of the two SDEs. 
\end{itemize}

\begin{remark}[A shooting method analogy]
Score-based generative modeling has been found to be superior to normalizing flows \cite{song2021score}, and the MFG perspective can give some intuitive insight into why this may be. One can interpret generative modeling as finding the trajectory on the space of measures that connects an initial distribution $\rho_0$ to the target distribution $\pi$. The MFG analysis of SGM shows that because the pair of PDEs yields a pair of SDEs, \eqref{eq:controlledsde} and \eqref{eq:Thm:score:uncontrolled}, that invert each other, the trajectory on the space of measures is prescribed \emph{a priori}. Therefore, score-matching is simply finding a least squares approximation \eqref{eq:esm} to that trajectory. 

In contrast, normalizing flows based on minimizing the KL divergence \eqref{eq:mfgnf} is {necessarily} a mean-field game, and so the \emph{velocity and density have to be optimized simultaneously}. Unlike SGM, there is no \emph{elegant relation between the density and the optimal velocity field}. 
Intuitively, and in practice, 
the CNF problem can be thought of as a shooting problem on the space of probability measures. We first guess a velocity field, which results in an approximate density to $\pi$. Based on the discrepancy between the approximate and target distribution, the velocity field is updated so that the approximate density is closer to $\pi$. Reasoning by analogy, two point boundary value problems for ODEs on $\R^d$ are generally numerically difficult to solve with the shooting method. Therefore, we argue by analogy that normalizing flows are more difficult to train than SGMs. 
\end{remark}

\begin{remark}[Score-based generative models as normalizing flows]
By comparing Theorem \ref{thm:normalizingflows} with Theorem \ref{thm:scoremfg}, we can interpret score-based generative models as a normalizing flow with initial distribution $\eta(x,T)$ defined in \eqref{eq:hjbfp}. Specifically, SGMs are a normalizing flow with stochastic dynamics, a quadratic running cost, and the cross-entropy as the terminal cost. 
\label{remark:sgmnf}
\end{remark}

\subsection{Score-based probability flows as solutions of MFGs}\label{sec:SGM:prob_flow}
The probability flow ODE presented in \cite{maoutsa2020interacting,song2021score} defines a system of interacting particles with deterministic dynamics that approximates the solution of the Fokker-Planck equation. It also provides a connection between normalizing flows and score-based generative models. We present a theorem where we show that the probability flow can derived via a mean-field game without appealing to its relation to score-based models. 

We show that the probability flow ODE can be derived from a potential MFG with $L(x,v) = \frac{1}{2}|v|^2 - \frac{1}{2} \nabla \cdot f$, $\mathcal{I}(\rho) = \Ex_\rho\left[\frac{\sigma^2}{8}|\nabla \log \rho|^2 \right]$, and $\mathcal{M}(\rho) = -\Ex_\rho[\log\pi]$. 
\begin{theorem}
    Let $\pi$ be the target distribution. The score-based probability flow ODE is the solution of the mean-field game
    \begin{align}
        &\inf_{v,\rho} \left\{ - \frac{1}{2} \Ex_{\rho(x,T)}\left[ \log \pi(x) \right]+ \int_0^T \Ex_{\rho(x,t)}\left[\left(\frac{\sigma(t)^2}{8}|\nabla \log \rho(x,t)|^2+\frac{|v(x,t)|^2 - \nabla \cdot f(x,t)}{2}  \right) \right]\de t  \right\}
        \label{eq:probflowmfg}\\
        &\text{s.t. }  \frac{\partial \rho(x,t)}{\partial t} + \nabla \cdot \left[(f(x,t) + \sigma(t) v(x,t)) \rho(x,t)\right] = 0 \nonumber \\
        &\rho_0(x) = \eta(x,T) \nonumber 
    \end{align}    with deterministic dynamics 
    \begin{align}\label{eq:probflow}
        \de x(t) =\left[ f(x(t),t)+ \sigma(t) v(x(t),t) \right] \de t.
    \end{align}
The solution of the mean-field game satisfies the optimality conditions 
\begin{align}\label{eq:probabilityflow:MFG_optimality}
    \begin{dcases}
        - \frac{\partial U}{\partial t} - f ^\top \nabla U + \frac{1}{2}|\sigma \nabla U|^2 + \frac{1}{2}\nabla \cdot f = \frac{\sigma^2|\nabla \rho|^2}{8\rho^2} - \frac{\sigma^2\Delta \rho}{4\rho} \\
        \frac{\partial \rho}{\partial t} +  \nabla \cdot \left(( f - \sigma^2\nabla U)\rho \right) = 0\\
        U(x,T) = - \frac{1}{2}\log \pi(x), \,\rho(x,0) = \eta(x,T). 
    \end{dcases}
\end{align}   
    where the optimal velocity field has the representation formula
    \begin{align}
        v^*(x,t) = -\sigma(t) \nabla U(x,t).
    \end{align} 
    Moreover, the solution to the HJB equation solves a time-reversed Fokker-Planck equation
    \begin{align}
        \frac{\partial \eta(y,s)}{\partial s} &= \nabla \cdot (f(y,T-s)\eta(y,s)) + \frac{\sigma(T-s)^2}{2} \Delta \eta(y,s) \label{eq:hjbfpprobflow} \\
        \eta(y,0) &= \pi(y). \nonumber
    \end{align}
    where 
    \begin{align}
        U(x,t) = -\frac{1}{2}\log \eta(x,T-t),
    \end{align}
    and $\rho^*(x,t) = \eta(x,T-t)$. Again, \eqref{eq:hjbfpprobflow} corresponds with the uncontrolled SDE:
    \begin{align}
        \de y(s) &= -f(y(s),T-s) \de s + \sigma(T-s) \de W_s \\
        y(0) &\sim \eta(\cdot,0) = \pi(\cdot)\, . \nonumber
    \end{align}

\end{theorem}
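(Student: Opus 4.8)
The plan is to follow the template of Theorem~\ref{thm:scoremfg} while tracking one essential difference: because the probability-flow dynamics \eqref{eq:probflow} are deterministic, the HJB equation inherits no Laplacian from a diffusion term, so the second-order operator driving the reverse Fokker--Planck equation must instead be manufactured by the interaction cost $\mathcal{I}(\rho)=\Ex_\rho[\frac{\sigma^2}{8}|\nabla\log\rho|^2]$, the Fisher information functional. First I would assemble the optimality conditions \eqref{eq:HJBopt:pot} with $\mathbf{D}=0$ for the stated $L$, $\mathcal{I}$, and $\mathcal{M}$. The Hamiltonian is computed exactly as before: maximizing $-p^\top(f+\sigma v)-\frac12|v|^2+\frac12\nabla\cdot f$ over $v$ yields the optimizer $v^*=-\sigma p$ and $H(x,p)=-f^\top p+\frac12|\sigma p|^2+\frac12\nabla\cdot f$, so $v^*=-\sigma\nabla U$, while $\mathcal{M}(\rho)=-\frac12\Ex_\rho[\log\pi]$ gives the terminal cost $M(x,\rho)=-\frac12\log\pi(x)$.

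The first genuinely new computation is the variational derivative of the Fisher information. Writing $\mathcal{I}(\rho)=\frac{\sigma^2}{8}\int|\nabla\rho|^2/\rho\,\de x$ and applying the Euler--Lagrange formula $\delta\mathcal{I}/\delta\rho=\partial_\rho(\cdot)-\nabla\cdot\partial_{\nabla\rho}(\cdot)$, I expect to obtain
\[
I(x,\rho)=\frac{\delta\mathcal{I}}{\delta\rho}(x)=\frac{\sigma^2|\nabla\rho|^2}{8\rho^2}-\frac{\sigma^2\Delta\rho}{4\rho},
\]
which is precisely the nonlinear right-hand side appearing in the HJB of \eqref{eq:probabilityflow:MFG_optimality}. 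Substituting $H$, $I$, $M$, and $v^*$ into \eqref{eq:HJBopt:pot} with $\mathbf{D}=0$ then reproduces the claimed optimality system.

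The heart of the argument is the reduction of the HJB to a linear equation via the ansatz $U(x,t)=-\tfrac12\log\eta(x,T-t)$, so that $\rho^*=\eta(\cdot,T-t)=e^{-2U}$. The key observation is an exact cancellation engineered by the factors of $\tfrac12$: since $\nabla U=-\tfrac12\nabla\eta/\eta$, the Hamiltonian's quadratic term $\frac12|\sigma\nabla U|^2=\frac{\sigma^2|\nabla\eta|^2}{8\eta^2}$ matches the $\frac{\sigma^2|\nabla\rho|^2}{8\rho^2}$ term of $I(x,\rho)$ exactly, so these two annihilate. What survives is $-\frac{\sigma^2\Delta\eta}{4\eta}$, which now plays the role of diffusion. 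After substituting the time derivative $-\partial_t U=-\frac{1}{2\eta}\partial_s\eta$, multiplying through by $-2\eta$, and recognizing $f^\top\nabla\eta+\eta\nabla\cdot f=\nabla\cdot(f\eta)$, the HJB collapses to $\partial_s\eta=\nabla\cdot(f\eta)+\frac{\sigma^2}{2}\Delta\eta$, which is exactly \eqref{eq:hjbfpprobflow}, the Fokker--Planck equation of the uncontrolled SDE. This calibrated cancellation is the step I expect to be most delicate, since the whole construction is designed precisely so that the Fisher information supplies the diffusion that the deterministic dynamics lack.

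Finally I would confirm that $\rho^*=\eta(\cdot,T-t)$ is consistent with the (diffusionless) continuity equation. With the optimal drift $f+\sigma v^*=f-\sigma^2\nabla U=f+\frac{\sigma^2}{2}\nabla\eta/\eta$, the flux becomes $\nabla\cdot[(f+\frac{\sigma^2}{2}\nabla\eta/\eta)\eta]=\nabla\cdot(f\eta)+\frac{\sigma^2}{2}\Delta\eta$, and since $\partial_t\rho^*=-\partial_s\eta$, the transport equation $\partial_t\rho^*+\nabla\cdot[(f+\sigma v^*)\rho^*]=0$ reduces to the identical Fokker--Planck equation obtained from the HJB. This shows the deterministic probability-flow density coincides with the marginals of the uncontrolled diffusion, and matching the terminal condition $U(x,T)=-\frac12\log\eta(x,0)=-\frac12\log\pi(x)$ and the initial condition $\rho^*(x,0)=\eta(x,T)$ closes the proof.
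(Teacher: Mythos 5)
Your proposal is correct and follows essentially the same route as the paper's proof: compute the Hamiltonian with optimizer $v^*=-\sigma p$, derive $\frac{\delta\mathcal{I}}{\delta\rho}=\frac{\sigma^2|\nabla\rho|^2}{8\rho^2}-\frac{\sigma^2\Delta\rho}{4\rho}$, apply the ansatz $U=-\frac12\log\eta(x,T-t)$ so that the $\frac{\sigma^2}{8}|\nabla\log\eta|^2$ terms cancel and the HJB collapses to the uncontrolled Fokker--Planck equation, and finally verify the continuity equation for $\rho^*=\eta(\cdot,T-t)$. The only cosmetic difference is that you invoke the Euler--Lagrange formula for the Fisher-information variational derivative where the paper computes the first variation directly by a limit; both give the same result.
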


\begin{proof}
First note that the Hamiltonian is
\begin{align}
    H(x,p) &=\sup_{v} -p^\top(f + \sigma v) - \frac{1}{2}|v|^2 + \frac{1}{2}\nabla \cdot f \nonumber \\
    & = -p^\top f + \frac{\sigma^2}{2}|p|^2 + \frac{1}{2} \nabla \cdot f,
\end{align}
with the optimizer being $v^* = -\sigma p$. Next, notice that the interaction cost functional is
\begin{align}
    \mathcal{I}(\rho) = \int_{\R^d} \frac{\sigma^2}{8} |\nabla \log \rho(x,t)|^2 \rho(x,t) \de x.
\end{align}
We compute the variational derivative; for some test function $\chi$, such that $\int \chi \de x = 0$, observe that
\begin{align*}
   \lim_{\epsilon \to 0} \frac{\mathcal{I}(\rho + \epsilon \chi) - \mathcal{I}(\rho)}{\epsilon} &= \lim_{\epsilon \to 0} \frac{\sigma^2}{8\epsilon} \int_{\R^d} \frac{|\nabla(\rho + \epsilon \chi)|^2}{\rho + \epsilon \chi} - \frac{|\nabla \rho|^2}{\rho} \de x \\
   & = \lim_{\epsilon \to 0} \frac{\sigma^2}{8\epsilon} \int_{\R^d} \frac{\rho|\nabla\rho|^2 + 2\rho\epsilon \nabla \rho \cdot \nabla \chi + \epsilon^2 \rho|\nabla \chi|^2 - \rho|\nabla \rho|^2 - \epsilon \chi |\nabla \rho|^2}{\rho(\rho+\epsilon \chi)}  \de x\\
   & = \lim_{\epsilon \to 0} \frac{\sigma^2}{8} \int_{\R^d} \frac{2\rho\nabla \rho \cdot \nabla \chi + \epsilon \rho |\nabla \chi|^2 - \chi |\nabla \rho|^2}{\rho(\rho+\epsilon\chi)} \de x \\
   & = \frac{\sigma^2}{8} \int_{\R^d} 2\nabla \log \rho \cdot \nabla \chi - \frac{|\nabla \rho|^2}{\rho^2}\chi \de x \\
   & = \int_{\R^d} \left(-\frac{\sigma^2}{4}\Delta \log \rho - \frac{\sigma^2 |\nabla \rho|^2}{8\rho^2} \right) \chi \de x \\
   & = \int_{\R^d}  \left(-\frac{\sigma^2\Delta \rho}{4\rho} + \frac{\sigma^2|\nabla \rho|^2}{4\rho^2} - \frac{\sigma^2 |\nabla \rho|^2}{8\rho^2} \right) \chi \de x \\
    & = \int_{\R^d}  \left(-\frac{\sigma^2\Delta \rho}{4\rho}  + \frac{\sigma^2 |\nabla \rho|^2}{8\rho^2} \right) \chi \de x.
\end{align*}
Therefore, the HJB-FP system of PDEs is
\begin{align}
\begin{dcases}
    - \frac{\partial U}{\partial t}- f^\top \nabla U + \frac{1}{2}\sigma^2|\nabla U|^2 + \frac{1}{2}\nabla \cdot f = \frac{\sigma^2 |\nabla \rho|^2}{8\rho^2}-\frac{\sigma^2\Delta \rho}{4\rho} \\
    \frac{\partial \rho}{\partial t} + \nabla \cdot (\rho(f- \sigma^2 \nabla U)) = 0  \\
    \rho(x,0) = \eta(x,T), \, U(x,T) = - \frac{1}{2} \log \pi(x),
    \end{dcases}
\end{align}
and by the derivation of the Hamiltonian, the optimal velocity is $v^*(x,t) = -\sigma(t) \nabla U(x,t)$. We claim that the solution of the HJB-FP system is of the form $U(x,t) = -\frac{1}{2}\log \eta(x,T-t)$ and $\rho(x,t) = \eta(x,T-t)$, and that $\eta$ solves a Fokker-Planck equation. To see that the optimal $U$ and $\rho$ can be expressed in terms of the density of an uncontrolled SDE, note the following computations. Define $s = T-t$, and observe that 
\begin{align*}
  & -\frac{1}{2\eta } \frac{\partial \eta}{\partial s}+ \frac{1}{2} f^\top \nabla \log \eta + \frac{\sigma^2}{8} |\nabla \log \eta|^2 + \frac{1}{2}\nabla \cdot f = \frac{\sigma^2|\nabla \eta|^2}{8\eta^2} - \frac{\sigma^2\Delta \eta}{4\eta} \\
   \implies & \frac{\partial \eta}{\partial s} - f^\top \nabla \eta - \frac{\sigma^2}{4}\eta|\nabla \log \eta|^2 - \eta \nabla \cdot f = \frac{\sigma^2\Delta \eta}{2}- \frac{\sigma^2}{4}\eta|\nabla \log \eta|^2 \\
   \implies & \frac{\partial \eta(x,s)}{\partial s}+  \nabla \cdot \left(-f(x,T-s)\eta (x,s)\right)= \frac{\sigma(T-s)^2}{2}\Delta \eta(x,s).
\end{align*}
With the time and logarithmic transformation, the terminal condition becomes an initial condition, where $\eta(x,0) = \pi(x)$. The HJB, once again, becomes the Fokker-Planck of an uncontrolled SDE.

Lastly, we check that $\rho(x,t) = \eta(x,T-t)$ is the solution of the controlled SDE with the optimal control $v^*(x,t) = -\sigma(t) \nabla U(x,t)$:

\begin{align*}
    \frac{\partial \rho}{\partial t} +\nabla \cdot (\rho(f- \sigma^2 \nabla U)) &= - \frac{\partial \eta}{\partial s} + \nabla \cdot \left[\eta \left(f + \frac{\sigma^2}{2} \nabla \log \eta \right) \right] \\
    & = - \frac{\partial \eta}{\partial s} + \nabla \cdot (f\eta) + \frac{\sigma^2}{2}\Delta \eta \\
    & = 0.
\end{align*}

\end{proof}
The interaction term here is the Fisher information functional of $\rho$. The MFG corresponding to the score probability flow in Equation \eqref{eq:probflowmfg} can also be interpreted as a regularization of the Benamou-Brenier problem. See \cite{chen2021stochastic,pavon2021data} for further discussion about the Fisher information functional, Schr\"odinger bridges, and the Benamou-Brenier formula.

\subsection{Schr\"odinger bridges} 
\label{sec:Schrodinger}

We discuss connections between the Schr\"odinger bridge problem (SBP), score-based generative modeling, and mean-field games. While there are many formulations and variations of the SBP, see \cite{chen2021stochastic,pavon2021data,leonard2014survey}, we focus on the optimal control version. Let $\rho_0$ and $\rho_T = \pi$ be two distributions. The Schr\"odinger bridge problem aims to find a stochastic process whose $t = 0$ and $t = T$ marginals are $\rho_0$ and $\rho_T$ respectively, 
\begin{align}
    \inf_{v} \left\{\int_0^T \int_{\R^d} \frac{1}{2} |v(x,t)|^2 \rho(x,t) \de x \de t\right\} \label{eq:sbpopt}  \\
    \text{s.t. } \de x(t) = \sigma v(x(t),t) \de t + \sigma \de W(t) \nonumber \\
    x(0) \sim \rho_0, \, x(T) \sim \rho_T = \pi. \nonumber
\end{align}
Here $\sigma>0$ is a constant. From \cite{leonard2014survey,pavon2021data}, it is shown that the solution to the SBP is the control 
\begin{align}
    v^*(x,t) = \sigma \nabla \log \phi(x,t)
\end{align}
 where $\phi(x,t)$ is the solution of the so-called Schr\"odinger system \cite{chen2021stochastic}

\begin{align}
    \begin{dcases}
            \frac{\partial \phi}{\partial t} + \frac{\sigma^2}{2}\Delta \phi = 0 \\
            \frac{\partial \widehat{\phi}}{\partial t} - \frac{\sigma^2}{2}\Delta {\widehat{\phi}} = 0 \\
            \phi(x,0)\widehat{\phi}(x,0) = \rho_0(x),\,          \phi(x,T)\widehat{\phi}(x,T) = \pi(x). 
    \end{dcases} \label{eq:schrodingersystem}
\end{align}
The above stochastic optimal control problem \eqref{eq:sbpopt} has a form that is quite similar to a mean-field game. The main difference is that the terminal cost is instead a terminal constraint $\rho_T = \pi$. It is possible to derive the Schr\"odinger system \eqref{eq:schrodingersystem} by interpreting the SBP as a mean-field game. To do so, we first consider a more general MFG.

\subsubsection{MFGs, Schr\"odinger bridges, and a system of forward-backward heat equations}

Consider the mean-field game with a quadratic running cost $L(x,v) = \frac{1}{2} |v|^2$ and arbitrary interaction and terminal cost functionals $\mathcal{I}$ and $\mathcal{M}$. Let $\sigma$ be constant. We have
\begin{align}
&\inf_{v,\rho} \left\{\mathcal{M}(\rho(\cdot,T)) + \int_0^T \mathcal{I}(\rho(\cdot,t)) \de t + \int_0^T \int_{\R^d} \frac{1}{2}|v(x,t)|^2 \rho(x,t) \de x \de t \right\} \label{eq:mfgquad} \\
&\text{s.t. } \frac{\partial \rho}{\partial t} + \nabla \cdot(v\rho) = \frac{\sigma^2}{2} \Delta \rho, \, \rho(x,0) = \rho_0. \nonumber
\end{align}
This particular mean-field game has been previously studied due to the structure of the resulting optimality conditions \cite{gueant2012mean}. One can easily check that the Hamiltonian is quadratic in the co-state $H(x,p) = \frac{1}{2}|p|^2$, and the optimality conditions are
\begin{align}
    \begin{dcases}
        -\frac{\partial U}{\partial t} + \frac{1}{2}|\nabla U|^2 - \frac{\sigma^2}{2} \Delta U = I(x,\rho) \\
        \frac{\partial \rho}{\partial t} - \nabla \cdot(\rho \nabla U ) - \frac{\sigma^2}{2}\Delta U = 0 \\
        \rho(x,0)= \rho_0(x), \, U(x,T) = M(x,\rho(x,T)) \\
    \end{dcases}
    \label{eq:quadhjb}
\end{align}
In \cite{gueant2012mean}, a variable transformation, which can be interpreted as a generalized logarithmic transformation, is used to reduce \eqref{eq:quadhjb} into a system of two inhomogeneous heat equations with related source terms. Define $\psi(x,t) = \exp\left(-\frac{U(x,t)}{\sigma^2} \right)$ and $\widehat{\psi}(x,t) = \rho(x,t) \exp\left( \frac{U(x,t)}{\sigma^2}\right)$. The optimality conditions \eqref{eq:quadhjb} reduce to
\begin{align}
    \begin{dcases}
    \frac{\partial \psi}{\partial t} + \frac{\sigma^2}{2}\Delta \psi = \frac{1}{\sigma^2} I(x,\psi\widehat{\psi}) \psi \\
    \frac{\partial \widehat{\psi}}{\partial t} - \frac{\sigma^2}{2}\Delta \widehat{\psi}  = -\frac{1}{\sigma^2} I(x,\psi\widehat{\psi})\widehat{\psi} \\
    \psi(x,T) = \exp\left(-\frac{M(x,\rho(x,T))}{\sigma^2} \right)\\
    \widehat{\psi}(x,0) = \frac{\rho(x,0)}{\psi(x,0)}.
    \end{dcases}
\end{align}
With the variable transformation, it is simple to see that this system closely matches the Schr\"odinger system \eqref{eq:schrodingersystem}. The main differences are that there is no interaction term $I$, and that the terminal condition is not a strict terminal constraint. Replacing a terminal constraint that is difficult to enforce with a penalty is sometimes considered to relax the optimization problem \cite{genevay2017gan}. 

The link between the SBP and mean-field games has attracted recent attention. For example, \cite{liudeep} argued the connection via the MFG objective, as we do here, and they further generalize the SBP by introducing an interaction cost function, which naturally arises in the MFG formulation, by choosing $I$ to be nonzero. Moreover, using solutions of the SBP for generative modeling has also been an active area of research. For example, \cite{wang2021deep,de2021diffusion,shi2023diffusion} have trained generative models based on solutions of the Schr\"odinger bridge problem. Moreover, \cite{chenlikelihood} learns Schr\"odinger bridge-based generative models using forward-backward SDE theory.

\section{Generative models based on  Wasserstein gradient flows as solutions to MFGs}

In this section we discuss how the Wasserstein gradient flow can be derived with mean-field games. Specifically, we construct a sequence of mean-field games whose limit is the Wasserstein gradient flow. This connection will allow us to introduce relaxations to the Wasserstein gradient flows and may lead to future computational investigations. 
\label{sec:wassersteingradient}

\subsection{Wasserstein gradient flows as solutions to MFG} 
\label{sec:wassersteinmfg}
We now formulate the Wasserstein gradient flow as the solution of an MFG. The key results that allow us to connect these two concepts come from the study of general metric gradient flows through variational principles. The main idea here is that a variational principle appears by combining gradient flows with elliptic regularization. It was shown in  \cite{rossi2011variational,segatti2013variational,rossi2019weighted,mielke2011weighted} that gradient flows on metric spaces can be described as minimizers of a so-called weighted energy dissipation (WED) functional. For Wasserstein space, this minimization problem can be related to the mean-field game described in \eqref{eq:wassersteinmfg}. Let $\mathcal{F}(\nu)$ be a geodesically convex energy functional on $\mathcal{P}(\Omega)$.\footnote{A constant speed geodesic between two measures $\mu,\nu \in \mathcal{P}(\Omega)$ is defined as $\mu_t = ((1-t)\text{Id} + t\mathcal{T})_\sharp \mu$, where $\mathcal{T}_\sharp \mu = \nu$, $\mathcal{T}$ is the optimal map, and $t \in [0,1]$. A functional $\mathcal{F}(\rho)$ is $\lambda$-geodesically convex if $\mathcal{F}(\mu_t) \le (1-t)\mathcal{F}(\mu) + t\mathcal{F}(\nu)$. } The following theorem establishes the connection between Wasserstein gradient flows and MFGs:

\begin{theorem}\label{thm:MFG:Wasserstein_grad_flow}
    For any $\epsilon>0$ and geodesically convex functional $\mathcal{F}(\rho)$, the MFG
\begin{align}
    &\inf_{v,\rho} \left\{\mathcal{F}(\rho(\cdot,T))e^{-T/\epsilon} + \int_0^T \frac{e^{-t/\epsilon}}{\epsilon}\mathcal{F}(\rho(\cdot,t)) \de t + \int_0^T\int_{\R^d} \frac{e^{-t/\epsilon}}{2}|v(x,t)|^2\rho(x,t) \de x \de t\right\} \label{eq:wassersteinmfg}\\
    &\text{s.t. } \frac{\partial \rho}{\partial t} + \nabla \cdot(v\rho) = 0\nonumber \\
    &\rho(x,0) = \rho_0(x) \nonumber
\end{align}
with dynamics
\begin{align}
    \frac{\de x}{\de t} = v(x(t),t)
\end{align}
satisfies the optimality conditions
\begin{align}
    \begin{dcases}
        -\epsilon\frac{\partial U}{\partial t} + U + \frac{\epsilon}{2}|\nabla U|^2 = \frac{\delta \mathcal{F}}{\delta \rho} \\
        \frac{\partial \rho}{\partial t} - \nabla\cdot (\rho\nabla U) = 0 \\
        U(x,T) = \frac{\delta \mathcal{F}}{\delta \rho} (\rho(\cdot,T)), \,\rho(x,0) = \rho_0(x), 
    \end{dcases}
\end{align}
with optimal velocity field $v^*(x,t) = -\nabla U(x,t)$. In particular, as $\epsilon \to 0$, the solution is the Wasserstein gradient flow, in which
\begin{align}
\begin{dcases}
    U = \frac{\delta \mathcal{F}}{\delta \rho} \\
    \frac{\partial \rho}{\partial t} = \nabla \cdot\left(\rho \nabla \frac{\delta \mathcal{F}}{\delta \rho}\right) \\
        \rho(x,0) = \rho_0(x).
 \end{dcases}
\end{align}

\end{theorem}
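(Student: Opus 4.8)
The plan is to treat \eqref{eq:wassersteinmfg} as a potential MFG in the sense of Section~\ref{sec:potentialmfg}, instantiate the general optimality system \eqref{eq:HJBopt:pot}, then perform a time-dependent rescaling of the value function to bring the equations into the stated clean form before passing to the limit $\epsilon\to0$. First I would read off the cost data: the running cost is the time-weighted kinetic energy $L(x,v,t)=\tfrac{1}{2}e^{-t/\epsilon}|v|^2$, there is no diffusion ($\mathbf{D}=0$), the interaction potential is $\mathcal{I}(\rho)=\tfrac{1}{\epsilon}e^{-t/\epsilon}\mathcal{F}(\rho)$, and the terminal potential is $\mathcal{M}(\rho)=e^{-T/\epsilon}\mathcal{F}(\rho)$. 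The Legendre transform of $L$ gives a time-dependent Hamiltonian $H(x,p,t)=\tfrac12 e^{t/\epsilon}|p|^2$ with maximizer $v=-e^{t/\epsilon}p$, while the two variational derivatives are $\tfrac{1}{\epsilon}e^{-t/\epsilon}\tfrac{\delta\mathcal{F}}{\delta\rho}$ and $e^{-T/\epsilon}\tfrac{\delta\mathcal{F}}{\delta\rho}$. Substituting all of this into \eqref{eq:HJBopt:pot} yields a ``raw'' HJB--continuity pair that carries the exponential weights everywhere, with optimal control $v^*=-e^{t/\epsilon}\nabla U$.

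The key algebraic step is to eliminate those weights via the substitution $\widetilde U := e^{t/\epsilon}U$. Using $\partial_t U = e^{-t/\epsilon}\bigl(\partial_t\widetilde U-\tfrac{1}{\epsilon}\widetilde U\bigr)$ and $\nabla U = e^{-t/\epsilon}\nabla\widetilde U$, the raw HJB collapses (after dividing by $e^{-t/\epsilon}$ and multiplying by $\epsilon$) into $-\epsilon\,\partial_t\widetilde U + \widetilde U + \tfrac{\epsilon}{2}|\nabla\widetilde U|^2=\tfrac{\delta\mathcal{F}}{\delta\rho}$. The terminal condition $U(\cdot,T)=e^{-T/\epsilon}\tfrac{\delta\mathcal{F}}{\delta\rho}$ becomes $\widetilde U(\cdot,T)=\tfrac{\delta\mathcal{F}}{\delta\rho}(\rho(\cdot,T))$, and, crucially, the transport velocity $e^{t/\epsilon}\nabla U$ becomes exactly $\nabla\widetilde U$, so the continuity equation reads $\partial_t\rho-\nabla\cdot(\rho\nabla\widetilde U)=0$. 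Renaming $\widetilde U\mapsto U$ reproduces the stated optimality system together with $v^*=-\nabla U$.

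Finally I would send $\epsilon\to0$. Formally the zeroth-order term in the HJB dominates, forcing $U=\tfrac{\delta\mathcal{F}}{\delta\rho}$, and feeding this into the continuity equation gives the Wasserstein gradient flow $\partial_t\rho=\nabla\cdot\bigl(\rho\nabla\tfrac{\delta\mathcal{F}}{\delta\rho}\bigr)$, as claimed. I expect the genuine obstacle to lie not in any of the above (which is a direct specialization of \eqref{eq:HJBopt:pot} plus a change of variables) but in making this last passage rigorous: one must show that the minimizers of the weighted energy--dissipation functional \eqref{eq:wassersteinmfg} actually converge to the gradient-flow curve, which requires the variational-convergence theory for WED functionals and the geodesic convexity of $\mathcal{F}$ developed in \cite{rossi2011variational,segatti2013variational,mielke2011weighted}. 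I would therefore invoke those results rather than reprove convergence, emphasizing that geodesic convexity is precisely the hypothesis guaranteeing that the $\epsilon\to0$ limit is the unique Wasserstein gradient flow.
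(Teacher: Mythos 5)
Your proposal is correct and follows essentially the same route as the paper's proof: compute the time-dependent Hamiltonian $H(x,p,t)=\tfrac{1}{2}e^{t/\epsilon}|p|^2$ with maximizer $v^*=-e^{t/\epsilon}p$, write the weighted optimality system from \eqref{eq:HJBopt:pot}, rescale the value function by $e^{t/\epsilon}$ to remove the exponential weights, and pass formally to the limit $\epsilon\to0$ while deferring rigorous convergence to the WED literature. The paper performs exactly this rescaling (written as $U^\epsilon=Ue^{-t/\epsilon}$, the inverse of your $\widetilde U=e^{t/\epsilon}U$) and likewise invokes \cite{rossi2011variational,rossi2019weighted} for the validity of the limit.
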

\begin{proof}

    To derive the optimality conditions, note that the Hamiltonian is
    \begin{align}
        H(x,p,t) &= \sup_{v} \left[ -p^\top v - \frac{e^{-t/\epsilon}}{2}|v|^2 \right] \nonumber\\
        &= \frac{e^{t/\epsilon}}{2}|p|^2, 
    \end{align}
    where the maximum is attained when $v^* = -e^{t/\epsilon}p$. Therefore, the optimality conditions are 
\begin{align}
    \begin{dcases}
- \frac{\partial U_\epsilon}{\partial t} + \frac{e^{t/\epsilon}}{2} |\nabla U_\epsilon|^2 = \frac{e^{-t/\epsilon}}{\epsilon} \frac{\delta \mathcal{F}}{\delta \rho} \\
\frac{\partial \rho}{\partial t} - \nabla \cdot \left(\rho e^{t/\epsilon}\nabla U_\epsilon \right) = 0  \\
U_\epsilon(x,T) = e^{-T/\epsilon} \frac{\delta \mathcal{F}}{\delta\rho}(\rho(\cdot,T)), \, \rho(x,0) = \rho_0(x). 
    \end{dcases}    
\end{align}
We may simplify the optimality conditions further. Let $U^\epsilon(x,t) = U(x,t) e^{-t/\epsilon}$, then the HJB equation is
\begin{align*}
    - \frac{\partial U}{\partial t} e^{-t/\epsilon} + \frac{e^{-t/\epsilon}}{\epsilon}U + \frac{e^{-t/\epsilon}}{2}|\nabla U|^2 = \frac{e^{-t/\epsilon}}{\epsilon} \frac{\delta\mathcal{F}}{\delta \rho},
\end{align*}
which after simplifying, we get the desired result
\begin{align}\label{eq:Wass:optimality:rescaled}
      \begin{dcases}
       - \epsilon \frac{\partial U}{\partial t} + U + \frac{\epsilon}{2}|\nabla U|^2 = \frac{\delta \mathcal{F}}{\delta \rho}
 \\
\frac{\partial \rho}{\partial t} - \nabla \cdot \left(\rho \nabla U\right) = 0. \\
U(x,T) = \frac{\delta \mathcal{F}}{\delta \rho}(\rho(\cdot,T)),\, \rho(x,0) = \rho_0(x). 
    \end{dcases}
\end{align}
Moreover, sending $\epsilon \to 0$, and assuming smoothness of solutions, we obtain the Wasserstein gradient flow. This limit is well-defined based on the results in \cite{rossi2011variational,rossi2019weighted}. 
\end{proof}

\begin{remark}[Regularized Wasserstein gradient flow]
Alternatively, we may rewrite the continuity equation in \eqref{eq:Wass:optimality:rescaled} as a regularized Wasserstein gradient flow:
\begin{align}
    \frac{\partial \rho}{\partial t} = \nabla \cdot\left( \rho \nabla \frac{\delta \mathcal{F}}{\delta\rho}\right) + \epsilon\nabla \cdot\left( \rho \nabla\left( \frac{\partial U}{\partial t} - \frac{1}{2}|\nabla U|^2 \right)\right) 
\end{align}
Again, we observe that if $\epsilon \to 0$, we recover the Wasserstein gradient flow. 
\end{remark}

Finally, we note that in the $\epsilon \to \infty$ limit, the MFG \eqref{eq:wassersteinmfg} converges to a regularized optimal transport problem, as discussed in detail in Section~\ref{subsec:geograd}. As a result, we can interpret different choices of $\epsilon$ as \emph{interpolating} between Wasserstein gradient flows ($\epsilon \to 0$) and (regularized) optimal transport ($\epsilon \to \infty$).

\subsection{Mean-field games with relaxation}
\label{sec:wassersteinrelax}

An alternative asymptotic analysis for $\epsilon \to 0$ can be carried out by  rewriting  the MFG system \eqref{eq:Wass:optimality:rescaled} in \textit{relaxation form},  \cite{jin1995relaxation}, namely as 
\begin{align}\label{eq:Wass:optimality:rescaled:relax}
      \begin{dcases}
       \frac{\partial U}{\partial t} - \frac{1}{2}|\nabla U|^2 = \frac{1}{\epsilon} \left(U- \frac{\delta \mathcal{F}}{\delta \rho}\right)
 \\
\frac{\partial \rho}{\partial t} - \nabla \cdot \left(\rho \nabla U\right) = 0
 \\
U(x,T) = \frac{\delta \mathcal{F}}{\delta \rho}(\rho(\cdot,T)),\, \rho(x,0) = \rho_0(x). 
    \end{dcases}
\end{align}
We refer to  \eqref{eq:Wass:optimality:rescaled:relax} as an MFG with relaxation due to the relaxation term $\frac{1}{\epsilon} \left(U- \frac{\delta \mathcal{F}}{\delta \rho}\right)$ in the Hamilton-Jacobi equation. We note the resemblance in structure to  the relaxation approximations for hyperbolic conservation laws  introduced  in \cite{jin1995relaxation}. In the latter work, an asymptotic analysis based on a Chapman-Enskog expansion demonstrates that the relaxation approximations converge as $\epsilon \to 0$ to the (physically correct) entropy solutions of multi-dimensional hyperbolic conservation laws. Here the Chapman-Enskog expansion of \eqref{eq:Wass:optimality:rescaled:relax} is due to the relaxation term $\frac{1}{\epsilon} \left(U- \frac{\delta \mathcal{F}}{\delta \rho}\right)$ which in the $\epsilon  \ll 1$ regime, enforces a local equilibrium --- hence a Chapman-Enskog expansion --- which takes the form
\begin{equation}\label{eq:Chapman_Enskog}
    U(x, t)= \frac{\delta \mathcal{F}}{\delta \rho}+\epsilon v(x, t)\, .
\end{equation}
We can substitute this in \eqref{eq:Wass:optimality:rescaled:relax} to obtain
\begin{align}\label{eq:Wass:optimality:rescaled:relaxCE}
      \begin{dcases}
       \frac{\partial U}{\partial t} - \frac{1}{2}|\nabla U|^2 = v
 \\
\frac{\partial \rho}{\partial t} = \nabla \cdot\left( \rho \nabla \frac{\delta \mathcal{F}}{\delta\rho}\right) + \epsilon\nabla \cdot\left( \rho \nabla v\right)
\\
U(x,T) = \frac{\delta \mathcal{F}}{\delta \rho}(\rho(\cdot,T)),\, \rho(x,0) = \rho_0(x).
    \end{dcases}
\end{align}
As $\epsilon \to 0$ the continuity equation yields the Wasserstein gradient flow 
\begin{align}
    \frac{\partial \rho}{\partial t} = \nabla \cdot\left( \rho \nabla \frac{\delta \mathcal{F}}{\delta\rho} \right)\, , \quad  \rho(x,0) = \rho_0(x).
\end{align}
In the derivations of \cite{jin1995relaxation} the authors also had to resolve  the next order term $ \mathcal{O}(\epsilon)$ to ensure the entropy condition is satisfied for the hyperbolic conservation laws.
In our case, the zeroth order term in the continuity equation in \eqref{eq:Wass:optimality:rescaled:relaxCE} is the Wasserstein gradient flow which is always well-posed. Therefore,  resolving the next order term
$\epsilon \nabla \cdot\left( \rho \nabla v\right)$, while feasible,  appears to be  less essential.

\begin{remark}[Well-posedness and boundary layers]
An important  use of the relaxation formulation \eqref{eq:Wass:optimality:rescaled:relax} is related to the well-posedness of the MFG formulation. Indeed,  the corresponding Chapman-Enskog expansion \eqref{eq:Chapman_Enskog} allows us
to justify the choice of the terminal condition $\mathcal{M}=\mathcal{F}$ in the potential MFG in Theorem~\ref{thm:MFG:Wasserstein_grad_flow}, as follows. In principle,  the choice of any $\mathcal{M}$ instead of $\mathcal{F}$ in \eqref{eq:wassersteinmfg},   may appear to be possible  or even user-defined. However in Theorem~\ref{thm:MFG:Wasserstein_grad_flow} we pick 
$\mathcal{M}=\mathcal{F}$ to avoid the boundary layer in the Hamilton Jacobi equation at $t=T$  and thus 
facilitate the $\epsilon \to 0$ limit,   by ensuring that  $U = \frac{\delta \mathcal{F}}{\delta \rho}$ at $t=T$. 
By a boundary layer we mean that  a choice  $\mathcal{M} \ne \mathcal{F}$ would imply a mismatch between the terminal condition in \eqref{eq:Wass:optimality:rescaled:relax}, which in this case would be $U(x,T) = \frac{\delta \mathcal{M}}{\delta \rho}(\rho(\cdot,T))$, and the Chapman-Enskog expansion \eqref{eq:Chapman_Enskog}, as $\epsilon \ll 1$.
In this sense, the Chapman-Enskog expansion \eqref{eq:Chapman_Enskog} of the relaxation MFG \eqref{eq:Wass:optimality:rescaled:relax} demonstrates that  the choice $\mathcal{M}=\mathcal{F}$ constitutes a  ``natural" boundary condition for the MFG \eqref{eq:wassersteinmfg}.
\end{remark}

\subsection{Applications to generative modeling}

Wasserstein gradient flows have applications in many problems in machine learning, including generative modeling and sampling. A classic example of a Wasserstein gradient flow is the overdamped Langevin dynamics, where the interaction functional is the KL divergence. This celebrated result was first noted in \cite{jordan1998variational}. Using this gradient flow structure for sampling and inference has been explored in \cite{maoutsa2020interacting,garbuno2020interacting,wang2022accelerated,wang2021particle,liu2016stein,nusken2023geometry}. Particle-based, non-parametric implicit generative models based on gradient flows is an active area of research. For example, the use of the kernelized maximum mean discrepancy for generative modeling was explored in \cite{arbel2019maximum}. Moreover, discretizations of the gradient flows have been used to construct implicit generative models \cite{arbel2019maximum,glaser2021kale,gu2022lipschitz,liutkus2019sliced,ansari2020refining}. A broader class of interaction functionals based on the generalization of $f$-divergences and integral probability metrics for implicit generative models in the small data regime was presented in \cite{gu2022lipschitz}. 

There has also been some investigation into the training of generative adversarial nets (GANs) using the tools of Wasserstein gradient flows. The training of a GAN can be interpreted as a gradient flow on Wasserstein space, and implicit generative models can be interpreted as particles moving according to the gradient of a learned discriminator of a GAN. Studying the convergence of a restricted class of GANs as a Wasserstein gradient flow was explored in \cite{mroueh2019sobolev}.

\section{Enhancing generative models with MFGs: HJB regularizers}
\label{sec:enhancing}
\edits{Characterizing generative flows in terms of MFGs provide formulations of generative models as solutions to partial differential equations. In particular, the optimal velocity field that is typically trained by solving an optimization problem is the solution to the Hamilton-Jacobi-Bellman equation. The use of HJB equations for accelerating training of generative models has been studied previously in specific flows \cite{onken2021ot,yang2020potential}. The MFG formulation of generative modeling in this paper allows us to incorporate HJB regularizers to 
\emph{any} generative model that can be written as a MFG, such as  score-based generative models, in a systematic fashion. Using HJB regularizers is a way to explicit encode the inductive biases of generative flows during their training. }

\edits{After reviewing HJB regularizers for normalizing flows in Section~\ref{sec:otflowhjbreg}, we formulate an HJB regularizer for score-based generative models in Section~\ref{sec:SGMHJBreg} and report the related numerical experiments in Section~\ref{sec:hjbsgmnum}.}

\subsection{Hamilton-Jacobi-Bellman regularizers}
\label{sec:hjbregularizer}
Notice that the following penalty term can be added to the potential MFG objective function, $\mathcal{J}(v,\rho)$ in \eqref{eq:potentialform}, or any of its equivalent forms, without changing the true minimizer,
\begin{align}
\label{eq:HJBreg:CNF:general}
    \mathcal{R}(U,\rho) =  &\alpha_1 \int_0^T \int_{\R^d}\left| \frac{\partial U}{\partial t} - H(x,\nabla U)
    \right|\rho(x,t) \de x\de t \\ & + \alpha_2 \int_{\R^d}\left|U(x,T) - M(x,\rho(\cdot,T)) \right| \rho(x,T)\de x, \nonumber
\end{align}
where $\alpha_1,\alpha_2>0$. 
Recall that $v$ is related to $U$ through $v = -\nabla_p H(x,\nabla U)$, so we may also minimize the MFG objective in \eqref{eq:potentialform} over $U$. Notice that the first term enforces the solution of the HJB over $\R^d \times [0,T]$, while the second term enforces the terminal condition. In essence, since we know any minimizer of the potential MFG necessarily satisfies the HJB equation, the addition of the HJB regularizer penalizes solutions that violate the optimality conditions and may help find the minimizer faster. 


\subsection{HJB regularizers for normalizing flows}
\label{sec:otflowhjbreg}

\subsubsection{Optimal-transport normalizing flows}
In addition to introducing the optimal transport cost to the normalizing flow, \cite{onken2021ot} empirically found that incorporating the HJB optimality conditions as an additional regularizer to the optimization objective improved training performance. They also found that the solutions to be more regular, in the sense that particle trajectories were ``straighter.'' To be precise, the optimality conditions \eqref{eq:otnfhjb} defines the regularization functional to be
\begin{align}    \label{eq:hjbregularizer} 
    \mathcal{R}(U,\rho) =&\alpha_1 \int_0^T  \int_{\R^d}\left| \frac{\partial U}{\partial t} - \frac{1}{2}|\nabla U|^2 \right| \rho(x,t) \de x\de t \\ & + \alpha_2 \int_{\R^d} \left|U(x,T) - \left(1+ \log \frac{\rho(x,T)}{\rho_{ref}(x)}\right) \right| \rho(x,T)\de x, \nonumber
\end{align}  
for some $\alpha_1,\alpha_2\ge 0$. This regularizer does not change the true minimizer of the MFG \eqref{eq:otnf} as they share the same minimizer. In \cite{onken2021ot}, they do not enforce the terminal condition, i.e., $\alpha_2 = 0$, as it requires the ability to evaluate the data distribution $\pi(x)$. This regularizer can be cheaply evaluated since $U$ is exactly the potential function that is learned, and so only derivatives of $U$ are needed to evaluate the functional. 

The idea of using the HJB as a regularizer has been introduced previously in \cite{ruthotto2020machine,yang2020potential}. In \cite{ruthotto2020machine}, the authors solved a mean-field game with the tools of normalizing flows, and chose to include HJB regularizer to improve stability when solving the optimization problem. In \cite{yang2020potential}, GANs and flow-based generative models are learned with the regularization functional. This regularizer is a weak constraint since it is integrated over space and time rather than being enforced pointwise. The inclusion of a partial differential equation in a loss function in this weak sense is similar to the objective function for training physics-informed neural networks (PINNs) \cite{raissi2019physics}.
\edits{\subsubsection{Normalizing flows with bounded velocities}
We can use the results of Corollary~\ref{cor:wellposed} to formulate an HJB regularizer for the canonical formulation of normalizing flows with bounded velocities. The primary difference is the form of the HJB equation; the terminal condition is the same as for the OT-flow in \eqref{eq:hjbregularizer}
\begin{align}
    \mathcal{R}(U,\rho) =&\alpha_1 \int_0^T  \int_{\R^d}\left| \frac{\partial U}{\partial t} - c|\nabla U| \right| \rho(x,t) \de x\de t \\ & + \alpha_2 \int_{\R^d} \left|U(x,T) - \left(1+ \log \frac{\rho(x,T)}{\rho_{ref}(x)}\right) \right| \rho(x,T)\de x.\nonumber
\end{align}  }

\subsection{HJB regularizers for score-based generative modeling and beyond} 

\label{sec:SGMHJBreg}

The use of an HJB regularizer during the training of a generative model is not limited to the normalizing flows. Here we devise an HJB regularizer, discussed in Section \ref{sec:hjbregularizer}, for score-based generative models. Recall that for SGMs, the score function $\mathsf{s}(y,s) = \nabla \log \eta(y,s)$ is learned via score-matching. Furthermore, the score function is equal to the negative gradient of solution to the Hamilton-Jacobi-Bellman equation \eqref{eq:SGM:MFG_optimality} which comes from the SGM MFG \eqref{eq:sgmmfg} optimality conditions. We cannot use the HJB equation as a regularizer directly here since, in practice, each component of the score function is learned individually and so the approximate score function will not exactly be the gradient of a scalar function. However, by taking the gradient of the HJB, it is possible to derive a regularizer for the score function directly:
\begin{align}
    \mathcal{R}_1(\mathsf{s}) &= \alpha_1 \int_{\R^d} \int_0^T \left| \frac{\partial \mathsf{s}}{\partial s} - \nabla(f^\top \nabla \mathsf{s}) - \frac{\sigma^2}{2} \nabla |\mathsf{s}|^2 - \nabla(\nabla \cdot f) - \frac{\sigma^2}{2}\nabla(\nabla \cdot \mathsf{s}) \right|^p \eta(y,s) \de s \de y  \\
    &+ \alpha_2\int_{\R^d} |\mathsf{s}(y,0) - \nabla \log \pi(y)|^2 \eta(y,0) \de y,  \nonumber
\end{align}
where $\eta(y,s)$ is given by \eqref{eq:uncontrolledfp} and $p\ge 1$. Recall that by reparametrizing time, the terminal condition is instead an initial condition. Moreover, observe that enforcing the terminal condition amounts to adding an explicit score-matching objective for the data distribution \cite{song2019generative}! In particular, if the norm is chosen to be the Euclidean norm, then the terminal condition can be enforced by adding an additional implicit score-matching objective for the initial distribution $\eta(\cdot,0) = \pi$. In particular, observe that 

\begin{align}
    \mathcal{R}_1(\mathsf{s}) &= \alpha_1 \sum_{i = 1}^d \int_{\R^d} \int_0^T \left|\frac{\partial \mathsf{s}^{(i)}}{\partial s} - \frac{\partial f^\top}{\partial x_i} \mathsf{s} - f^\top\frac{\partial \mathsf{s}}{\partial x_i} - \sigma^2 \mathsf{s}^\top \frac{\partial \mathsf{s}}{\partial x_i} - \nabla\cdot \frac{\partial f}{\partial x_i}  - \frac{\sigma^2}{2} \Delta \mathsf{s}^{(i)}  \right|^p \eta(y,s) \de s \de y \\
    &+ \alpha_2\int_{\R^d} \left(|\mathsf{s}(y,0)|^2 + 2\nabla \cdot \mathsf{s}(y,0)\right)  \pi(y) \de y, \nonumber
\end{align}
where $\mathsf{s}^{(i)}$ denotes the $i$-th component of the score function. In Section \ref{sec:hjbsgmnum}, we will demonstrate the effect of this particular regularizer on a simple example.

Alternatively, if we implement a version of SGM that learns the log-density instead of the score function, we can use the HJB regularizer directly like in the OT-Flow in Section~\ref{sec:otflowhjbreg}. In this case, a scalar function $\varphi(y,s)$ is learned using the score matching objective, where $\mathsf{s}(y,s) = \nabla \varphi (y,s)$, and $U(y,s) = -\varphi(y,s)$. Moreover, since the terminal constraint cannot be imposed directly, as we do not have access to evaluations of $\pi(y)$, we can again use the score-matching objective: 
\begin{align}
    \mathcal{R}_2(\varphi) &= \alpha_1 \int_{\R^d} \int_0^T \left| \frac{\partial \varphi}{\partial s} -f\cdot \nabla \varphi - \frac{\sigma^2}{2} |\nabla \varphi|^2 - \nabla \cdot f - \frac{\sigma^2}{2}\Delta \varphi\right| \eta(y,s) \de s \de y \\ &+ \alpha_2 \int_{\R^d}\left( |\nabla \varphi(y,0)|^2 + 2 \Delta \varphi(y,0) \right) \pi(y) \de y . \nonumber
\end{align}
In contrast, for the OT-flow discussed in Section~\ref{sec:otflowhjbreg}, the terminal constraint cannot be applied as it requires evaluations of the target log-density. Learning such a scalar function via score-matching has been studied previously \cite{song2021train}. 

\edits{A similar PDE-based regularizer for SGM was studied in \cite{lai2023fp} where the uncontrolled Fokker-Planck equation is used to derive a PDE for the score function. A crucial difference is that the regularizer presented in \cite{lai2023fp} does not enforce the terminal condition associated with the PDE problem. In our paper, we can impose the terminal condition with little additional computational cost by score-matching. 
In Section~\ref{sec:hjbsgmnum} we show that the inclusion of the terminal cost can dramatically affect the quality of the generative model.  }

\begin{remark}
In this paper, we have focused on the training of score-based generative models by solving a mean-field game. On the other hand, we may also study mean-field games that can be solved through score-matching. Similar to how \cite{ruthotto2020machine,huang2022bridging} use tools for training normalizing flows to solve mean-field games, we may also study how score-matching can solve certain mean-field games and its associated high-dimensional PDEs. For example, the use of entropy-based regularizers for solving mean-field games have also been recently explored \cite{guo2022entropy}.  In particular, one could solve certain high-dimensional HJB equation with only samples of the terminal data using score-based generative modeling methods. 
\end{remark}

\subsection{A numerical demonstration: HJB-regularized SGM}
\label{sec:hjbsgmnum}


We demonstrate the effects of the HJB regularizer for score-based generative modeling discussed in Section~\ref{sec:hjbsgmnum}. We will highlight numerical and algorithmic aspects one could explore in the future. The results here constitute only a preliminary investigation of the potential computational contributions of MFG-based generative modeling.

In this example, we choose $f(x,t) = \frac{x}{2}$ and $\sigma(t) = 1$, so that the induced noising process is an Ornstein-Uhlenbeck (OU) process. Recall from Section~\ref{sec:SGMHJBreg} that the regularizer for this example is
\begin{align}
    \mathcal{R}_1(\mathsf{s}) =& \alpha_1 \sum_{i = 1}^d \int_{\R^d} \int_0^T \left|\frac{\partial \mathsf{s}^{(i)}}{\partial s} - \frac{\partial f^\top}{\partial x_i} \mathsf{s} - f^\top\frac{\partial \mathsf{s}}{\partial x_i} -  \mathsf{s}^\top \frac{\partial \mathsf{s}}{\partial x_i}   - \frac{1}{2} \Delta \mathsf{s}^{(i)}  \right|^p \eta(y,s) \de s \de y \label{eq:hjbregularizer2} \\
    &+ \alpha_2\int_{\R^d} \left(|\mathsf{s}(y,0)|^2 + 2\nabla \cdot \mathsf{s}(y,0)\right)  \pi(y) \de y, \nonumber
\end{align}
where for some $p\ge 1$. In this example, we choose $p = 1$ or $ p = 2$. The regularized implicit score-matching problem is then
\begin{align}
    \min_\theta \alpha_0\int_0^T \int_{\R^d} \left(\frac{1}{2}|\mathsf{s}_\theta(y,s)|^2 + \nabla \cdot \mathsf{s}_\theta(y,s) \right) \eta(y,s) \de y \de s + \mathcal{R}_1(\mathsf{s}_\theta). \label{eq:regscorematching}
\end{align}
The data distribution $\pi(y)$ is the checkerboard distribution, see Figure~\ref{fig:a}. The checkerboard distribution presents particular challenges to score-based generative models, mainly in that the distribution has disconnected support and the density is discontinuous. 

We informally investigate and observe the impact of different choices of the HJB regularization parameters, $\alpha_0$, $\alpha_1$, $\alpha_2$, and $p$. For all examples, the total simulation time is $T = 3$, and the reverse SDE is simulated with an Euler-Maruyama scheme with $\Delta t = 0.001.$ The forward SDE is an OU process and can be simulated exactly. For all experiments, the score function $\mathsf{s}_\theta:\R^2\times\R \to \R^2$ is a neural network with two hidden layers, with 32 nodes per hidden layer. We use the Gaussian error linear unit (GeLU) activation function, and train the network over $10^5$ batches of 64 samples, with learning rate $0.001$. The derivatives are computed using the DeepXDE package \cite{lu2021deepxde}. 

\begin{figure}[h]
\captionsetup[subfigure]{justification=centering}
    \centering
    \begin{subfigure}{0.24\textwidth}
    \centering
\includegraphics[width = \textwidth,trim = 180 30 180 30,clip]{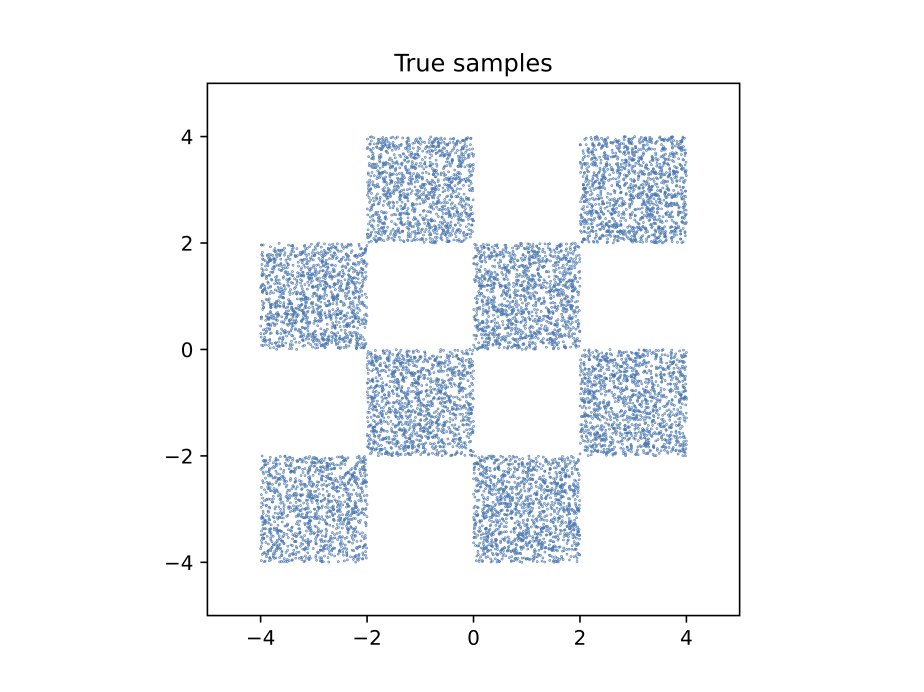} %
\caption{\scriptsize True samples \\ \textcolor{white}{filler space filler space}}\label{fig:a}
    \end{subfigure}
    \begin{subfigure}{0.24\textwidth}
    \centering
    \includegraphics[width = \textwidth,trim = 180 30 180 30,clip]{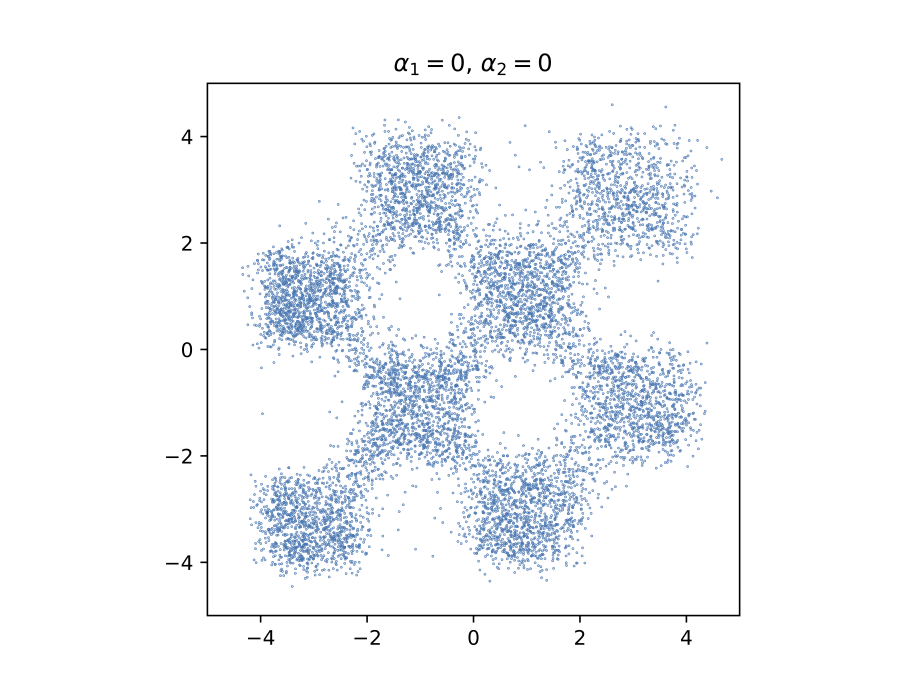} %
    \caption{\scriptsize Without any HJB regularization}\label{fig:b}
    \end{subfigure}
   \begin{subfigure}{0.24\textwidth}
    \centering
    \includegraphics[width = \textwidth,trim =  180 30 180 30,clip]{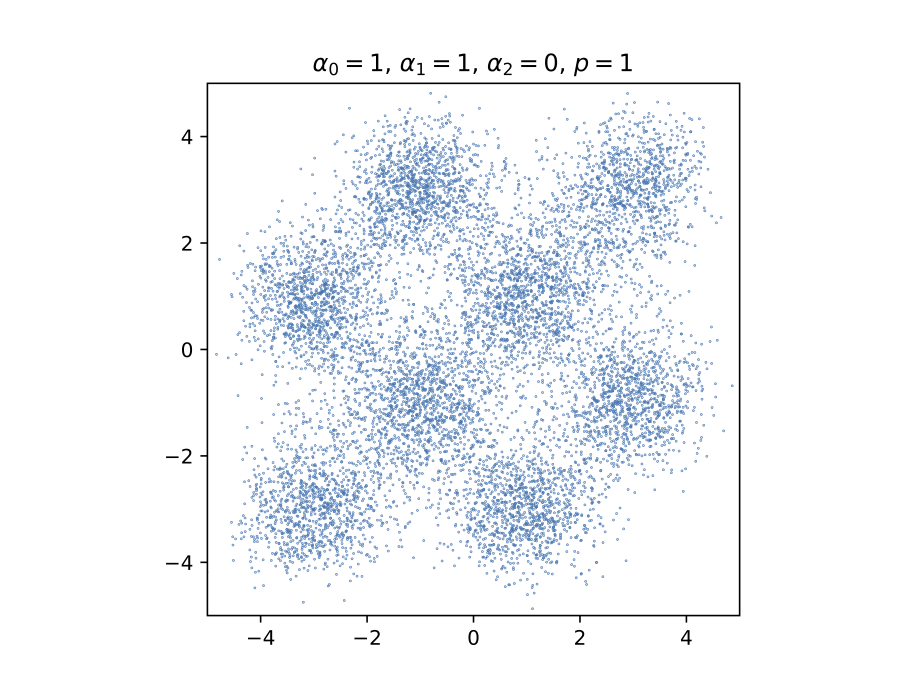}%
    \caption{\scriptsize HJB eq., no terminal. $p = 1$}\label{fig:newb}
    \end{subfigure}
    \begin{subfigure}{0.24\textwidth}
    \centering
    \includegraphics[width = \textwidth,trim = 180 30 180 30,clip]{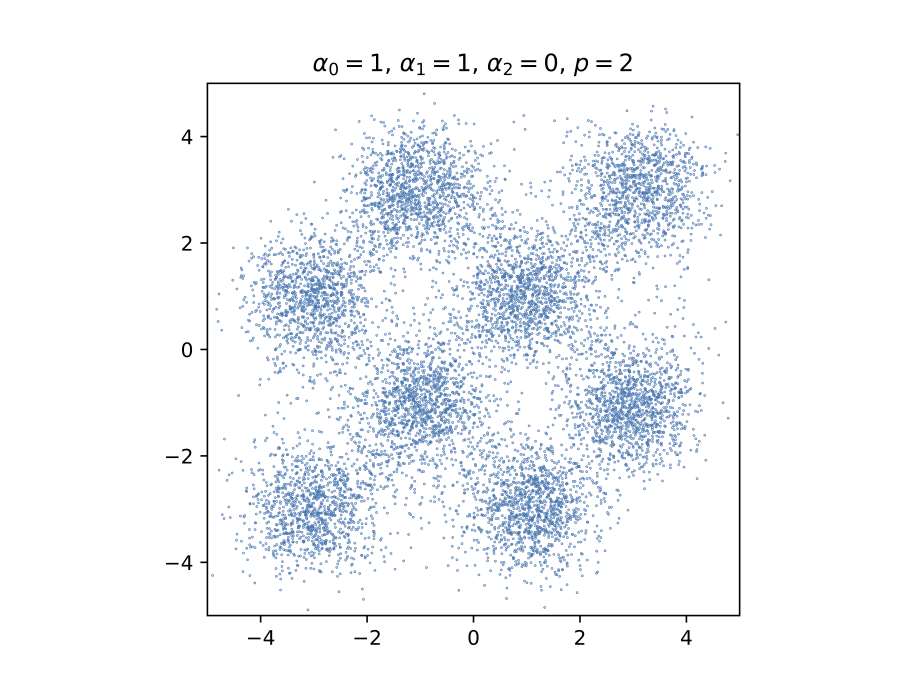}%
    \caption{\scriptsize HJB eq., no terminal. $p = 2$}\label{fig:newc}
    \end{subfigure}

    \begin{subfigure}{0.24\textwidth}
    \centering
    \includegraphics[width = \textwidth,trim = 180 30 180 30,clip]{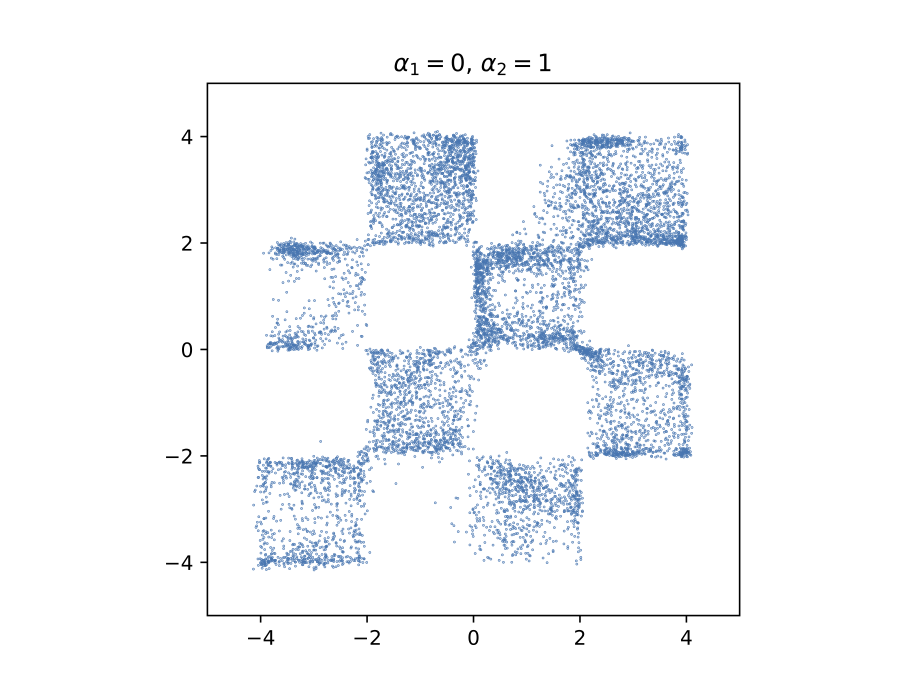}%
    \caption{\scriptsize Terminal constraint only}\label{fig:c}
    \end{subfigure}
    \begin{subfigure}{0.24\textwidth}
    \centering
    \includegraphics[width = \textwidth,trim =180 30 180 30,clip]{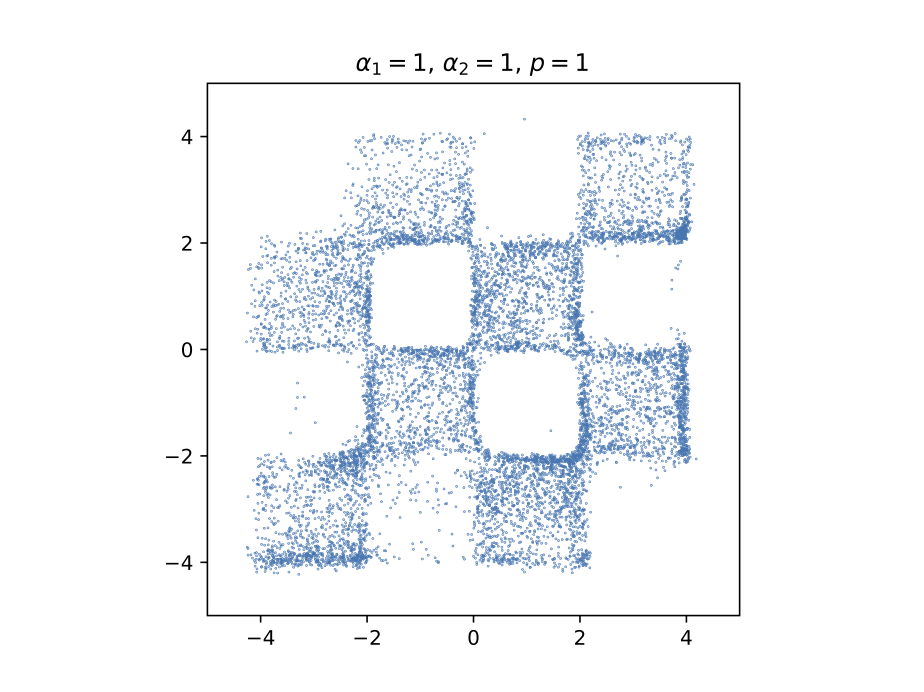}%
    \caption{\scriptsize HJB eq. $+$ terminal, $p = 1$}\label{fig:d}
    \end{subfigure}
    \begin{subfigure}{0.24\textwidth}
    \centering
\includegraphics[width = \textwidth,trim = 180 30 180 30,clip]{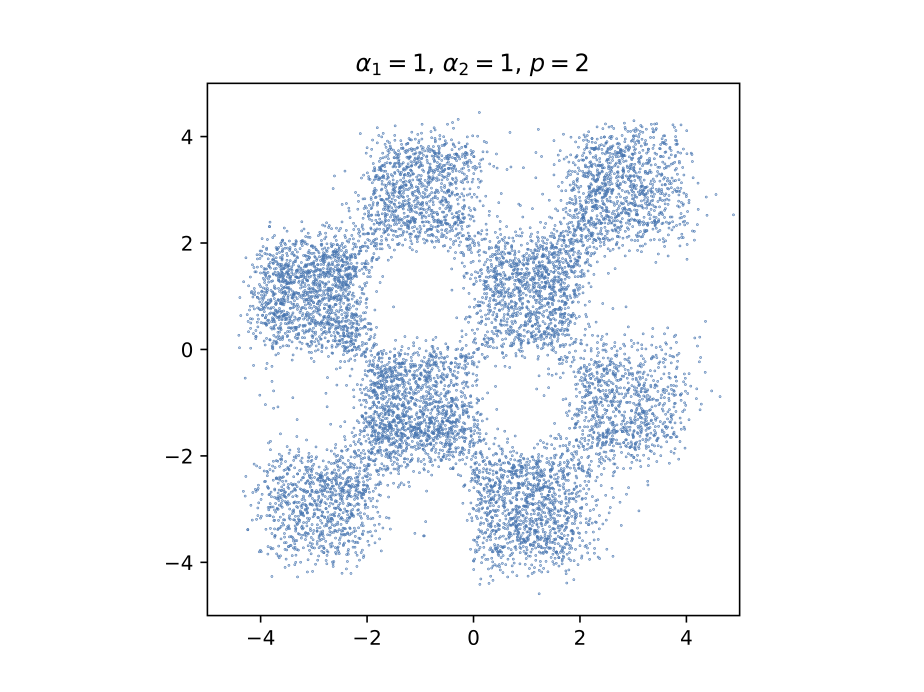}
\caption{\scriptsize HJB  eq. $+$ terminal, $p = 2$}\label{fig:e}
    \end{subfigure}
    \begin{subfigure}{0.24\textwidth}
    \centering
    \includegraphics[width = \textwidth,trim = 180 30 180 30,clip]{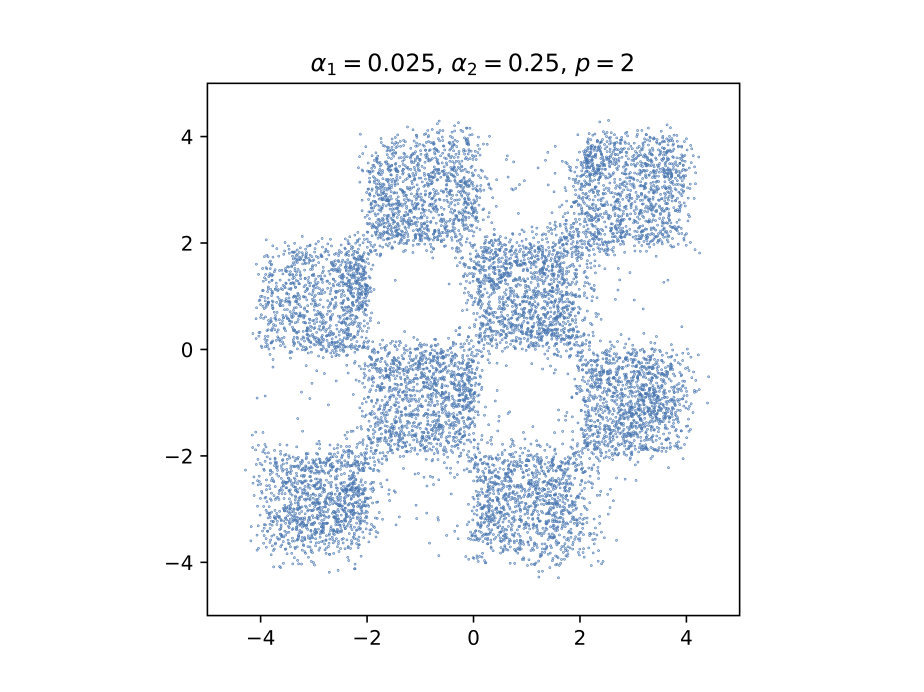} 
    \caption{\scriptsize HJB eq. $+$ terminal, $p = 2$, smaller $\alpha_i$}\label{fig:f}
    \end{subfigure}
    \caption{Effects of different choices of HJB regularization on score-based generative modeling. In these examples, $\alpha_0 = 1$ in \eqref{eq:regscorematching}.}
    \label{fig:checker}
\end{figure}

We can clearly see that the HJB regularizer has a nontrivial effect on the generative properties of the score-based generative modeling. Cases where the terminal constraint is enforced such as Figures \ref{fig:c} and \ref{fig:d}, yields sharper boundaries. However, they also seem to exhibit Gibbs phenomenon \cite{strauss2007partial}, in which more samples lie on boundaries of the checkerboard density. In Figure \ref{fig:f}, we see that reducing the constant $\alpha_2$ for the terminal constraint can help make the boundaries sharper, without exhibiting as much Gibbs phenomenon. Comparing Figures \ref{fig:d} and \ref{fig:e}, we find different behavior when choosing different values of $p$. Figure \ref{fig:f} shows that the parameters can be tuned to produce an improved generative model. On the other hand, if only the HJB equation is enforced as presented in \cite{lai2023fp}, then the regularizers may produce worse results than if no regularizer was present. This is likely due the fact that without the terminal constraint, the function $\mathcal{s}(x,t) = 0$ minimizes the regularizer. Therefore, the use of the HJB regularizer alone skews the learned score function towards zero. 

These simple examples clearly show that HJB regularization has the potential to provide improvements to score-based methods. They also demonstrate that a more extensive numerical and algorithmic investigation on the effects of HJB regularization strategies in MFG-based generative modeling are warranted for future research.

\begin{remark}
    We note that  we are not restricted to use the distribution $\eta(y,s)$ when evaluating the HJB regularizer \eqref{eq:hjbregularizer2}. Using $\eta(y,s)$ as the measure the HJB equations are integrated with respect to is a natural choice since it is already used in the implicit score-matching objective. The choice of measure here is, however, arbitrary; for example, \cite{lu2021deepxde} chooses a uniform distribution over space and time. The effects of the choice of measure can also be another direction for future research. 
\end{remark}

\subsection{Score-matching via PINNs}
The HJB regularizer provides an alternate way for implementing score-based generative modeling. Since the terminal conditions \eqref{eq:sgmhjb} can be imposed directly, we can recover results similar to standard score-based generative modeling by \emph{only} minimizing the HJB regularizer, i.e., $\alpha_0 = 0$ in \eqref{eq:regscorematching}. In Section~\ref{sec:SGM}, we established that score-based generative modeling is based on solving a pair of partial differential equations. We can therefore obtain approximate solutions of the optimality conditions by minimizing the regularizers directly, which we can then use with the reverse SDE \eqref{eq:SGM:bwd_sde} to generate new samples. Minimizing the HJB regularizer directly is the same task as training a physics-informed neural network (PINN) \cite{raissi2019physics,lu2021deepxde}.

In Figure~\ref{fig:sgmpinns} we show results of score-based generative modeling by only minimizer the HJB regularizer. We see different performance depending on the choice of parameter values. Comparing Figure \ref{fig:2a} with \ref{fig:2b}, we see that when the terminal constraint has a larger constant than the HJB equation, the generated distribution has sharper edges. When $p = 1$, we see many particles end up on the edges, which is indicative of a Gibbs phenomenon. We hope the results in this section may spawn further research regarding the use of HJB regularizers for generative modeling. 
\begin{figure}[H]
\centering
\captionsetup[subfigure]{justification=centering}
    \begin{subfigure}{0.25\textwidth}
    \centering
    \includegraphics[width = \textwidth,trim = 180 30 180 30,clip]{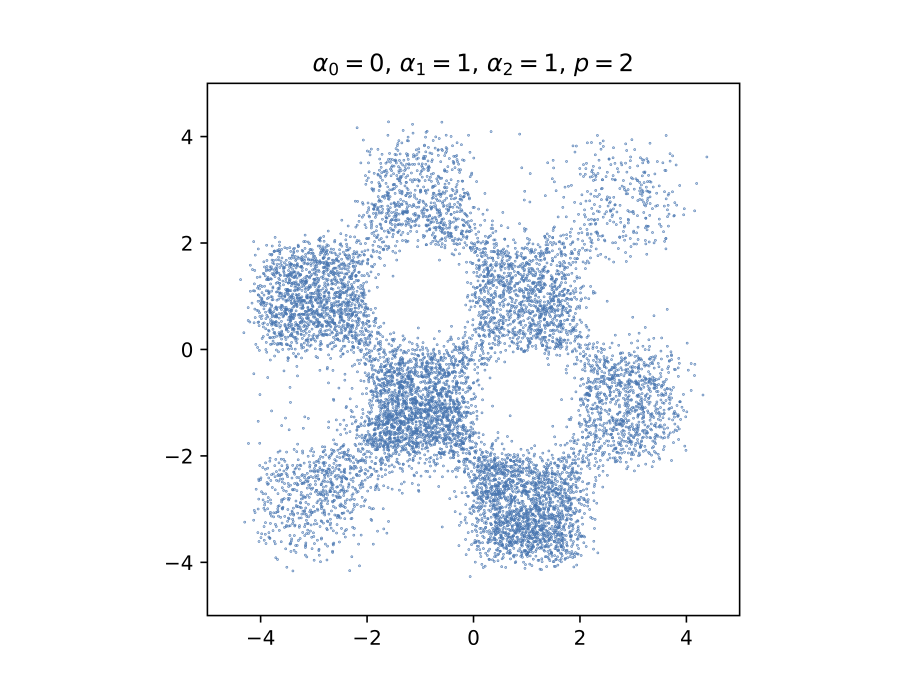}
    \caption{\scriptsize HJB regularizer only with \\   $p = 2,\,\alpha_1 = 1$ }\label{fig:2a}
    \end{subfigure}
        \begin{subfigure}{0.25\textwidth}
    \centering
    \includegraphics[width = \textwidth,trim = 180 30 180 30,clip]{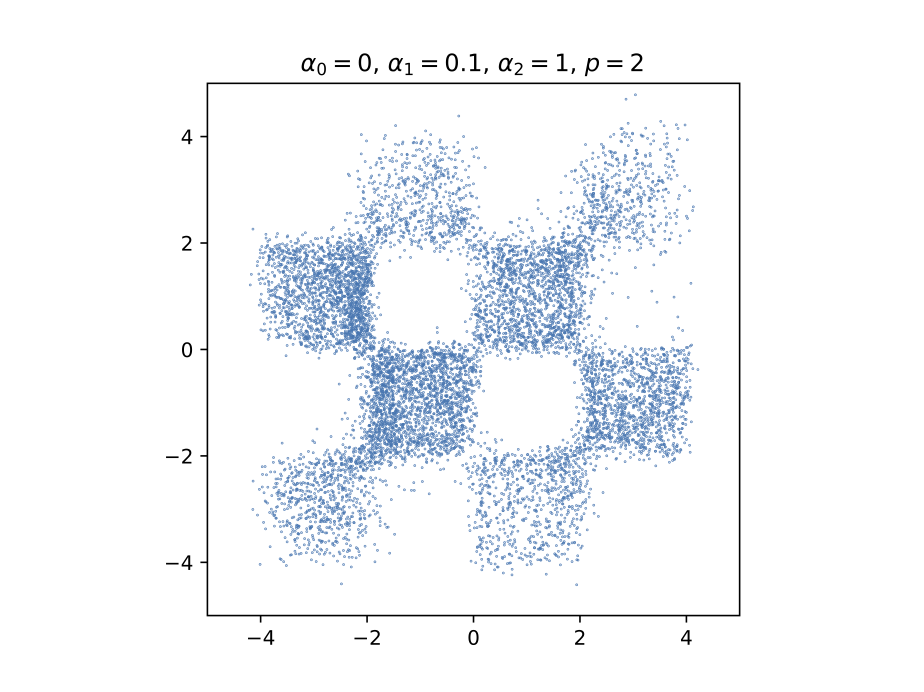}
    \caption{ \scriptsize HJB regularizer only with \\ $p = 2,\, \alpha_1 = 0.1$} \label{fig:2b}
    \end{subfigure}
    \begin{subfigure}{0.25\textwidth}
    \centering
    \includegraphics[width = \textwidth,trim = 180 30 180 30,clip]{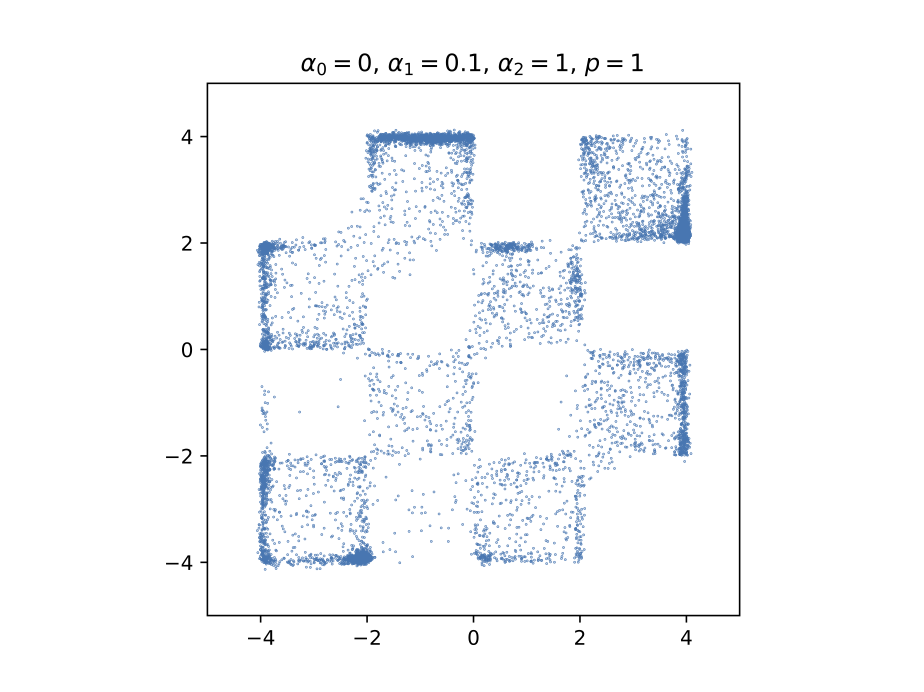}
    \caption{\scriptsize HJB regularizer only with \\   $p = 1$}\label{fig:2c}
    \end{subfigure}
    \caption{Score-based generative modeling via PINNs. Here $\alpha_0 = 0$ in \eqref{eq:regscorematching}. Compare these figures with standard SGM in Figure~\ref{fig:a}.}
    \label{fig:sgmpinns}
\end{figure}

\section{Inventing generative models using MFGs}
\label{sec:designing}
We provide some examples of how the MFG framework can easily lead to new variants of generative models, as well as  enhancements of existing ones in a coherent and motivated fashion. The richness of potential generative modeling formulations via the MFG framework is partly captured in Table~\ref{tab:big2} but it is primarily a topic for future research. The  examples we discuss here are by no means exhaustive. We once again emphasize that while one can conjure the following generative models with Table~\ref{tab:big2}, further numerical and algorithmic investigation is needed to evaluate their practical utility.

\subsection{Optimal transport BG: a corrected Boltzmann generator}
\label{sec:OTBG}
As we discussed in Section \ref{subsec:CNF:Boltzmann}, Boltzmann generators build generative models using normalizing flows that help when sampling from probability distributions that arise in computational chemistry \cite{noe2019boltzmann}. The normalizing flow formulation in \eqref{eq:boltzmanngen} is, however, ill-posed without additional constraints or structure as described in Section \ref{sec:wellposed}. One (of many) possible fixes is to introduce a quadratic running cost like in the OT-flow in Section \ref{subsec:CNF:OT}. We introduce the optimal transport Boltzmann generator MFG,
\begin{align}
    &\inf_{v,\rho} \left\{ \lambda\mathcal{D}_{KL}(\pi\| \rho(\cdot,T)) +(1-\lambda)\mathcal{D}_{KL}(\rho(\cdot,T)\|\pi)+ \int_0^T \int_{\R^d} \frac{1}{2}|v(x,t)|^2 \rho(x,t)\de x \de t \right\} \\
    &\frac{\partial \rho}{\partial t} + \nabla \cdot(v\rho) = 0,\, \rho(x,0) = \rho_0(x) \nonumber
    \end{align}
for some $\lambda \in [0,1]$. In this setting, the Hamiltonian will be quadratic, so the optimal velocity field will be the gradient of a scalar function. Therefore, like in the OT-flow, we may optimize for a single scalar function, rather than for each component of a vector-valued velocity function. Moreover, in \cite{kohler2020equivariant}, the normalizing flow velocity field was parametrized as the gradient of a scalar function, even though the optimization problem did not include a quadratic running cost. Adding such a running cost will mathematically justify the approach taken in \cite{kohler2020equivariant}, and as a result, likely facilitate the algorithm to learn  a normalizing  flow with \emph{gradient} structure. Furthermore, like in Section \ref{sec:SGMHJBreg}, we may introduce an HJB regularizer to enhance training of the Boltzmann generator. 

\subsection{Stochastic normalizing flows via MFGs}
\label{sec:snfmfg}
Since the introduction of continuous normalizing flows based on deterministic dynamics, there have been several attempts to extend it to stochastic differential equations \cite{wu2020stochastic,hodgkinson2020stochastic,zhang2021diffusion}. While each devises their own computational approaches to and implementation of a stochastic normalizing flow, it is unclear if their mathematical formulations are well-posed. With the MFG formulation, it is simple to devise well-posed stochastic normalizing flows. For example, consider the MFG
\begin{align}\label{eq:snfmfg}
&\inf_{v,\rho} \left\{ \mathcal{D}_{KL}(\pi\| \rho(\cdot,T)) + \int_0^T \int_{\R^d} \frac{1}{2}|v(x,t)|^2 \rho(x,t)\de x \de t \right\} \\
&\frac{\partial \rho}{\partial t} + \nabla \cdot(v\rho) = \frac{\sigma^2}{2}\Delta \rho,\, \rho(x,0) = \rho_0(x). \nonumber
\end{align}
We have already studied the well-posedness of this mean-field game: it has the same optimality conditions as score-based generative models \eqref{eq:SGM:MFG_optimality}, except that the terminal condition is based on the KL divergence instead of cross-entropy, i.e.,
\begin{align}
    U(x,T) = -\frac{\pi(x)}{\rho(x,T)}.
\end{align}
One can then devise computational schemes for solving this MFG and be assured that there exist consistent discretizations since the optimization problem is well-defined. To further belabor the point about the HJB regularizer, we may, again consider using the HJB regularizer for this newly introduced generative model as well.

Moreover, as we have discussed in Remark \ref{remark:sgmnf}, score-based generative models can be considered to be a stochastic normalizing flow. In particular, if the terminal cost $\mathcal{D}_{KL}(\pi\|\rho)$ is replaced with the cross-entropy $-\Ex_\rho[\log\pi]$,  \eqref{eq:snfmfg} is identical to the SGM MFG \eqref{eq:sgmmfg} for $f\coloneqq 0$, and $\rho_0(x) = \eta(x,T)$ defined by \eqref{eq:hjbfp}.

\subsection{SGM and the cross-entropy regularization of optimal transport}
\label{sec:sgmOT}
The Schr\"odinger bridge problem in Section \ref{sec:Schrodinger} is often studied as an entropically regularized version of the optimal transport problem \cite{chen2021stochastic}. In fact, if we replace the stochastic dynamics in \eqref{eq:sbpopt} by the Fokker-Planck equation, we obtain a potential formulation \eqref{eq:potentialform} of the SBP. Let $\Omega \subset \R^d$ be compact, and consider the MFG
\begin{align}
    \inf_{v,\rho} \left\{ \int_0^T \int_\Omega |v(x,t)|^2 \rho(x,t) \de x \de t: \frac{\partial \rho}{\partial t} + \nabla \cdot(v\rho) = \frac{\sigma^2}{2}\Delta \rho, \, \rho(x,0) = \rho_0(x), \, \rho(x,T) = \pi(x) \right\}.
\end{align}
When $\sigma = 0$, this MFG is the Benamou-Brenier formula, which is the dynamic formulation of the optimal transport problem \cite{villani2009optimal,santambrogio2015optimal}. This is the problem of transporting a distribution $\rho_0$ to $\pi$ with minimal cost \cite{santambrogio2015optimal}. When $\sigma>0$, it is a regularization of the optimal transport problem \cite{santambrogio2015optimal,chen2021stochastic}. 

Since the terminal constraint is difficult to enforce pointwise, it is common to relax it by introducing a penalty term \cite{genevay2017gan}. If we replace the terminal constraint with a cross-entropy loss function $-\Ex_\rho[\log \pi]$, we can interpret the resulting MFG as a \emph{cross-entropy regularized} entropic dynamic optimal transport problem:
\begin{align}
        \inf_{v,\rho} \left\{-\int_\Omega \rho(x,T) \log \pi(x) \de x +   \int_0^T \int_\Omega |v(x,t)|^2 \de x \de t: \frac{\partial \rho}{\partial t} + \nabla \cdot(v\rho) = \frac{\sigma^2}{2}\Delta \rho, \, \rho(x,0) = \rho_0(x) \right\}.
\end{align}
Notice, however, this is exactly equivalent to the SGM MFG \eqref{eq:sgmmfg} in Theorem~\ref{thm:scoremfg} for $f\coloneqq 0$. With this observation, we may approximate solutions to a certain class of optimal transport problems with computational tools of score-based generative modeling, which have proved to be extremely efficient in high-dimensions.

\subsection{Interpolating between MFGs: from Wasserstein gradient flows to normalizing flows and Wasserstein geodesics}
\label{subsec:geograd}
    We refer back to the connection between Wasserstein gradient flows and mean-field games described by Theorem~\ref{thm:MFG:Wasserstein_grad_flow}. In Section~\ref{sec:wassersteingradient}, we studied the MFG \eqref{eq:wassersteinmfg} in the asymptotic regime where $\epsilon \to 0$. Notice, however, there is another interesting regime when $\epsilon \to \infty$, in which the MFG \eqref{eq:wassersteinmfg} converges to a regularized optimal transport problem: 
    \begin{align}\label{eq:epsinf}
        \min_{v,\rho} \left\{\mathcal{F}(\rho(\cdot, T)) + \int_0^T\int_\Omega \frac{1}{2}|v(x,t)|^2 \rho(x,t) \de x \de t \right\}\quad  
        s.t. \quad  \frac{\partial \rho}{\partial t} + \nabla \cdot(v\rho) = 0, \, \rho(x,0) = \rho_0(x).
    \end{align}
    Since the functional $\mathcal{F}(\rho)$ is typically some divergence from $\rho$ to target $\pi$, for example, $\mathcal{F}(\rho) = \mathcal{D}_{KL}(\rho\|\pi)$, this problem is a generalization of the OT normalizing flow in Section~\ref{subsec:CNF:OT}. Recall, however, that \eqref{eq:epsinf} is the limiting case of \eqref{eq:wassersteinmfg} where $\epsilon \to \infty$, which implies that for $\epsilon \in (0,\infty)$ we have MFGs that \emph{interpolate} the Wasserstein gradient flow, and the OT-flow.  Following the same steps as in Section \ref{subsec:CNF:OT}, the optimality conditions of the MFG \eqref{eq:epsinf} are 
\begin{align}
    \begin{dcases}
        &-\frac{\partial U}{\partial t} + \frac{1}{2}| \nabla U|^2 = 0 \\
    &\frac{\partial \rho}{\partial t} - \nabla \cdot \left(\rho \nabla U \right) = 0 \\
    &U(x,T) = \frac{\delta \mathcal{F}}{\delta \rho}(\rho(\cdot,T)), \, \rho(x,0) = \rho_0(x). 
        \end{dcases} 
        \label{eq:geograd}
\end{align}

    Moreover, recall from Section~\ref{sec:sgmOT} that $\mathcal{F}(\rho)$ in the terminal cost can be interpreted as a relaxation of a terminal constraint $\rho(x,T) = \pi(x)$. Therefore, \eqref{eq:epsinf} is a relaxation of the Benamou-Brenier dynamic formulation of optimal transport. Since solutions of the optimal transport problem are geodesics connecting $\rho_0$ and $\pi$ in Wasserstein space \cite{villani2009optimal,santambrogio2015optimal}, solutions of the MFG \eqref{eq:wassersteinmfg} for $\epsilon \in (0,\infty)$ can be interpreted as interpolating between Wasserstein gradient flows ($\epsilon \to 0$) and Wasserstein geodesics ($\epsilon \to \infty$).

\begin{landscape}
\section{A modular mean-field games framework for generative modeling } \vspace{-15pt}
\label{sec:table}
\begin{table}[H]
\ssmall
\def\arraystretch{2.2}
\begin{tabular}{|c||c|c|c|c||c|c|c||c|}
\hline 
 & \multicolumn{4}{c||}{\textbf{Mean-field game} \eqref{eq:potentialform}} & & & & \\
\hline
\textbf{Model} & $\mathcal{M}(\rho)$ & $\mathcal{I}(\rho)$ & $L(x,v)$ & \textbf{Dynamics} & $H(x,p)$ & \textbf{Optimal} $v^*$& \makecell{\textbf{HJB}\\ \textbf{Reg?}} &\textbf{ Refs} \\ \hline \hline 
\makecell{Continuous \\ normalizing flow} &$ \mathcal{D}_{KL}(\rho\|\rho_{ref})$ & 0 & 0& $\de x = v \de t$& $\sup_{v\in K} - p^\top v $& $-\nabla_p H(x,\nabla U) $& $\Large{\newcrossmark}$ & 1
\\ \hline

\makecell{Score-based \\generative modeling} & $-\Ex_\rho\left[\log \pi \right]$ & 0 & $\frac{|v|^2}{2} - \nabla \cdot f$& \makecell{$\de x = (f + \sigma v )\de t$\\$ + \sigma \de W_t$}& \makecell{ $-f \cdot p + \frac{\sigma^2}{2}|p|^2  $\\$+\nabla \cdot f$} & $\sigma \nabla \log \eta_{T-t}$ & $\Large{\newcrossmark}$& 2 
\\ \hline

\makecell{Score-based \\ probability flow} & $-\frac{1}{2}\Ex_\rho\left[\log \pi \right]$ & $\Ex_\rho\left[ \frac{|\sigma \nabla \log \rho|^2 }{8} \right]$ & $\frac{|v|^2}{2} - \frac{\nabla \cdot f}{2}$ & $dx = (f + \sigma v )\, dt$& \makecell{ $-f \cdot p + \frac{\sigma^2}{2}|p|^2  $\\$+\nabla \cdot f$}  & $\frac{\sigma}{2} \nabla \log \eta_{T-t}$ & $\Large{\newcrossmark}$ &3 
\\ \hline

\makecell{Wasserstein gradient \\ flow (WGF) $(\epsilon \to 0)$}& $\mathcal{F}(\rho) e^{-T/\epsilon}$& $\frac{e^{-t/\epsilon}}{\epsilon}\mathcal{F}(\rho)$ & $\frac{e^{-t/\epsilon}}{2}|v|^2$& $\de x = v \de t$ & $\frac{1}{2}e^{t/\epsilon}|p|^2$ &$ - \nabla \frac{\delta \mathcal{F}}{\delta \rho}$ &$\Large{\newcrossmark}$ & 4 
\\ \hline

OT-Flow & $\mathcal{D}_{KL}(\rho\|\rho_{ref})$ & 0 &$\frac{1}{2}|v|^2$ & $\de x = v \de t$ &$\frac{1}{2}|p|^2$ & $-\nabla U$ & $\Large{\newcheckmark}$& 5 
\\ \hline

 Boltzmann generator & \makecell{$\lambda\mathcal{D}_{KL}(\pi\|\rho)$\\$ + (1-\lambda)\mathcal{D}_{KL}(\rho\|\pi)$ }&0 & 0& $\de x = v \de t$& $\sup_{v\in K} - p^\top v $ & $-\nabla_p H(x,\nabla U)$&$\Large{\newcrossmark}$ & 6
 \\ \hline
 
Schr\"odinger bridge & $\rho = \pi$ &0 &$\frac{1}{2}|v|^2$ & $\de x = \sigma v \de t + \sigma \de W_t$ & $\frac{1}{2}|p|^2$ &$-\sigma \nabla U$  & $\Large{\newcrossmark}$& 7
\\ \hline

\makecell{Generalized \\Schr\"odinger  bridge} & $\rho = \pi$ & $\mathcal{I}(x,\rho)$ & $\frac{1}{2}|v|^2$ & \makecell{$\de x = \sigma v \de t + \sigma \de W_t$} & $\frac{1}{2}|p|^2$ & $-\sigma \nabla U$& $\Large{\newcrossmark}$&  8 
\\ \hline 
\hline
\makecell{HJB-regularized \\SGM} & $-\Ex_\rho[\log\pi]$ & 0& $\frac{|v|^2}{2} - \nabla \cdot f$& \makecell{$\de x = (f + \sigma v )\de t$\\$ + \sigma \de W_t$} &\makecell{ $-f \cdot p + \frac{\sigma^2}{2}|p|^2$\\$ + \nabla \cdot f$} & $\sigma\nabla \log \eta_{T-t}$ &$\Large{\newcheckmark}$ & 9
\\ \hline

\makecell{Stochastic OT\\normalizing flow} & $\mathcal{D}_{KL}(\pi\|\rho)$& 0 &$\frac{1}{2}|v|^2$ & \makecell{$\de x = \sigma v \de t + \sigma \de W_t$} & $\frac{1}{2}|p|^2$ &$-\sigma\nabla U$ & $ \Large{\left(\newcheckmark\right)}$ & 10
\\ \hline

\makecell{OT-Boltzmann \\generator}  & \makecell{$\lambda\mathcal{D}_{KL}(\pi\|\rho) $\\$+ (1-\lambda)\mathcal{D}_{KL}(\rho\|\pi)$} & 0 &$\frac{1}{2}|v|^2$ & {$\de x =  v\de t$} & $\frac{1}{2}|p|^2$ & $-\nabla U$ &$\Large{\left(\newcheckmark\right)}$ & 11
\\ \hline  

\makecell{Generalized OT-Flow \\ (Relaxed WGF $\epsilon>0$)} & $\mathcal{F}(\rho) e^{-T/\epsilon}$& $\frac{e^{-t/\epsilon}}{\epsilon}\mathcal{F}(\rho)$ & $\frac{e^{-t/\epsilon}}{2}|v|^2$& $\de x = v \de t$ & $\frac{1}{2}e^{t/\epsilon}|p|^2$ &$ - \nabla U$ &$\Large{\left(\newcheckmark\right)}$ &  12
\\ \hline  

\makecell{Build your own \\ generative model} & \makecell{Choose\\your} & \makecell{own\\ cost} & \makecell{functions \\and} & \makecell{dynamics \\ here} & ? & ? & ? & ? 
\\ \hline
\end{tabular}
\centering
\caption{\scriptsize Summary of 
generative modeling and related topics as mean-field games.  \\ 1. \cite{chen2018neural,grathwohl2018ffjord} \&  Sec.~\ref{sec:wellposed} \\ 2. \cite{song2021score,de2021diffusion} \& Sec.~\ref{sec:SGM} \\ 3. \cite{maoutsa2020interacting,song2021score} Sec.~\ref{sec:SGM:prob_flow} \\ 4. \cite{santambrogio2015optimal,rossi2011variational,arbel2019maximum,gu2022lipschitz} \& Sec.~\ref{sec:wassersteinmfg} \\ 5. \cite{onken2021ot,huang2022bridging} \& Sec.~\ref{subsec:CNF:OT} \\ 6. \cite{noe2019boltzmann,kohler2020equivariant} \& Sec.~\ref{subsec:CNF:Boltzmann}  \\ 7. \cite{pavon2021data,chen2021stochastic} \&  Sec.~\ref{sec:Schrodinger}\\ 8. \cite{liudeep} \& Sec.~\ref{sec:Schrodinger} \\ 9. New \& Sec.~\ref{sec:SGMHJBreg}, Sec.~\ref{sec:hjbsgmnum}, 10. New \&  Sec.~\ref{sec:snfmfg} \\ 11. New \& Sec.~\ref{sec:OTBG},  12. New \& Sec.~\ref{subsec:geograd}}
\label{tab:big2} 
\end{table} 
\end{landscape}

\section{Discussion and outlook} 

In this paper, we showed that mean-field games provide a natural framework for studying continuous-time generative models. Given samples from a target distribution, continuous-time generative models aim to approximate the target by building a dynamical system that evolves samples from a reference distribution to those of the target. Depending on the choice of the continuous-time model, the dynamics of each particle may depend on the evolution of other particles or on the final state of the density of particles. The mean-field games framework can be readily used as an organizational principle for understanding the generative modeling task. 

We showed how normalizing flows, score-based generative models, and Wasserstein gradient flows can be derived from different MFGs. Moreover, we demonstrated how MFGs can explain and enhance generative models by revealing and exploiting their mathematical structure, yielding new computational tools, and provide a system for inventing new generative models. We summarize our main contributions here: 

\begin{itemize}
    \item \textbf{Explaining and categorizing generative models:} We showed that the potential formulation of MFG in Section \ref{sec:potentialmfg} describes major classes of generative models. We showed how with different choices cost function, mean-field games correspond to different generative models. Each generative model can be defined by four ingredients: the terminal, interaction, and running cost functions, and the form of the dynamical system. \edits{The MFG optimality conditions uncover the common mathematical structure of all flow and diffusion-based generative models. While the Fokker-Planck equation is clearly the generative portion of the model, the Hamilton-Jacobi-Bellman equation can be interpreted as the `discriminative' part, which determines the optimal velocity field of the generative model. The optimality conditions describe the mathematical structure that identifies the inductive bias of generative flows.} Overall, the MFG perspective yields a simple way of organizing the vast collection of flow and diffusion-based generative models. We exhibit some of these methods in Table~\ref{tab:big2}. The table also provides a way to understand generative models in relation to each other. 
    

    Furthermore, the MFG formulation provides insight into the \emph{well-posedness} of a generative model. In particular, well-posedness of a generative model can be studied in terms of the well-posedness of the optimality conditions, which are a set of partial differential equations. In Section \ref{sec:wellposed} we used the optimality conditions of MFGs to show that the canonical formulation of normalizing flows is ill-posed because the Hamiltonian, and therefore the associated HJB equation, is not well-defined without additional constraints to the problem. The MFG formulation then provides various options to make the problem well-posed. In Section \ref{sec:wassersteinmfg}, we showed that the finite-time relaxations of the Wasserstein gradient flow require the terminal and interaction costs to be related so that the flow avoids boundary layers at terminal time.

    \item \textbf{Enhancing generative models}: Studying generative models within the MFG framework provides access to optimality conditions which describe the solution to the generative modeling problem. These HJB regularizers, while first developed for optimal transport flows, 
    can be applied to any generative model described by the MFG framework presented here.  In Sections 
    \ref{sec:SGMHJBreg} and \ref{sec:hjbsgmnum} we demonstrated how these HJB regularizers may be applied to improve score-based generative modeling. The HJB regularizers provides one approach for enforcing the inductive bias of generative models, which may accelerate their training. 

    Moreover, the MFG perspective provides ideas for better relaxations of problems in optimal transport. For example, in Section~\ref{sec:sgmOT} we remarked that the entropy-regularized Benamou-Brenier dynamic formulation of optimal transport, where the terminal constraint is substituted with a terminal cross-entropy cost function, is equivalent to score-based generative modeling. Thus, the MFG perspective provides insight into how to choose relaxations that are more easily solved via machine learning techniques. 

    \item \textbf{Inventing generative models}: The MFG framework provides a systematic way for inventing new generative models. In Section~\ref{sec:designing} we proposed three new generative models that either extend existing approaches (Sections~\ref{sec:SGMHJBreg} and \ref{sec:snfmfg}) or amend ill-posed approaches (Section \ref{sec:OTBG}).      
\end{itemize}

Our contributions here have only discussed a fraction of the potential insights of each of the above three categories. The main goal of this paper has been to illustrate the richness of studying generative models through the mean-field game perspective. For future work, we provide a decidedly non-exhaustive list of possible future research directions. {Table~\ref{tab:big2}, however, beckons for further future investigations and contributions to generative modeling and mean field games, well-beyond what is discussed in this paper.}

\begin{itemize}
    
     \item \edits{\textbf{The role of stochasticity:} We can study further the well-posedness of generative models by using the theory of MFGs and partial differential equations. These aspects are important for machine learning applications as they may lead to consistent discretizations of generative models, and  more stable algorithms for training such models. MFG optimality conditions may provide insight into the role of stochasticity in generative models. For  stochastic dynamics  (e.g.,  diffusion-based methods), gradients of the HJB  are always smooth. Therefore learning the optimal velocity is always well-defined. This  is not  the case for deterministic dynamics such as normalizing flows where gradients of the   corresponding first-order HJB and hence optimal velocities may have jumps, creating ambiguities in the optimal velocities that the generative algorithm will need to somehow resolve. 
    In this sense, \textit{stochastic} generative algorithms  have theoretical and possibly practical advantages, that need to be explored further using PDE and MFG regularity theory.}

    \item \edits{\textbf{Finite sample analysis}: Mean-field games are asymptotic dynamics of a large population of interacting agents. Studying MFGs in the nonasymptotic regime may provide insight into improving the performance of generative models. In particular, similar questions have been studied in seminal papers in mean-field games by \cite{huang2006large,lasry2007mean}. Since then, there has been extensive literature in MFGs for discrete state spaces and approximations of MFGs.} 

    \item \textbf{Finding and exploiting structure}: As we have seen for normalizing flows and its variants in Sections~\ref{sec:wellposed}, the resulting structure of the MFG and its associated PDEs may inform neutral network parametrizations. In this paper we highlighted generative models that are the gradient of a scalar function. Furthermore,  \cite{ruthotto2020machine} encodes linear and quadratic functions into the approximating class of neural networks  which arises from the knowledge of the underlying MFG's structure. 

    \item \textbf{Creating even more generative models}: Table~\ref{tab:big2} provides a systematic way of creating new generative models by simply playing with different cost functions and dynamics. It also shows us capabilities of MFGs that are under-explored in generative modeling. For example, we see that the interaction term $\mathcal{I}(\rho)$ is not often used in generative modeling. We can also choose dynamical systems beyond ODEs and SDEs. For example, if we formulate generative modeling via discrete-time mean-field games, the dynamics can be jump processes and nonlocal diffusions \cite{liu2021computational}.

    One particularly interesting class of models that merit further computational investigation are relaxations of Wasserstein gradient flows we introduced in Section \ref{sec:wassersteinrelax}. As we mentioned in Section~\ref{subsec:geograd}, solutions to the Wasserstein MFG for $\epsilon\in(0,\infty)$ seem to be interpolations between Wasserstein geodesics and Wasserstein gradient flows. The properties and value of computing these curves could be a rich area of investigation.  

    \item \textbf{ML tools for MFGs}: While adapting ML tools, such as normalizing flows, to solve MFGs has been considered previously \cite{ruthotto2020machine,huang2022bridging}, our MFGs for generative modeling framework provides multiple bridges between the two fields. That is, any ML method devised to perform generative modeling based on the MFG framework can be adapted to solve mean-field games instead. For example, one idea is to adapt score-based generative modeling for solving some class of MFGs and HJB equations. Training generative models may result in new classes of computationally tractable mean-field games. 

\end{itemize}
Finally, we mention, once again, that while the MFG laboratory may lead to many new generative models, it is not a formula that yields useful algorithms on its own. The MFG formulation establishes generative models with well-posed optimization properties and is a first,  but necessary step,  towards a systematic approach for building novel generative models. However, numerical and algorithmic aspects must be subsequently developed for the practical implementation of new generative models.

\newpage

\acks{ The research of B.Z. and  M.K.  was partially supported by the Air Force Office of Scientific Research (AFOSR) under the grant FA9550-21-1-0354. The research of M.K. was partially supported by the National Science Foundation (NSF) under the grant TRIPODS CISE-1934846. }











\vskip 0.2in
\bibliography{sample}

\end{document}